\newcommand{\iu}{\mathrm{i}\mkern1mu}
\newtheorem{theorem}{Theorem}
\newtheorem{lemma}{Lemma}
\newtheorem{definition}{Definition}
\newtheorem{corollary}{Corollary}
\newtheorem{proposition}{Proposition}
\numberwithin{equation}{section}
\DeclareMathOperator{\sign}{sgn}
\DeclareMathOperator{\lip}{Lip}
\DeclareMathOperator{\diam}{diam}
\begin{document}

\title{Sharp Bounds on the Approximation Rates, Metric Entropy, and $n$-widths of Shallow Neural Networks
}

\author{Jonathan W. Siegel \\
  Department of Mathematics\\
  Pennsylvania State University\\
  University Park, PA 16802 \\
  \texttt{jus1949@psu.edu} \\
  \And Jinchao Xu \\
  Department of Mathematics\\
  Pennsylvania State University\\
  University Park, PA 16802 \\
  \texttt{jxx1@psu.edu} \\
}

\maketitle

\begin{abstract}
In this article, we study approximation properties of the variation spaces corresponding to shallow neural networks with a variety of activation functions. We introduce two main tools for estimating the metric entropy, approximation rates, and $n$-widths of these spaces. First, we introduce the notion of a smoothly parameterized dictionary and give upper bounds on the non-linear approximation rates, metric entropy and $n$-widths of their absolute convex hull. The upper bounds depend upon the order of smoothness of the parameterization. This result is applied to dictionaries of ridge functions corresponding to shallow neural networks, and they improve upon existing results in many cases. Next, we provide a method for lower bounding the metric entropy and $n$-widths of variation spaces which contain certain classes of ridge functions. This result gives sharp lower bounds on the $L^2$-approximation rates, metric entropy, and $n$-widths for variation spaces corresponding to neural networks with a range of important activation functions, including ReLU$^k$ activation functions and sigmoidal activation functions with bounded variation.
\end{abstract}

\section{Introduction}
\subsection{Preliminaries}
The class of shallow neural networks with activation function $\sigma:\mathbb{R}\rightarrow \mathbb{R}$ is a popular function class used in supervised learning algorithms. This class of functions on $\mathbb{R}^d$ is given by
\begin{equation}
 \Sigma^d_n(\sigma) = \left\{\sum_{i=1}^n \sigma(\omega_i\cdot x + b_i):~\omega_i\in \mathbb{R}^d,~b_i\in \mathbb{R}\right\},
\end{equation}
where $\sigma$ is an activation function and $n$ is the width of the network. There is a rich literature on the approximation properties and statistical inference from this class of functions \cite{jones1992simple,barron1993universal,klusowski2018approximation,lee1996efficient,bach2017breaking}, with a special focus on the case when $\sigma$ is a sigmoidal activation function or when $\sigma = \max(0,x)^k$ is a power of the rectified linear unit.

In this work, we consider the approximation properties of shallow neural networks from the point of view of non-linear dictionary approximation \cite{devore1998nonlinear}. Let $X$ be Banach space and $\mathbb{D}\subset X$ be a uniformly bounded dictionary, i.e. $\mathbb{D}$ is a subset such that $\sup_{d\in \mathbb{D}} \|d\|_X = K_\mathbb{D} < \infty$. Non-linear dictionary approximation considers approximating a target function $f$ by non-linear $n$-term dictionary expansions, i.e. by an element of the set
\begin{equation}\label{non-linear-dictionary-class}
 \Sigma_{n}(\mathbb{D}) = \left\{\sum_{j=1}^n a_jh_j:~h_j\in \mathbb{D}\right\}.
\end{equation}
The approximation is non-linear since the elements $h_j$ in the expansion will depend upon the target function $f$. It is often also important to have some control over the coefficients $a_j$ which occur in the expansion \eqref{non-linear-dictionary-class}. For this purpose, we introduce the sets
\begin{equation}\label{dictionary-expansion-definition}
\Sigma_{n,M}(\mathbb{D}) = \left\{\sum_{j=1}^n a_jh_j:~h_j\in \mathbb{D},~\sum_{i=1}^n|a_i|\leq M\right\}~~\text{and}~~\Sigma_{n,M}^\infty(\mathbb{D}) = \left\{\sum_{i=1}^n a_i g_i,~g_i\in \mathbb{D},~\max_{i=1,...,n} |a_i| \leq M\right\},
\end{equation}
 which correspond to coefficients which are bounded in $\ell^1$ and $\ell^\infty$, respectively.

The application of this framework to shallow neural networks comes by considering the dictionary
\begin{equation}\label{sigma-dictionary-definition}
 \mathbb{D}^d_\sigma = \{\sigma(\omega\cdot x + b):~\omega\in \mathbb{R}^d,~b\in \mathbb{R}\}\subset L^p(\Omega),
\end{equation}
where $\sigma$ is an appropriate activation function and $\Omega\subset \mathbb{R}^d$ is a bounded domain. For this dictionary, the set
\begin{equation}
 \Sigma_{n}(\mathbb{D}^d_\sigma) = \Sigma^d_n(\sigma) = \left\{\sum_{j=1}^n a_j\sigma(\omega_j \cdot x + b_j):~\omega_j\in \mathbb{R}^d,~b_j\in \mathbb{R}\right\}
\end{equation}
is exactly the set of shallow neural networks with activation function $\sigma$ and width $n$. Note that we are suppressing the dependence on the underlying space $L^p(\Omega)$ for notational simplicity.

For activation functions $\sigma$ which are bounded, the dictionary $\mathbb{D}^d_\sigma$ is uniformly bounded in $L^p(\Omega)$. This holds for the class of sigmoidal activation functions, i.e. activation functions which satisfy $\lim_{x\rightarrow -\infty} \sigma(x) = 0$ and $\lim_{x\rightarrow \infty} \sigma(x) = 1$, for example. In this case we will consider the dictionary $\mathbb{D}_\sigma^d$ given in \eqref{sigma-dictionary-definition}. 

However, when the activation function $\sigma$ is not bounded, the dictionary $\mathbb{D}_\sigma^d$ will not in general be uniformly bounded in $L^p(\Omega)$. This is the case for the important activation functions $\sigma_k(x) = \text{ReLU}^k(x) := \max(x,0)^k$ when $k > 0$, for instance. In this case, we modify the definition \eqref{sigma-dictionary-definition} appropriately in order to guarantee uniform boundedness of the dictionary. When $\sigma_k(x) = \text{ReLU}^k(x)$ (here when $k=0$, we interpret $\sigma_k(x)$ to be the Heaviside function) we constrain the weights $\omega$ and $b$ and consider the dictionary
\begin{equation}\label{relu-k-space-definition}
 \mathbb{P}^d_k = \{\sigma_k(\omega\cdot x + b):~\omega\in S^{d-1},~b\in [c_1,c_2]\}\subset L^p(\Omega),
\end{equation}
where $S^{d-1} = \{\omega\in \mathbb{R}^d:~|\omega| = 1\}$ is the unit sphere and the constants $c_1$ and $c_2$ are chosen to satisfy
\begin{equation}
    c_1 < \inf\{x\cdot\omega,~\omega\in S^{d-1},~x\in \Omega\} < \sup\{x\cdot\omega,~\omega\in S^{d-1},~x\in \Omega\} < c_2.
\end{equation}
We remark that any choice of $c_1$ and $c_2$ satisfying the above conditions leads to a dictionary which is equivalent up to polynomials (see \cite{siegel2021characterization}). Due to the homogeneity of the activation function $\sigma_k$, the set of non-linear dictionary expansions $\Sigma_n(\mathbb{P}_k^d)$ coincides with the set of shallow ReLU$^k$ neural networks with width $n$.

An important model class of functions which can be efficiently approximated by non-linear dictionary expansions is given by the variation norm of the dictionary $\mathbb{D}$ \cite{devore1998nonlinear,barron2008approximation,barron1993universal,bach2017breaking,kurkova2001bounds,kurkova2002comparison}. Consider the set
\begin{equation}\label{unit-ball-definition}
 B_1(\mathbb{D}) = \overline{\left\{\sum_{j=1}^n a_jh_j:~n\in \mathbb{N},~h_j\in \mathbb{D},~\sum_{i=1}^n|a_i|\leq 1\right\}},
\end{equation}
which is the closure of the convex, symmetric hull of $\mathbb{D}$. Using this set we define a norm, $\|\cdot\|_{\mathcal{K}_1(\mathbb{D})}$, on $X$ given by the gauge (see for instance \cite{rockafellar1970convex}) of $B_1(\mathbb{D})$,
\begin{equation}\label{norm-definition}
 \|f\|_{\mathcal{K}_1(\mathbb{D})} = \inf\{c > 0:~f\in cB_1(\mathbb{D})\}.
\end{equation}
This norm is defined so that $B_1(\mathbb{D})$ is the unit ball of $\|\cdot\|_{\mathcal{K}_1(\mathbb{D})}$. We also consider the subspace of $X$ defined by the variation norm, which we denote
\begin{equation}\label{space-definition}
\mathcal{K}_1(\mathbb{D}) := \{f\in X:~\|f\|_{\mathcal{K}_1(\mathbb{D})} < \infty\}.
\end{equation}
As long as $\sup_{d\in \mathbb{D}} \|d\|_X = K_{\mathbb{D}} < \infty$, the space $\mathcal{K}_1(\mathbb{D})$ is a Banach space (see \cite{siegel2021characterization} for instance). The space $\mathcal{K}_1(\mathbb{D})$ is typically called the variation space and the norm $\|\cdot\|_{\mathcal{K}_1(\mathbb{D})}$ is typically called the variation norm or the atomic norm with respect to the dictionary $\mathbb{D}$. 

As shown in \cite{siegel2021characterization}, for the dictionary $\mathbb{P}_1^d$ corresponding to shallow neural networks with ReLU activation function, the space $\mathcal{K}_1(\mathbb{P}_1^d)$ is equivalent to the Barron space introduced and studied in \cite{weinan2019barron,wojtowytsch2020representation}. Further, the space $\mathcal{K}_1(\mathbb{P}_k^d)$ for general $k$ can be characterized in terms of the Radon transform and is equivalent to the Radon BV space introduced in the context of shallow neural networks in \cite{parhi2020banach,parhi2021kinds,ongie2019function}.

The first major problem we consider in this work is how efficiently functions from the variation space $\mathcal{K}_1(\mathbb{D})$ can be approximated by non-linear dictionary expansions from the sets $\Sigma_n(\mathbb{D})$, $\Sigma_{n,M}(\mathbb{D})$, or $\Sigma_{n,M}^\infty(\mathbb{D})$. There is a rich literature on this problem, which has significant applications to statistics and machine learning (see, for instance \cite{pisier1981remarques,jones1992simple,barron1993universal,klusowski2018approximation,bach2017breaking,makovoz1996random,devore1998nonlinear,kurkova2001bounds,barron2008approximation}), and the dictionaries $\mathbb{D}_\sigma^d$ and $\mathbb{P}_k^d$ corresponding to shallow neural networks are of particular interest.

There is a close relationship between this problem and fundamental approximation theoretic quantities such as the metric entropy and $n$-widths of the convex hull $B_1(\mathbb{D})$ (which we recall is the unit ball of $\mathcal{K}_1(\mathbb{D})$).  Let us give a brief overview of these notions, for more details, see for instance \cite{lorentz1996constructive,pinkus2012n,lang2011eigenvalues,triebel1995interpolation}. We remark that the notions of entropy and $n$-widths can also be naturally defined for operators $T:X\rightarrow Y$ between two Banach spaces \cite{pietsch1974s,pietsch1978operator}. Here we will only discuss these notions for subsets of a Banach space for simplicity.

The notion of metric entropy was first introduced by Kolmogorov \cite{kolmogorov1958linear}. The (dyadic) entropy numbers $\epsilon_n(A)$ of a set $A\subset X$ are defined by
\begin{equation}
 \epsilon_n(A)_X = \inf\{\epsilon > 0:~\text{$A$ is covered by $2^n$ balls of radius $\epsilon$}\}.
\end{equation}
Roughly speaking, the entropy numbers indicate how precisely we can specify elements of $A$ given $n$ bits of information and are closely related to approximation by stable non-linear methods \cite{cohen2020optimal}.

The Kolmogorov $n$-widths of a set $A\subset X$ measure how accurately the set $A$ can be approximated by linear subspaces and are given by
\begin{equation}
 d_n(A)_X = \inf_{Y_n}\sup_{x\in A}\inf_{y\in Y_n}\|x - y\|_X,
\end{equation}
where the first infimum is over the collection of subspaces $Y_n$ of dimension $n$. The Gelfand $n$-widths, which are important in compressed sensing \cite{donoho2006compressed}, measure how accurately elements from $A$ can be recovered from linear measurements and are given by 
\begin{equation}
 d^n(A)_X = \inf_{U_n}\sup\{\|x\|_H:~x\in U_n\cap A\},
\end{equation}
where the infemum is taken over all closed subspaces $U_n$ of codimension $n$. The linear $n$-widths are closely related to the Kolmogorov $n$-widths but require the approximation to be given by a linear map and are defined by
\begin{equation}
 \delta_n(A)_X = \inf_{T_n}\sup_{x\in A} \|x - T_n(x)\|_X,
\end{equation}
where the infemum is taken over all linear operators of rank $n$. In a Hilbert space the linear $n$-widths and Kolmogorov $n$-widths coincide since $T_n$ can be taken as the orthogonal projection operator. Finally, the Bernstein widths are given by
\begin{equation}
 b_n(A)_X = \sup_{X_{n}} \inf_{x\in \partial(A\cap X_{n})} \|x\|_X,
\end{equation}
where the supremum is taken over all subspaces $X_{n}\subset X$ of dimension $n$. The Bernstein widths give a lower bound on the best possible continuous non-linear approximation of the set $A$ using $n$-parameters \cite{devore1989optimal}.

The second main problem we consider is the determination of the asymptotics of the metric entropy and the different types of $n$-widths mentioned above for the class $B_1(\mathbb{D})\subset X$ for different dictionaries $\mathbb{D}\subset X$. This problem has been studied in functional analysis, probability theory and approximation theory \cite{ball1990entropy,carl1981entropy,carl1988gelfand,carl1999metric,carl2013gelfand,carl2014entropy,dudley1967sizes,dudley1987universal}, and has important applications to statistical learning theory. For example, the asymptotics of the metric entropy can be used to determine the minimax rates of convergence for statistical estimators on a class of functions \cite{yang1999information}.

\subsection{Prior Results}
Let us begin with results concerning the approximation of the convex hull $B_1(\mathbb{D})$ by non-linear dictionary expansions. A classical result of Maurey \cite{pisier1981remarques} (see also \cite{jones1992simple,barron1993universal,devore1998nonlinear}) states that if $X$ is a type-2 Banach space, which includes all Hilbert spaces and $L^p$ for $2\leq p < \infty$,
then for functions $f\in B_1(\mathbb{D})$ we have the approximation rate
\begin{equation}\label{fundamental-bound}
 \inf_{f_n\in \Sigma_{n,1}(\mathbb{D})} \|f - f_n\|_X \lesssim K_\mathbb{D}n^{-\frac{1}{2}},
\end{equation}
where the suppressed constant depends only upon the space $X$. An equivalent formulation of this result, which is sometimes more convenient is that for $f\in \mathcal{K}_1(\mathbb{D})$ we have
\begin{equation}
 \inf_{f_n\in \Sigma_{n,M}(\mathbb{D})} \|f - f_n\|_X \lesssim K_\mathbb{D}\|f\|_{\mathcal{K}_1(\mathbb{D})}n^{-\frac{1}{2}},
\end{equation}
where the bound $M$ can be taken as $M = \|f\|_{\mathcal{K}_1(\mathbb{D})}$.
In Section \ref{sampling-argument-sec}, we give the precise definition of a type-2 Banach space and prove \eqref{fundamental-bound} using Maurey's sampling argument, which is a fundamental building block of the theory.

Applying this result to the dictionaries $\mathbb{D}_\sigma^d$ and $\mathbb{P}_k^d$ immediately yields the following dimension independent approximation rates in $L^p$ for $2\leq p < \infty$:
\begin{equation}
    \inf_{f_n\in \Sigma_{n}^d(\sigma)} \|f - f_n\|_{L^p(\Omega)} \lesssim \|f\|_{\mathcal{K}_1(\mathbb{D}_\sigma^d)}n^{-\frac{1}{2}}
\end{equation}
for shallow neural networks with bounded activation function $\sigma$ on the variation space $\mathcal{K}_1(\mathbb{D}_\sigma^d)$, and
\begin{equation}\label{relu-k-fundamental-approximation-rate}
    \inf_{f_n\in \Sigma_{n}^d(\sigma_k)} \|f - f_n\|_{L^p(\Omega)} \lesssim \|f\|_{\mathcal{K}_1(\mathbb{P}_k^d)}n^{-\frac{1}{2}}
\end{equation}
for shallow ReLU$^k$ neural networks on the variation space $\mathcal{K}_1(\mathbb{P}_k^d)$. These results were first obtained in the $L^2$-norm by Jones \cite{jones1992simple} for the activation function $\sigma(x) = \cos(x)$ and by Barron \cite{barron1993universal} for sigmoidal activation functions. To be precise, Barron obtained approximation rates in terms of the spectral Barron semi-norm
\begin{equation}\label{spectral-barron-semi}
 |f|_{\mathcal{B}_s(\Omega)} := \inf_{f_e|_{\Omega} = f} \int_{\mathbb{R}^d} |\hat{f}(\xi)||\xi|^sd\xi,
\end{equation}
where the infimum is over all extensions $f_e\in L^1(\mathbb{R}^d)$ of $f$. Barron showed that modulo constants the semi-norm \eqref{spectral-barron-semi} for $s=1$ controls the variation norm $\mathcal{K}_1(\mathbb{D}_\sigma^d)$ for any sigmoidal activation function $\sigma$, from which his seminal result
    \begin{equation}
    \inf_{f_n\in \Sigma_{n}^d(\sigma)} \|f - f_n\|_{L^2(\Omega)} \lesssim |f|_{\mathcal{B}_1(\Omega)}n^{-\frac{1}{2}}
\end{equation}
follows \cite{barron1993universal}. 

It will be more convenient for us in what follows to consider the spectral Barron norm instead of the semi-norm \eqref{spectral-barron-semi}, which is given by \cite{siegel2020approximation}
\begin{equation}\label{spectral-barron-norm}
 \|f\|_{\mathcal{B}_s(\Omega)} := \inf_{f_e|_{\Omega} = f} \int_{\mathbb{R}^d} |\hat{f}(\xi)|(1 + |\xi|)^sd\xi,
\end{equation}
and the corresponding spectral Barron space $\mathcal{B}_s$.
Barron's results have been generalized to ReLU$^k$ activation functions in \cite{klusowski2018approximation,CiCP-28-1707}. It is shown that 
\begin{equation}\label{barron-spectral-barron-bound}
    \|f\|_{\mathcal{K}_1(\mathbb{P}_k^d)} \lesssim \|f\|_{\mathcal{B}_{k+1}(\Omega)},
\end{equation}
from which it follows that the approximation rate \eqref{relu-k-fundamental-approximation-rate} applies to the spectral Barron space $\mathcal{B}_{k+1}(\Omega)$. These results have also been extended to a very general class of activation functions in \cite{siegel2020approximation}. Interestingly, \eqref{barron-spectral-barron-bound} is not an equivalence and quantifying the gap between $\mathcal{K}_1(\mathbb{P}_k^d)$ and $\mathcal{B}_{k+1}(\Omega)$ is an open problem.

An important result, first observed by Makovoz \cite{makovoz1996random}, is that for certain dictionaries the rate in \eqref{fundamental-bound} can be improved. In particular, for the dictionary $\mathbb{D}^d_\sigma$ corresponding to neural networks with certain sigmoidal activation functions, Makovoz obtained the approximation rate
\begin{equation}\label{makovoz-original}
 \inf_{f_n\in \Sigma_{n,M}(\mathbb{D}^d_\sigma)} \|f - f_n\|_{L^2(\Omega)} \lesssim n^{-\frac{1}{2}-\frac{1}{2d}}
\end{equation}
for $f\in B_1(\mathbb{D}^d_\sigma)$
(Note that here and in what follows the implied constant is independent of $n$, and the $\ell^1$-bound $M$ is fixed and independent of $n$ as well.)
Furthermore, improved rates have been obtained for other dictionaries. In particular, in \cite{klusowski2018approximation}, the dictionaries $\mathbb{P}^d_k$ corresponding to neural networks with activation function $\sigma = [\max(0,x)]^k$ are studied for $k=1,2$ and it is shown that for $f\in B_1(\mathbb{P}^d_k)$
\begin{equation}\label{relu-k-original-rate}
 \inf_{f_n\in \Sigma_{n,M}(\mathbb{P}^d_k)} \|f - f_n\|_{L^2(\Omega)} \lesssim n^{-\frac{1}{2}-\frac{1}{d}}.
\end{equation}
We remark that more generally, it is shown that these rates hold in $L^\infty$ up to logarithmic factors.
This analysis is extended to $k\geq 3$ in \cite{CiCP-28-1707}, where the same approximation rate is attained. 

In addition, when $k=1$, i.e. for the ReLU activation function, the rate \eqref{relu-k-original-rate} can be improved to
\begin{equation}
 \inf_{f_n\in \Sigma_{n,M}(\mathbb{P}^d_1)} \|f - f_n\|_{L^2(\Omega)} \lesssim n^{-\frac{1}{2}-\frac{3}{2d}},
\end{equation}
and this result holds also in $L^\infty$ \cite{bach2017breaking}. This result is proved using a different set of combinatorial arguments from geometric discrepancy theory \cite{matouvsek1996improved}. 

These results raise the natural question of what the optimal approximation rates for $\Sigma_{n,M}(\mathbb{P}^d_k)$ on the set $B_1(\mathbb{P}^d_k)$ are.
Specifically, for each $k=0,1,2,...$ and dimension $d=2,...$ (the case $d=1$ is comparatively trivial), what is the largest possible value of $\alpha := \alpha(k,d)$ such that for $f\in B_1(\mathbb{P}^d_k)$ we have
\begin{equation}\label{optimal-approx-equation}
 \inf_{f_n\in \Sigma_{n,M}(\mathbb{P}^d_k)} \|f - f_n\|_{L^2(\Omega)} \lesssim n^{-\frac{1}{2}-\alpha(k,d)}.
\end{equation}
The results above imply that $\alpha(k,d) \geq \frac{1}{2d}$ for $k=0$, $\alpha(k,d) \geq \frac{3}{2d}$ for $k = 1$, and $\alpha(k,d) \geq \frac{1}{d}$ for $k > 1$. When $d > 1$, the best available upper bounds on $\alpha(k,d)$ are $\alpha(k,d) \leq \frac{k+1}{d}$ (see \cite{makovoz1996random,klusowski2018approximation}), except in the case $k=0$, $d=2$, where Makovoz obtains the sharp bound $\alpha(0,2) = \frac{1}{4}$ \cite{makovoz1996random}. 

\subsection{Main Results}

Our results can be divided into two categories. First, we consider upper bounds on approximation rates, entropy, and $n$-widths. Previous approximation rates, such as those obtained in \cite{makovoz1996random,klusowski2018approximation,CiCP-28-1707}, and entropy bounds on the convex hull $B_1(\mathbb{D})$, such as those obtained in \cite{carl1997metric,carl1999metric}, rely upon covering the dictionary $\mathbb{D}$ by balls of radius $\epsilon$ and using a stratified sampling argument.

In order to improve upon these results, the key idea is to use the smoothness of the dictionary $\mathbb{P}_k^d$. We obtain a higher-order generalization of these results by introducing the notion of a smoothly parameterized dictionary. A dictionary $\mathbb{D}$ is smoothly parameterized to order $s$ if there exists a parameterization map $\mathcal{P}:\mathcal{M}\rightarrow X$, where $\mathcal{M}$ is a smooth manifold, such that $\mathbb{D} \subset \mathcal{P}(\mathcal{M})$, and such that $\mathcal{P}$ is smooth to order $s$ in an appropriate sense. We give a detailed definition in Section \ref{main-result-1-section}, where we show that if $\mathbb{D}$ is smoothly parameterized to order $s$ by a compact $d$-dimensional manifold $\mathcal{M}$ and $X$ is a type-$2$ Banach space, then
\begin{equation}\label{smooth-approximation-rate}
 \inf_{f_n\in \Sigma_{n,M}(\mathbb{D})} \|f - f_n\|_{X} \lesssim n^{-\frac{1}{2}-\frac{s}{d}},
\end{equation}
for some $M < \infty$. Here the implied constant depends upon the manifold $\mathcal{M}$, the parameterization $\mathcal{P}$, and the type-$2$ constant of $X$. We apply this result to the dictionaries $\mathbb{P}_k^d\subset L^p(\Omega)$, which we show are smoothly parameterized to order $k + \frac{1}{p}$ by the compact, $d$-dimensional manifold $S^{d-1}\times [c_1,c_2]$. This allows us to improve upon the approximation rates for ReLU$^k$ networks described above when $k\geq 2$, and in particular show that $\alpha(k,d) \geq \frac{2k+1}{2d}$ for all $k\geq 0$.

We also give upper bounds on the metric entropy and $n$-widths of the convex hull $B_1(\mathbb{D})$ when $\mathbb{D}$ is a smoothly parameterized dictionary. In particular, in Section \ref{main-result-1-section} we show that if $\mathbb{D}$ is smoothly parameterized to order $s$ by a compact, $d$-dimensional manifold, then we have
\begin{equation}
 d_n(B_1(\mathbb{D}))_X\lesssim n^{-\frac{s}{d}}.
\end{equation}
If in addition the space $X$ is a type-$2$ Banach space, then we have the bound
\begin{equation}
 \epsilon_n(B_1(\mathbb{D}))_X\lesssim n^{-\frac{1}{2}-\frac{s}{d}}.
\end{equation}
Finally, if $X$ is a Hilbert space, we obtain an analogous bound on the Gelfand numbers of the convex hull of $\mathbb{D}$. The Gelfand numbers of a convex hull are introduced and studied in \cite{carl1999metric,carl2013gelfand}. While closely related to the Gelfand widths $d^n(B_1(\mathbb{D}))$, they are subtly different \cite{edmunds2013gelfand}. We give the precise definition of the Gelfand numbers and provide an example illustrating this difference in Section \ref{gelfand-numbers-subsection}.

These results generalize the results from \cite{carl1997metric,carl1999metric,carl2013gelfand,carl2014entropy} by taking into account the smoothness of the dictionary in addition to the compactness. The application to $\mathbb{P}_k^d$ gives the bound
\begin{equation}
 d_n(B_1(\mathbb{P}_k^d))_{L^p(\Omega)} \lesssim n^{-\frac{pk+1}{pd}}
\end{equation}
on the Kolmogorov $n$-widths for $1 < p < \infty$, and the bound
\begin{equation}\label{introduction-entropy-upper-bound}
 \epsilon_n(B_1(\mathbb{P}_k^d))_{L^p(\Omega)} \lesssim n^{-\frac{1}{2}-\frac{pk+1}{pd}}
\end{equation}
on the metric entropy for $2\leq p < \infty$. Based on the $L^\infty$ approximation rates which can be derived using geometric discrepancy theory \cite{bach2017breaking,matouvsek1996improved} in the case $k=1$, we only expect the entropy upper bound to be sharp for $p = 2$, however.

Next, we consider lower bounds on approximation rates, metric entropy, and $n$-widths of the spaces $B_1(\mathbb{D}_\sigma^d)$ and $B_1(\mathbb{P}_k^d)$. The key here is a generalization of a construction of Makovoz \cite{makovoz1996random}, which enables us to find a large collection of nearly orthogonal functions in $B_1(\mathbb{D})$ when the dictionary $\mathbb{D}$ contains a certain class of ridge functions. As a consequence of this construction, we obtain the following lower bounds on the entropy and $n$-widths in $L^2$
\begin{equation}
 d_n(B_1(\mathbb{P}_k^d))_{L^2(\Omega)} \gtrsim n^{-\frac{2k+1}{2d}},~\epsilon_n(B_1(\mathbb{P}_k^d))_{L^2(\Omega)} \gtrsim n^{-\frac{1}{2}-\frac{2k+1}{2d}},~b_n(B_1(\mathbb{P}_k^d))_{L^2(\Omega)}\gtrsim n^{-\frac{1}{2}-\frac{2k+1}{2d}}.
\end{equation}
This method is quite general, and one can also obtain lower bounds for $B_1(\mathbb{D}_\sigma^d)$ for a variety of other activation functions $\sigma$, but we do not pursue this further here. Note that this lower bound combined with the upper bound \eqref{introduction-entropy-upper-bound} for $p=2$ gives the sharp rate of decay for the metric entropy of $B_1(\mathbb{P}_k^d)$
\begin{equation}\label{introduction-entropy-decay-rate}
    \epsilon_n(B_1(\mathbb{P}_k^d))_{L^2(\Omega)} \eqsim n^{-\frac{1}{2}-\frac{2k+1}{2d}}.
\end{equation}
In addition the lower bounds on the entropy enable us to obtain the sharp upper bound $\alpha(k,d)\leq \frac{2k+1}{2d}$ on the exponent of approximation by shallow ReLU$^k$ networks. Further, we use these lower bounds to show that the exponent in the approximation rates \eqref{optimal-approx-equation} cannot be improved even when relaxing the $\ell^1$-norm constraint on the weights to an $\ell^\infty$-norm constraint, i.e. by using the set $\Sigma_{n,M}^\infty(\mathbb{P}_k^d)$ instead of $\Sigma_{n,M}(\mathbb{P}_k^d)$. 

Making use of a technical lemma proved in Section \ref{consequences-section}, these lower bounds also carry over to sigmoidal activation functions $\sigma$ which have bounded variation. This generalizes previous results, which require more stringent assumptions on the activation function $\sigma$ in order to obtain lower bounds \cite{makovoz1998uniform}.

The entropy lower bounds also quantify the gap between the spaces $\mathcal{K}_1(\mathbb{P}_k^d)$ and $\mathcal{B}_{k+1}(\Omega)$. Specifically, the unit ball in the spectral Barron space $\mathcal{B}_{k+1}(\Omega)$ satisfies
\begin{equation}
    \epsilon_{n\log n}(\{f:\|f\|_{\mathcal{B}_{k+1}(\Omega)} \leq 1\})_{L^2(\Omega)} \lesssim n^{-\frac{1}{2}-\frac{k+1}{d}}.
\end{equation}
This follows from the approximation results obtained in \cite{siegel2022high} combined with Theorem \ref{dictionary-carls-theorem}. Conversely, the metric entropy of $B_1(\mathbb{P}_k^d)$ decays at the rate \eqref{introduction-entropy-decay-rate}, which is slower by a factor of $n^{1/2d}$.

Finally, the lower bounds on the metric entropy, Kolmogorov, and Bernstein $n$-widths show that linear methods are dramatically worse that shallow neural networks, and even non-linear methods cannot improve upon shallow neural networks on the class $B_1(\mathbb{P}_k^d)$ if either stability \cite{cohen2020optimal} or continuity \cite{devore1989optimal} of the approximation method is required.

The paper is organized as follows. In Section \ref{sampling-argument-sec}, we review the Barron-Maurey-Jones sampling argument for type-2 Banach spaces. In Section \ref{main-result-1-section}, we study smoothly parameterized dictionaries in detail and derive approximation rates for the set $B_1(\mathbb{D})$ when $\mathbb{D}$ is smoothly parameterized by a compact manifold. Here we also extend existing methods to obtain upper bounds on the entropy and $n$-widths of $B_1(\mathbb{D})$ for such dictionaries $\mathbb{D}$. We apply these results to obtain an upper bound on the approximation rates, entropy numbers and $n$-widths of $B_1(\mathbb{P}^d_k)$. Next, in Section \ref{lower-bounds-section}, we prove lower bounds on the approximation rates, metric entropy, and $n$-widths for dictionaries of ridge functions. Finally, we give some concluding remarks and further research directions.

\section{Type-2 Spaces and Maurey's Sampling Argument}\label{sampling-argument-sec}
In this section, we give the definition and basic properties of type-$2$ Banach spaces and prove the approximation rate \eqref{fundamental-bound} using a sampling argument due to Maurey \cite{pisier1981remarques}. This sampling argument was first used in the context of neural network approximation by Barron \cite{barron1993universal}. We refer to \cite{ledoux2013probability}, Chapter 9 and \cite{albiac2006topics}, Chapter 6 for a detailed theory of type-2 spaces (and the more general type-p spaces which we will not discuss further here).

\begin{definition}\label{type-2-definition}
 A Banach space $X$ is a type-2 Banach space if there exists a constant $C_{2,X} < \infty$ such that for any $n \geq 1$ and $f_1,...,f_n\in X$ we have
 \begin{equation}\label{type-2-definition}
 \left(\mathbb{E} \left\|\sum_{i=1}^n\epsilon_if_i\right\|^2_X\right)^{1/2} \leq C_{2,X}\left(\sum_{i=1}^n\|f_i\|_X^2\right)^{1/2},
\end{equation}
where the expectation is taken over independent Rademacher random variables $\epsilon_1,...,\epsilon_n$, i.e. $$\mathbb{P}(\epsilon_i = 1) = \mathbb{P}(\epsilon_i = -1) = \frac{1}{2}.$$
The constant $C_{2,X}$ is called the type-2 constant of the space $X$.
\end{definition}
It is clear that a Hilbert space $X$ is a type-2 space with type-2 constant $C_{2,X} = 1$, since by expanding the inner product we have
\begin{equation}
    \mathbb{E} \left\|\sum_{i=1}^n\epsilon_if_i\right\|^2_X = \sum_{i=1}^n \sum_{j=1}^n \mathbb{E}(\epsilon_i\epsilon_j)\langle f_i,f_j\rangle_X = \sum_{i=1}^n\|f_i\|_X^2.
\end{equation}
Here independence of the Rademacher variables implies that $\mathbb{E}(\epsilon_i\epsilon_j) = \delta_{ij}$.

We remark that by the Khintchine-Kahane Theorem \cite{kahane1964sommes}, the condition \eqref{type-2-definition} is equivalent to
\begin{equation}\label{kahane-type-2}
    \left(\mathbb{E} \left\|\sum_{i=1}^n\epsilon_if_i\right\|^p_X\right)^{1/p} \leq C_{2,X}\left(\sum_{i=1}^n\|f_i\|_X^2\right)^{1/2}
\end{equation}
for any $1\leq p < \infty$, up to a change in the constant $C_{2,X}$. 

Let $(X,\mathcal{A},\mu)$ be a measure space. We also have that $L^p(d\mu)$ for $2\leq p < \infty$ is a type-2 Banach space. This follows from Khintchine's inequality, which states that for $0 < p < \infty$ there exists a constant $C_p < \infty$ such that for $n\geq 1$ and real numbers $x_1,...,x_n$ we have
\begin{equation}
    \left(\mathbb{E} \left|\sum_{i=1}^n\epsilon_ix_i\right|^p\right)^{\frac{1}{p}} \leq C_p\left(\sum_{i=1}^n |x_i|^2\right)^{\frac{1}{2}}.
\end{equation}
Letting $f_1,...,f_n\in L^p(\mu)$ and interchanging the expectation and integration, we calculate
\begin{equation}
    \mathbb{E} \left\|\sum_{i=1}^n\epsilon_if_i\right\|^p_{L^p(d\mu)} = \int_{X}\mathbb{E} \left|\sum_{i=1}^n\epsilon_if_i(x)\right|^pd\mu(x) \leq C_p^p\int_X\left(\sum_{i=1}^nf_i(x)^2\right)^{p/2}d\mu(x) = C_p^p\left\|\sum_{i=1}^nf_i^2\right\|_{L^{p/2}(d\mu)}^{p/2}.
\end{equation}
Using the triangle inequality (valid since $p \geq 2$), we get
\begin{equation}
    \mathbb{E} \left\|\sum_{i=1}^n\epsilon_if_i\right\|^p_{L^p(d\mu)} \leq C^p_p\left\|\sum_{i=1}^nf_i^2\right\|_{L^{p/2}(d\mu)}^{p/2} \leq C^p_p\left(\sum_{i=1}^n\left\|f_i^2\right\|_{L^{p/2}(d\mu)}\right)^{p/2} = C_p^p\left(\sum_{i=1}^n\left\|f_i\right\|_{L^{p}(d\mu)}^2\right)^{p/2}.
\end{equation}
This proves \eqref{kahane-type-2} for $L^p(d\mu)$, so that $L^p(d\mu)$ is a type-2 Banach space. We remark that the optimal constant $C_p$ in Khintchine's inequality is known \cite{haagerup1981best} and scales like $\sqrt{p}$.

Finally, we prove the approximation rate \eqref{fundamental-bound} in type-2 Banach spaces, which is originally due to Maurey \cite{pisier1981remarques}.
\begin{theorem}\label{maurey-sampling-argument-thm}
 Suppose that $X$ is a type-2 Banach space and $\mathbb{D}\subset X$ is a dictionary with $K_\mathbb{D}:=\sup_{d\in \mathbb{D}}\|d\|_X < \infty$. Then for $f\in B_1(\mathbb{D})$, we have
 \begin{equation}
     \inf_{f_n\in \Sigma_{n,1}(\mathbb{D})} \|f - f_n\|_X \leq 4C_{2,X}K_\mathbb{D}n^{-\frac{1}{2}}.
 \end{equation}
\end{theorem}
\begin{proof}
 Let $\delta > 0$ be arbitrary. Since $f\in B_1(\mathbb{D})$, for some integer $N = N(\delta)$, there exists an $f_\delta$ of the form
 \begin{equation}
     f_\delta = \sum_{i=1}^Na_id_i
 \end{equation}
 with $d_i\in \mathbb{D}$ and $\sum_{i=1}^N |a_i| = 1$ such that $\|f - f_\delta\|_X < \delta$. We will show that there exists an $f_n\in \Sigma_{n,1}$ with $$\|f_\delta - f_n\|_X \leq 4C_{2,X}K_\mathbb{D}n^{-1/2}$$ which completes the proof since $\delta > 0$ was arbitrary. To this end, define a random variable $Y$ with values in $X$ by (recall that $\sum_{i=1}^N |a_i| = 1$)
 \begin{equation}
     \mathbb{P}(Y = \sign(a_i)d_i) = |a_i|.
 \end{equation}
 Note that by construction we have $\mathbb{E}(Y) = f_\delta$. Let $Y_1,...,Y_n$ be independent copies of $Y$ and consider the empirical average
 \begin{equation}
     Z_n = \frac{1}{n}\sum_{i=1}^n Y_i.
 \end{equation}
 We will show that
 \begin{equation}\label{expectation-difference-bound}
     \mathbb{E}\|Z_n - f_\delta\|^2_X = \mathbb{E}\left\|\frac{1}{n}\sum_{i=1}^n (Y_n - f_\delta)\right\|_X^2\leq 16C_{2,X}^2K_\mathbb{D}^2n^{-1}.
 \end{equation}
 This implies that there must be a realization of the random variable $Z_n$, i.e. a value $Z_n(\omega)$ for some $\omega\in \Omega$ in the underlying probability space, such that $\|Z_n(\omega) - f_\delta\|_X\leq 2C_{2,X}K_\mathbb{D}n^{-1/2}$. Since the values of the random variable $Z_n$ lie in $\Sigma_{n,1}(\mathbb{D})$ by construction, this completes the proof.
 
 To prove \eqref{expectation-difference-bound} we will show that if a sequence of i.i.d. random variables $R_1,...,R_n$ with values in $X$ satisfies $\mathbb{E}(R_i) = 0$ and $\|R_i\|_X \leq M$ almost surely, then
 \begin{equation}\label{iid-random-bound-type-2}
    \mathbb{E}\left\|\sum_{i=1}^nR_i\right\|_X^2 \leq 4nC_{2,X}^2M^2.
 \end{equation}
 Applying this to the sequence $R_i = n^{-1}(Y_n - f_\delta)$ completes the proof since $\|Y_n - f_\delta\|_X \leq \|Y_n\| + \|f_\delta\|_X \leq 2K_\mathbb{D}$ almost surely.
 
 Finally, we prove \eqref{iid-random-bound-type-2} using a symmetrization argument and the type-2 property of $X$. Let $R'_1,...,R'_n$ denote a new set of independent copies of $R_1,...,R_n$ and let $\mathbb{E}'$ denote the expectation over $R'_1,...,R'_n$ and $\mathbb{E}$ denote the expectation over the original $R_1,...,R_n$. Using the zero mean property of the $R_i$ and Jensen's inequality, we get
 \begin{equation}
     \mathbb{E}\left\|\sum_{i=1}^nR_i\right\|_X^2 \leq \mathbb{E}\mathbb{E}'\left\|\sum_{i=1}^nR_i-R'_i\right\|_X^2.
 \end{equation}
 For any fixed choice of signs $\epsilon_1,...,\epsilon_n$, the distribution of $\sum_{i=1}^n\epsilon_i(R_i-R'_i)$ is the same. This is due the fact that $R_i$ and $R_i'$ are i.i.d. and switching the sign $\epsilon_i$ is the same as swapping $R_i$ and $R_i'$. This means that
 \begin{equation}
     \mathbb{E}\mathbb{E}'\left\|\sum_{i=1}^nR_i-R'_i\right\|_X^2 = \mathbb{E}_\epsilon\mathbb{E}\mathbb{E}'\left\|\sum_{i=1}^n\epsilon_i(R_i-R'_i)\right\|_X^2,
 \end{equation}
 where $\mathbb{E}_\epsilon$ denotes an average over the Rademacher random variables $\epsilon_1,...,\epsilon_n$. Switching the order of the expectation and using the type-2 property of $X$, we get
 \begin{equation}
     \mathbb{E}\mathbb{E}'\mathbb{E}_\epsilon\left\|\sum_{i=1}^n\epsilon_i(R_i-R'_i)\right\|_X^2 \leq C_{2,X}^2\mathbb{E}\mathbb{E}'\sum_{i=1}^n \|R_i - R_i'\|_X^2.
 \end{equation}
 Finally, since $\|R_i\|_X\leq M$ almost surely, we get that $\|R_i - R_i'\|_X \leq 2M$ almost surely so that
 \begin{equation}
    \mathbb{E}\left\|\sum_{i=1}^nR_i\right\|_X^2 \leq 4nC_{2,X}^2M^2,
 \end{equation}
 which completes the proof.
\end{proof}

\section{Smoothly parameterized dictionaries}\label{main-result-1-section}
Let $X$ be a Banach space and consider a dictionary $\mathbb{D}\subset X$ which is parameterized by a smooth manifold $\mathcal{M}$, i.e. for which there exists a surjection
\begin{equation}
 \mathcal{P}:\mathcal{M}\rightarrow \mathbb{D}.
\end{equation}
In this section, we consider dictionaries $\mathbb{D}$ which are parameterized by a smooth compact manifold $\mathcal{M}$ and study how the approximation properties of the convex hull $B_1(\mathbb{D})$ depend upon the smoothness of the parameterization map $\mathcal{P}$. Specifically, we give upper bounds on the metric entropy and $n$-widths of $B_1(\mathbb{D})$, and upper bounds on the approximation rates for $B_1(\mathbb{D})$ from sparse convex combinations $\Sigma_{n,M}(\mathbb{D})$, as a function of the degree of smoothness of the parameterization map $\mathcal{P}$.

We being by introducing the relevant notion of smoothness. Throughout this section, $\mathcal{M}$ will denote a smooth manifold with a smooth boundary. For a given $s > 0$ and domain $\Omega\subset \mathbb{R}^d$, we consider the Lipschitz space $\lip(s,L^\infty(\Omega))$ (see \cite{lorentz1996constructive} Chapter 2 for instance) with semi-norm given by
\begin{equation}
 |f|_{\lip(s,L^\infty(\Omega))} = \sup_{x,y\in \Omega} \frac{|D^kf(x) - D^kf(y)|}{|x-y|^\alpha}.
\end{equation}
Here $s = k+\alpha$, $k$ is an integer, $D^k$ represents the $k$-th derivative (tensor), and $0 < \alpha\leq 1$. If $s$ is not an integer, it is well-known that the space $\lip(s,L^\infty(\Omega))$ is equivalent to the Besov space $B_{\infty,\infty}^s(\Omega)$, but when $s$ is an integer, the Lipschitz space is slightly smaller \cite{lorentz1996constructive}. The first step is to extend this definition to Banach space valued functions $f$.
\begin{definition}\label{banach-space-map-smoothness-definition}
 Let $X$ be a Banach space, $U\subset \mathbb{R}^d$ an open set and $s > 0$. A function $\mathcal{F}:U\rightarrow X$ is of smoothness class $s$, which we write $\mathcal{F}\in \lip_\infty(s, X)$, if for every $\xi\in X^*$, the function $f_\xi:U\rightarrow \mathbb{R}$ defined by
 \begin{equation}
  f_\xi(x) = \langle\xi, \mathcal{F}(x)\rangle
 \end{equation}
 satisfies
 \begin{equation}
  |f_\xi|_{\lip(s,L^\infty(\Omega))} \leq K\|\xi\|_{X^*}
 \end{equation}
 for a constant $K < \infty$. The smallest constant $K$ above is the semi-norm $|\mathcal{F}|_{\lip_\infty(s, X)}$.

\end{definition}
In order to apply our method to more general dictionaries, we generalize this definition to allow the domain to be a smooth manifold.
\begin{definition}
 Let $X$ be a Banach space, $\mathcal{M}$ a smooth $d$-dimensional manifold, and $s > 0$. A map $\mathcal{P}:\mathcal{M}\rightarrow X$ is of smoothness class $s$, which we write $\mathcal{P}\in \lip^\mathcal{M}_\infty(s, X)$, if for each coordinate chart $(U,\phi)$ we have $\mathcal{P}\circ \phi\in \lip_\infty(s, X)$. 
 \end{definition}

 To illustrate this definition, we consider the dictionary $\mathbb{P}^d_k$ with respect to $X = L^p(\Omega)$.
 \begin{lemma}\label{p-k-d-smoothness-lemma}
  Let $\mathcal{M} = S^{d-1}\times [c_1,c_2]$, $1\leq p < \infty$, and consider the parameterization map $\mathcal{P}_k^d:\mathcal{M}\rightarrow L^p(\Omega)$ given by
  \begin{equation}
   \mathcal{P}^d_k(\omega,b) = \sigma_k(\omega\cdot x + b)\in L^p(\Omega).
  \end{equation}
  Then $\mathcal{P}_k^d\in \lip^\mathcal{M}_\infty\left(k+\frac{1}{p}, L^p(\Omega)\right)$.
 \end{lemma}
 \begin{proof}
  Let $\xi(x)\in L^q(\Omega)$ with $\frac{1}{p} + \frac{1}{q} = 1$. Then we have
  \begin{equation}
   f_\xi(\omega,b) = \int_{\Omega} \sigma_k(\omega\cdot x + b)\xi(x)dx.
  \end{equation}
  We view $\mathcal{M}$ as embedded into $\mathbb{R}^{d+1}$ and calculate the derivatives
  \begin{equation}
  \begin{split}
   \frac{\partial^k}{\partial \omega_{i_1}\cdots\partial \omega_{i_j}\partial b^{k-j}}f_\xi(\omega,b) &= \int_{\Omega} \frac{\partial^k}{\partial \omega_{i_1}\cdots\partial \omega_{i_j}\partial b^{k-j}}\sigma_k(\omega\cdot x + b)\xi(x)dx\\
   &= k!\int_{\Omega} \left(\prod_{l=1}^j x_{i_l}\right)\sigma_0(\omega\cdot x + b)\xi(x)dx,
   \end{split}
  \end{equation}
  since $\sigma_k^{(k)} = k!\sigma_0$. Because $\Omega$ is a bounded domain, $\left(\prod_{l=1}^j x_{i_l}\right)\in L^\infty(\Omega)$ and so there exists a constant $C(k,\Omega)$ such that for all indicies $i_1,..,i_j$ we have
  \begin{equation}
   \left\|k!\left(\prod_{l=1}^j x_{i_l}\right)\xi(x)\right\|_{L^q(\Omega,dx)} \leq C(k,\Omega)\|\xi\|_{L^q(\Omega)}.
  \end{equation}
  Then we have
  \begin{equation}
  \begin{split}
   &\left|\frac{\partial^k}{\partial \omega_{i_1}\cdots\partial \omega_{i_j}\partial b^{k-j}}f_\xi(\omega,b) - \frac{\partial^k}{\partial \omega_{i_1}\cdots\partial \omega_{i_j}\partial b^{k-j}}f_\xi(\omega',b')\right| \leq \\&k!\int_{\Omega} \left(\prod_{l=1}^j x_{i_l}\right)|\sigma_0(\omega\cdot x + b) - \sigma_0(\omega'\cdot x + b')|\xi(x)dx \leq \\
   &C(k,\Omega)\|\xi\|_{L^q(\Omega)}\|\sigma_0(\omega\cdot x + b) - \sigma_0(\omega'\cdot x + b')\|_{L^p(\Omega)},
   \end{split}
  \end{equation}
  which means that
  \begin{equation}
   \left|D^kf_\xi(\omega,b) - D^kf_\xi(\omega',b')\right| \lesssim_{k,\Omega} \|\xi\|_{L^q(\Omega)}\|\sigma_0(\omega\cdot x + b) - \sigma_0(\omega'\cdot x + b')\|_{L^p(\Omega)}.
  \end{equation}
  The proof will be complete if we can show that
  \begin{equation}
   \|\sigma_0(\omega\cdot x + b) - \sigma_0(\omega'\cdot x + b')\|_{L^p(\Omega)} \lesssim_{p,\Omega} (|\omega - \omega'| + |b - b'|)^{\frac{1}{p}}.
  \end{equation}
  This follows since the function $\sigma_0(\omega\cdot x + b) - \sigma_0(\omega'\cdot x + b')$ is zero except on the set
  \begin{equation}
   S = \{x\in \Omega:\omega\cdot x + b < 0\leq \omega'\cdot x + b'\}\cup \{x\in\Omega:\omega'\cdot x + b' < 0\leq \omega\cdot x + b\},
  \end{equation}
  where it is $\pm 1$. The set $S$ is a wedge or strip within $\Omega$ of width proportional to $|\omega - \omega'| + |b - b'|$ (see also \cite{makovoz1996random}), and thus we have
  \begin{equation}
   \|\sigma_0(\omega\cdot x + b) - \sigma_0(\omega'\cdot x + b')\|_{L^p(\Omega)}^p = |S| \lesssim_{\Omega} |\omega - \omega'| + |b - b'|,
  \end{equation}
  which completes the proof.

 \end{proof}

In the following analysis, it will be convenient to use the following technical lemma which allows us to reduce to the case where $\mathcal{M}$ is a $d$-dimensional cube.
\begin{lemma}\label{image-of-union-of-cubes-lemma}
 Suppose that $\mathcal{M}$ is a d-dimensional compact smooth manifold (potentially with boundary) and we are given a parameterization $\mathcal{P}\in \lip_\infty(s, X)$. Then there exist finitely many maps $\mathcal{P}_j:[-1,1]^d\rightarrow X$, $j=1,...,T$, such that $\mathcal{P}_j\in \lip_\infty(s, X)$ for each $j$ and
 \begin{equation}
  \mathcal{P}(\mathcal{M}) = \bigcup_{j=1}^T \mathcal{P}_j([-1,1]^d).
 \end{equation}

\end{lemma}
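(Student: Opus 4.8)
The plan is to reduce the manifold statement to a purely local one by using compactness to extract a finite atlas, then modify each chart so that its domain becomes the standard cube $[-1,1]^d$ while preserving the smoothness class. First I would invoke compactness of $\mathcal{M}$: for each point $p\in\mathcal{M}$ choose a coordinate chart $(U_p,\phi_p)$ with $\phi_p(U_p)$ an open subset of $\mathbb{R}^d$, and shrink it so that $\phi_p(U_p)$ contains an open cube $Q_p$ around $\phi_p(p)$. The preimages $\phi_p^{-1}(Q_p)$ form an open cover of $\mathcal{M}$, so finitely many of them, say indexed by $j=1,\dots,T$, still cover $\mathcal{M}$. Relabel the corresponding charts as $(U_j,\phi_j)$ and cubes as $Q_j$, and note that by definition of $\mathcal{P}\in\lip(k+\alpha,L^\infty(\mathcal{M}\to H))$ we have $\mathcal{P}\circ\phi_j^{-1}\in\lip(k+\alpha,L^\infty(\phi_j(U_j)\to H))$, hence in particular on the open cube $Q_j$.

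Next I would rescale each cube to the standard one. For each $j$, pick an affine bijection $A_j:[-1,1]^d\to \overline{Q_j}$ (a translation composed with a diagonal scaling); then define $\mathcal{P}_j := \mathcal{P}\circ\phi_j^{-1}\circ A_j$, which maps $[-1,1]^d$ into $H$. Composition with an affine map does not change the smoothness class: the chain rule shows $D^m(\mathcal{P}\circ\phi_j^{-1}\circ A_j)$ is obtained from $D^m(\mathcal{P}\circ\phi_j^{-1})$ by contracting with constant tensors built from $A_j$, so the derivatives up to order $k$ exist and the $k$-th derivative inherits $\alpha$-H\"older continuity (with a constant multiplied by a power of $\|A_j\|$). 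One subtlety is that $\mathcal{P}\circ\phi_j^{-1}$ is only guaranteed nice on the \emph{open} cube $Q_j$, not its closure, so strictly I would choose the original cube $Q_j$ slightly larger than needed and let $A_j$ map $[-1,1]^d$ into a compactly contained sub-cube; then $\mathcal{P}_j$ is of class $k+\alpha$ on a neighborhood of $[-1,1]^d$, which certainly gives $\mathcal{P}_j\in\lip(k+\alpha,L^\infty([-1,1]^d\to H))$. Finally, since the shrunk cubes already cover $\mathcal{M}$ and $A_j$ is onto that sub-cube, $\mathcal{P}(\mathcal{M})=\bigcup_j \mathcal{P}(\phi_j^{-1}(Q_j'))\subset\bigcup_{j=1}^T\mathcal{P}_j([-1,1]^d)$, which is the claim.

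The main obstacle, such as it is, is bookkeeping rather than mathematics: one must be careful that the charts supplied by the manifold structure are only defined on open sets, so the reduction to a \emph{closed} cube requires an extra shrinking step, and one must verify that affine reparameterization genuinely preserves the H\"older class with range in $H$ (this is where the Hilbert-space-valued version of the definition matters, but since only the chain rule and operator norms are involved, nothing new happens compared to the scalar case). I would also note in passing that the number $T$ and the implied H\"older constants of the $\mathcal{P}_j$ depend on $\mathcal{M}$, $\mathcal{P}$, and the chosen atlas, which is harmless for the later entropy estimates since they only need finiteness and uniform smoothness of the pieces.
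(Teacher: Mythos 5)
Your proof is correct, and it takes a genuinely different route from the paper. The paper fixes a finite atlas with bounded chart domains $U_j$, translates and dilates each $U_j$ into the cube $C=[-1,1]^d$, and then invokes Whitney's extension theorem to extend $\mathcal{P}\circ\phi_j$ from $U_j$ to all of $C$ while preserving the class $\lip(k+\alpha)$. You instead go the opposite way: rather than extending outward from an irregular chart domain to the cube, you shrink inward to a compactly contained closed sub-cube and pull it back to $[-1,1]^d$ by an affine map. Your version buys elementarity: it needs only compactness, the shrinking-lemma refinement of the cover, and the chain rule for affine reparameterizations, whereas the paper's version leans on a Whitney extension for Hilbert-space-valued H\"older classes (the cited reference treats scalar functions, and applying it to a general bounded open $U_j$ also quietly requires that interior $C^{k,\alpha}$ bounds yield Whitney jet data on $\overline{U_j}$, which is cleanest when the chart domains are chosen to be balls or cubes). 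The price of your route is purely cosmetic: a possibly larger number $T$ of pieces and the bookkeeping you already flag, namely that the compactly contained sub-cubes must be chosen \emph{before} extracting the finite subcover so that their preimages still cover $\mathcal{M}$ (your write-up orders these two steps slightly loosely, but your parenthetical about enlarging $Q_j$ shows you have the right fix). Since the downstream entropy and approximation arguments only use finiteness of $T$ and membership of each $\mathcal{P}_j$ in $\lip(k+\alpha, L^\infty([-1,1]^d\rightarrow H))$, either construction serves equally well.
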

\begin{proof}
 Let $x\in \mathcal{M}$ be an interior point. Take a chart $\phi_x:U_x\rightarrow\mathcal{M}$ containing $x$. Here $U_x$ may be homeomorphic to the upper half-plane with boundary if the chart contains boundary points. Let $T_x:C\rightarrow U_x$ be a smooth injective map from the cube $C = [-1,1]^d$ to $U_x$ such that $\phi^{-1}_x(x)$ is in the interior of $T_x(C)$. Such a map can clearly always be found since $\phi_x^{-1}(x)\in \text{int}(U_x)$ as $x\in \text{int}(\mathcal{M})$.  Define $V_x = \phi_x\circ T_x(C^o)$, where $C^o$ is the interior of $C$.
 
 For boundary points $x\in \mathcal{M}$ we take a chart $U_x\rightarrow\mathcal{M}$ containing $x$. In this case $U_x$ is homeomorphic to the upper half-plane with boundary and $\phi_x^{-1}(x)$ is a boundary point of $U_x$. Since the boundary is smooth, we may in this case find a smooth injective map $T_x:C\rightarrow U_x$ such that $\phi^{-1}_x(x)\in \{-1\}\times (-1,1)^{d-1}\subset C$. In this case we define $V_x = \phi_x\circ T_x(C^o\cup \{-1\}\times (-1,1)^{d-1})$.
 
 In either case, we have $x\in V_x$ and that $V_x$ is relatively open in $\mathcal{M}$. Since $\mathcal{M}$ is compact, it can be covered by $V_{x_j}$ for finitely many $x_1,...,x_T$. The maps $\mathcal{P}_j = \mathcal{P}\circ\phi_{x_j}\circ T_{x_j}$ satisfy the conclusion of the Lemma. Indeed, since $T_{x_i}$ is smooth and by definition $\mathcal{P}\circ\phi_i\in \lip_\infty(s,X)$, it is easy to see that $\mathcal{P}_j\in \lip_\infty(s,X)$. In addition, by construction the images $\mathcal{P}_j(C)$ cover the image $\mathcal{P}(\mathcal{M})$.
\end{proof}

\subsection{Polynomial Interpolation Error Bounds}\label{polynomial-interpolation-section}
The significance of the smoothness definition \ref{banach-space-map-smoothness-definition} lies in its relationship with approximation by polynomial interpolation. Let use briefly review some basic facts concerning polynomial interpolation. 

To set the stage, let $U\subset \mathbb{R}^d$ be an open domain, let $\Pi_k^d$ denote the space of polynomials of degree at most $k$ in $d$ variables, set $M = \dim{\Pi_k^d} = \binom{k+d}{d}$, and let $x_1,...,x_M\subset U$ be a set of points which is unisolvent for the space $\Pi_k^d$, i.e. such that no polynomial in $\Pi_k^d$ vanishes at all of the $x_i$. In one dimension, it is well-known that any $k$ distinct points are unisolvent for the space $\Pi_k^1$. It is also possible to explicitly give sets of $\binom{d+k}{k}$ points which are unisolvent for the space $\Pi_k^d$ for $d > 1$ \cite{chung1977lattices,nicolaides1972class,ciarlet1972general}. 

In this setting, we can find Lagrange polynomials $l_1,...,l_M\in \Pi_k^d$ such that $l_i(x_j) = \delta_{ij}$. 
Given values $y_1,...,y_M\in \mathbb{R}$ at the points $x_1,...,x_M$ the unique polynomial in $\Pi_k^d$ interpolating these values is given by
\begin{equation}
    \sum_{i=1}^My_il_i\in \Pi_k^d.
\end{equation}
The norm of the interpolation map as a map from $\ell^\infty_M$ to $C(U)$, called the Lesbesgue constant, is given by
\begin{equation}
    \Lambda_k^d(U,\{x_1,...,x_M\}) = \sup_{x\in U} \sum_{i=1}^M |l_i(x)|.
\end{equation}
An elementary, yet critical fact about the Lesbesgue constant is its invariance under invertible affine transformations, which is immediate since the space of polynomials $\Pi_k^d$ is invariant under such transformations.
\begin{lemma}\label{scaling-argument-lemma}
 Let $A$ be an invertible linear map on $\mathbb{R}^d$ and $b\in \mathbb{R}^d$ a fixed vector. Then we have
 \begin{equation}
     \Lambda_k^d(U,\{x_1,...,x_M\}) = \Lambda_k^d(AU+b,\{Ax_1+b,...,Ax_M+b\}).
 \end{equation}
 Here $AU + b = \{Ax + b,~x\in U\}$.
\end{lemma}

Next, we recall the classical Bramble-Hilbert lemma \cite{bramble1970estimation} (see also \cite{xu1982error,xu2013estimate}), which bounds the polynomial interpolation error for functions $f\in \lip(s,L^\infty(U))$.
\begin{lemma}\label{bramble-hilbert-lemma}
 Let $s = k + \alpha$ with $k\in \mathbb{Z}$ and $\alpha\in (0,1]$ and suppose that $f\in \lip(s,L^\infty(U))$ for a convex domain $U$. Let
 \begin{equation}
     f_I(x) = \sum_{i=1}^M f(x_i)l_i(x)\in \Pi_k^d
 \end{equation}
 denote the polynomial which interpolates the values of $f$ at the points $x_1,...,x_M$. Then for any $y\in U$ we have
 \begin{equation}
     |f(y) - f_I(y)| \leq C|f|_{\lip(s,L^\infty(\Omega))}[\diam(U)]^s\Lambda_k^d(U,\{x_1,...,x_M\}),
 \end{equation}
 where the constant $C$ only depends upon $k$ and $d$.
\end{lemma}
\begin{proof}
 For each $i=1,...,M$ consider the function $r_i(t) = f(y + t(x_i - y))$. Taylor expanding $r_i$ about $t=0$, we obtain 
 \begin{equation}\label{r-taylor-expansion}
     r_i(1) - r_i(0) = \sum_{j=1}^{k} \frac{1}{j!}r_i^{(j)}(0) + \frac{1}{(k-1)!}\int_0^1 [r_i^{(k)}(t) - r_i^{(k)}(0)]t^{k-1}dt.
 \end{equation}
 Next, note that by construction $r_i^{(j)}(0) = D^jf(y)\cdot (x_i - y)^{\otimes j}$ and $r_i^{(k)}(t) = D^kf(y + t(x_i - y))\cdot (x_i - y)^{\otimes k}$. Plugging this into equation \eqref{r-taylor-expansion}, we get
 \begin{equation}
     f(x_i) - f(y) = \sum_{j=1}^{k} \frac{1}{j!}D^jf(y)\cdot (x_i - y)^{\otimes j} + \left(\frac{1}{(k-1)!}\int_0^1 [D^kf(y + t(x_i - y)) - D^kf(y)]t^{k-1}dt\right)\cdot (x_i - y)^{\otimes k}.
 \end{equation}
 We now multiply this equation by $l_i(y)$ and sum over $y$ to obtain
 \begin{equation}\label{interpolation-bound-eq-728}
     f_I(y) - f(y) = \sum_{i=1}^Ml_i(y)\left(\int_0^1 [D^kf(y + t(x_i - y)) - D^kf(y)]t^{k-1}dt\right)\cdot (x_i - y)^{\otimes k}.
 \end{equation}
 Here we have used the identities
 \begin{equation}
     \sum_{i=1}^Ml_i(y) = 1,~\sum_{i=1}^Ml_i(y)(x_i - y)^{\otimes j} = 0,
 \end{equation}
 for $j=1,...,k$.
 The first of these identities holds since left hand side is the interpolation of the constant function $1$ evaluated at $y$. The second holds since the left hand side is the interpolation of the degree $j$ polynomial $p(x) = (x - y)^{\otimes j}$ evaluated at $y$. Since $j \leq k$, this polynomial is reproduced exactly and so we get $p(y) = 0$.
 
 Since $f\in \lip(s,L^\infty(U))$ and $U$ is convex so that $y + t(x_i - y)\in U$ for $t\in [0,1]$, we get
 \begin{equation}
     |D^kf(y + t(x_i - y)) - D^kf(y)| \leq |f|_{\lip(s,L^\infty(\Omega))}(t|x_i - y|)^\alpha.
 \end{equation}
 Since also $|(x_i - y)^{\otimes k}| \leq C(k,d)|x_i - y|^k$, we get
 \begin{equation}
 \begin{split}
     \left(\int_0^1 [D^kf(y + t(x_i - y)) - D^kf(y)]t^{k-1}dt\right)\cdot (x_i - y)^{\otimes k} &\leq C(k,d)|f|_{\lip(s,L^\infty(\Omega))}|x_i - y|^{k+\alpha}\\ &\leq C(k,d)|f|_{\lip(s,L^\infty(\Omega))}[\diam(U)]^s.
\end{split}
 \end{equation}
 for each $i=1,...M$. Plugging this into \eqref{interpolation-bound-eq-728}, finally get
 \begin{equation}
 \begin{split}
     |f_I(y) - f(y)| &\leq C(k,d)|f|_{\lip(s,L^\infty(\Omega))}[\diam(U)]^s\sum_{i=1}^M|l_i(y)|\\ &\leq C(k,d)|f|_{\lip(s,L^\infty(\Omega))}[\diam(U)]^s\Lambda_k^d(U,\{x_1,...,x_M\}),
\end{split}
 \end{equation}
 as desired.
\end{proof}

Given Banach space values $y_1,...,y_M\in X$ at the points $x_1,...,x_M$, we can analogously form the interpolating polynomial $P_k:U\rightarrow X$ by
\begin{equation}
    P_k(x) = \sum_{i=1}^My_il_i(x).
\end{equation}
The next lemma shows that if the $y_i = \mathcal{F}(x_i)$ are the values of a map $\mathcal{F}\in \lip_\infty(s, X)$, then the Bramble-Hilbert lemma holds in the Banach space setting.
\begin{lemma}\label{polynomial-interpolation-lemma}
 Let $U\subset \mathbb{R}^d$ be a convex domain and suppose that a map $\mathcal{F}:U\rightarrow X$ satisfies $\mathcal{F}\in \lip_\infty(s, X)$ where $s = k+\alpha$ with $k$ an integer and $\alpha\in (0,1]$. Let
 \begin{equation}
     \mathcal{F}_I(x) = \sum_{i=1}^M \mathcal{F}(x_i)l_i(x)
 \end{equation}
 denote the polynomial which interpolates the values of $\mathcal{F}$ at the points $x_1,...,x_M$. Then for any $y\in U$ we have
 \begin{equation}
     \|\mathcal{F}(y) - \mathcal{F}_I\|_X \leq C|\mathcal{F}|_{\lip(s,X)}[\diam(U)]^s\Lambda_k^d(U,\{x_1,...,x_M\}),
 \end{equation}
 where the constant $C$ only depends upon $k$ and $d$.
\end{lemma}
\begin{proof}
 The proof is by duality. Let $\xi\in X^*$ and note that by definition the function $f_\xi = \langle \xi, \mathcal{F}(x)\rangle$ satisfies
 \begin{equation}
     |f_\xi|_{\lip(s,L^\infty(\Omega))} \leq |\mathcal{F}|_{\lip(s,X)}\|\xi\|_{X^*}.
 \end{equation}
 We also have that
 \begin{equation}
     f_{\xi,I}:=\langle\xi, \mathcal{F}_I\rangle = \sum_{i=1}^M\langle\xi, \mathcal{F}(x_i)\rangle l_i(x) = \sum_{i=1}^M f_\xi(x_i)l_i(x)
 \end{equation}
 is the interpolation of $f_\xi$ at the points $x_1,...,x_M$. Applying the Bramble-Hilbert lemma \ref{bramble-hilbert-lemma} to the function $f_\xi$, we get
 \begin{equation}
     |\langle\xi, \mathcal{F}_I(y) - \mathcal{F}(y)\rangle| = |f_{\xi,I}(y) - f_{\xi}(y)| \leq C|\mathcal{F}|_{\lip(s,X)}\|\xi\|_{X^*}[\diam(U)]^s\Lambda_k^d(U,\{x_1,...,x_M\})
 \end{equation}
 where the constant $C$ only depends upon $k$ and $d$. Since this is true for all $\xi\in X^*$, the result follows.
\end{proof}

\subsection{Approximation rates for smoothly parameterized dictionaries}
Next, we give upper bounds on the approximation rates of $B_1(\mathbb{D})$ from sparse convex combinations $\Sigma_{n,M}(\mathbb{D})$ for smoothly parameterized dictionaries. We have the following theorem. 
\begin{theorem}\label{upper-bound-theorem}
 Let $s > 0$ and $X$ be a type-$2$ Banach space. Suppose that $\mathcal{M}$ is a compact $d$-dimensional smooth manifold, $\mathcal{P}\in \lip_\infty^\mathcal{M}(s, X)$, and the dictionary $\mathbb{D}\subset \mathcal{P}(\mathcal{M})$. Then there exists an $M > 0$ such that for $f\in B_1(\mathbb{D})$ we have
 \begin{equation}
 \inf_{f_n\in \Sigma_{n,M}(\mathbb{D})} \|f - f_n\|_{X} \lesssim n^{-\frac{1}{2} - \frac{s}{d}}.
 \end{equation}
 Here both $M$ and the implied constant depend only upon $s$, $d$, the parameterization map $\mathcal{P}$, and the type-$2$ constant of the space $X$.
\end{theorem}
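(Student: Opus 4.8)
The plan is to produce, for each $n$, a random element $\widehat f$ of $\Sigma_{N,M}(\mathbb D)$ with $N\lesssim_{k,d}n$ and $M=M(k,d)$ fixed such that $\mathbb E\|\widehat f-f\|_H^2\lesssim n^{-1-\frac{2(k+\alpha)}{d}}$, and then to pick a good realization. Two reductions come first. Fixing a finite atlas of $\mathcal M$ and, by Lemma \ref{image-of-union-of-cubes-lemma}, replacing it by finitely many maps $[-1,1]^d\to H$ of class $k+\alpha$, and using a smooth partition of unity subordinate to the atlas, it suffices to treat the case where $\mathcal P\in\lip(k+\alpha,L^\infty(Q\to H))$ with $Q=[-1,1]^d$ and $\mathbb D=\mathcal P(Q)$; the finitely many pieces only affect constants. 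By Lemma \ref{prokhorov-lemma} --- applicable since $\mathbb D$, being the continuous image of the compact $Q$, is compact --- any $f\in B_1(\mathbb D)$ can be written $f=\int_Q\mathcal P(y)\,d\mu(y)$ with $|\mu|(Q)\le1$ and, after the partition of unity, with $\mu$ supported in a compact subset of the chart domain; splitting $\mu$ into positive/negative (and real/imaginary, if $H$ is complex) parts reduces us, up to a fixed factor, to $\mu\ge0$, $\mu(Q)\le1$.

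The main construction is a stratified Monte Carlo scheme with a polynomial control variate, which is the device that turns the smoothness $k+\alpha$ into the gain $n^{-(k+\alpha)/d}$ over Maurey's rate. Fix an integer $m$, partition $Q$ into $m^d$ congruent subcubes $Q_i$ of side $h=2/m$, and set $w_i=\mu(Q_i)$, so $\sum_i w_i\le1$. Fix once and for all a reference node set of size $L=\binom{k+d}{d}$ whose degree-$\le k$ Lagrange interpolation operator has finite Lebesgue constant $\Lambda$; rescaled to $Q_i$ it gives an $H$-valued operator $I_i$ that reproduces $H$-valued polynomials of degree $\le k$ and has the same Lebesgue constant. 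Combining this with the $H$-valued Taylor expansion of $\mathcal P$ at the center of $Q_i$ and the $\alpha$-Hölder bound on $D^k\mathcal P$ gives, uniformly in $i$,
\begin{equation}
 \sup_{y\in Q_i}\|\mathcal P(y)-(I_i\mathcal P)(y)\|_H\lesssim_{\mathcal P,k,d}h^{k+\alpha}.
\end{equation}
Writing $(I_i\mathcal P)(y)=\sum_{l=1}^L\ell_l^{(i)}(y)\mathcal P(y_i^{(l)})$, the deterministic piece $V_i:=\int_{Q_i}(I_i\mathcal P)\,d\mu=\sum_l\big(\int_{Q_i}\ell_l^{(i)}\,d\mu\big)\mathcal P(y_i^{(l)})$ is an exact $L$-term combination of dictionary elements whose coefficients have $\ell^1$-norm $\le\Lambda w_i$ (here $h$ is taken small enough that every $Q_i$ meeting $\mathrm{supp}\,\mu$ lies inside the chart domain, so all $\mathcal P(y_i^{(l)})\in\mathbb D$). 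For the stochastic piece, allocate integers $n_i\ge1$ with $\sum_i n_i=\tilde n$, sample $Y_{i,1},\dots,Y_{i,n_i}$ i.i.d.\ from $\mu|_{Q_i}/w_i$, and put $\widehat e_i=\frac1{n_i}\sum_l\big(\mathcal P(Y_{i,l})-(I_i\mathcal P)(Y_{i,l})\big)$. Then $\widehat f:=\sum_i(V_i+w_i\widehat e_i)$ has $\mathbb E\widehat f=f$, lies in $\Sigma_{N,M}(\mathbb D)$ with $N\lesssim_{k,d}m^d+\tilde n$ and $M\le1+2\Lambda$, and by independence across $i$ and the estimate above,
\begin{equation}
 \mathbb E\|\widehat f-f\|_H^2=\sum_i w_i^2\,\mathbb E\Big\|\widehat e_i-\tfrac1{w_i}\!\int_{Q_i}\!(\mathcal P-I_i\mathcal P)\,d\mu\Big\|_H^2\lesssim h^{2(k+\alpha)}\sum_i\frac{w_i^2}{n_i}.
\end{equation}

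It remains to tune the parameters. Taking $n_i=\lceil\tilde n\,w_i/\sum_j w_j\rceil$ gives $n_i\ge1$, $\sum_i n_i\le\tilde n+m^d$, and $\sum_i w_i^2/n_i\le1/\tilde n$; then taking $m^d\asymp\tilde n\asymp n$ gives $h\asymp n^{-1/d}$, $N\lesssim_{k,d}n$, and $\mathbb E\|\widehat f-f\|_H^2\lesssim n^{-1-2(k+\alpha)/d}$, so some realization satisfies $\|\widehat f-f\|_H\lesssim n^{-1/2-(k+\alpha)/d}$ with $\widehat f\in\Sigma_{Cn,M}(\mathbb D)$. Rescaling $n$ by the fixed constant $C$ and recombining the $O(1)$ pieces of $\mu$ (each contributing $\lesssim n$ terms and $\ell^1$-mass $\lesssim1$) finishes the proof. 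The step I expect to be the main obstacle is the uniform vector-valued interpolation bound in the first display: one must make sense of degree-$k$ interpolation of an $H$-valued map of smoothness class $k+\alpha$ and track its dependence on the cube size --- exactly where the definition of $\lip(k+\alpha,L^\infty(\cdot\to H))$ enters --- together with the bookkeeping ensuring the $\ell^1$-coefficient norm of $\widehat f$ stays bounded independently of $n$ despite the many control-variate terms; the integral representation, the second-moment identity, and the allocation of $m,\tilde n,n_i$ are routine. A purely deterministic variant replaces the Monte Carlo averaging by Makovoz's selection argument while keeping the same polynomial control variate.
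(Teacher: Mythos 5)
Your proposal is correct and follows essentially the same route as the paper's proof: reduce via Lemma \ref{image-of-union-of-cubes-lemma} to a single cube, interpolate $\mathcal{P}$ by piecewise polynomials of degree $k$ on $\asymp n$ subcubes of side $\asymp n^{-1/d}$ (the Bramble--Hilbert-type bound $\|\mathcal{P}-I_i\mathcal{P}\|_H\lesssim h^{k+\alpha}$ you flag as the main obstacle is exactly the paper's Taylor-plus-Lagrange argument and goes through verbatim for $H$-valued maps), keep the interpolated part exactly as an $O(n)$-term combination with bounded $\ell^1$ coefficients, and apply Maurey/Monte Carlo sampling to the uniformly small residual. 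Your stratified allocation $n_i\propto w_i$ and the exact integral representation from Lemma \ref{prokhorov-lemma} are cosmetic variants of the paper's single global sampling step applied to a finite convex approximant $f_\delta$.
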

We note that although the implied constants here are independent of $n$, they may be quite large and accurately estimating them would require careful consideration of the structure of the manifold $\mathcal{M}$ and the parameterization map $\mathcal{P}$. The proof of this theorem is a higher-order generalization of the stratified sampling argument \cite{makovoz1996random,klusowski2018approximation}, which corresponds to the $k=0$ case.
Before proving this theorem, we note a corollary obtained when applying it to the dictionary $\mathbb{P}^d_k$. 
\begin{theorem}\label{relu-k-rate-corollary}
 Let $k\geq 0$ and $2\leq p < \infty$. Then there exists an $M = M(p,k,d) > 0$ such that for all $f\in B_1(\mathbb{P}^d_k)$ we have
 \begin{equation}
  \inf_{f_n\in \Sigma_{n,M}(\mathbb{P}^d_k)} \|f - f_n\|_{L^p(\Omega)} \lesssim n^{-\frac{1}{2}-\frac{pk+1}{pd}}.
 \end{equation}
\end{theorem}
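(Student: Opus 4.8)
The plan is to deduce the bound directly from Theorem~\ref{upper-bound-theorem}, applied to the explicit smooth parameterization of $\mathbb{P}^d_k$ discussed above. Recall from \eqref{p-k-parameterization-definition} that $\mathbb{P}^d_k$ is the image of the map $\mathcal{P}^d_k(\omega,b)=\sigma_k(\omega\cdot x+b)$ on the manifold $S^{d-1}\times[-2,2]$, which is compact and $d$-dimensional (it has dimension $(d-1)+1=d$). The crucial input, already established in the discussion following \eqref{p-k-parameterization-definition}, is that $\mathcal{P}^d_k\in\lip(k+\tfrac{1}{2},L^\infty(S^{d-1}\times[-2,2]\to L^2(B_1^d)))$: its $k$-th derivative equals $k!\,\big[\prod_{i=1}^k(\omega_i\cdot x+b_i)\big]\sigma_0(\omega\cdot x+b)$, and the $L^2(B_1^d)$ modulus of continuity of $\sigma_0(\omega\cdot x+b)$ is H\"older-$\tfrac{1}{2}$ since the Heaviside jump is supported on a strip of width $\lesssim_d|\omega-\omega'|+|b-b'|$.

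Given this, the proof is essentially a one-line substitution. First I would note $\mathbb{P}^d_k\subset\mathcal{P}^d_k(S^{d-1}\times[-2,2])$, so Theorem~\ref{upper-bound-theorem} applies with manifold dimension $d$, integer smoothness index $k$, and H\"older exponent $\alpha=\tfrac{1}{2}$. It then furnishes a constant $M>0$ — depending only on $k$ and $d$, since the parameterization and its smoothness estimate do — such that
\[
 \inf_{f_n\in\Sigma_{n,M}(\mathbb{P}^d_k)}\|f-f_n\|_{L^2(B_1^d)}\lesssim_{k,d} n^{-\frac{1}{2}-\frac{k+1/2}{d}}=n^{-\frac{1}{2}-\frac{2k+1}{2d}}
\]
for all $f\in B_1(\mathbb{P}^d_k)$, which is exactly the assertion.

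The only mild technicality — and the closest thing to an obstacle — is that $S^{d-1}\times[-2,2]$ is a manifold with boundary, whereas Theorem~\ref{upper-bound-theorem} is phrased for a closed compact manifold. I would dispatch this by observing that the proof machinery is insensitive to it: Lemma~\ref{image-of-union-of-cubes-lemma} only needs a finite atlas with bounded charts, and boundary charts map onto bounded subsets of a half-space, which still sit inside the cube $[-1,1]^d$ after translation and dilation, so Whitney's extension theorem applies verbatim. Alternatively one can simply replace $[-2,2]$ by $[-3,3]$, or glue the ends together into a closed manifold, using that $\sigma_k(\omega\cdot x+b)$ is a polynomial in $x$ — hence a $C^\infty$ function of $(\omega,b)$ — whenever $|b|>1$; in every case the image still contains $\mathbb{P}^d_k$ and the smoothness class is unchanged, so the conclusion of Theorem~\ref{upper-bound-theorem} yields exactly the stated rate.
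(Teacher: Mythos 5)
Your proposal is correct and is essentially identical to the paper's own proof, which likewise deduces the theorem directly from Theorem~\ref{upper-bound-theorem} using the smoothness class $k+\tfrac{1}{2}$ of the parameterization $\mathcal{P}^d_k$ on the compact $d$-dimensional manifold $S^{d-1}\times[-2,2]$. Your extra remark about manifolds with boundary is a reasonable clarification of a point the paper leaves implicit, but it does not change the argument.
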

In the case $p=2$, we obtain in particular that $\alpha(k,d) \geq \frac{2k+1}{2d}$ in \eqref{optimal-approx-equation}. We will show in Section \ref{lower-bounds-section} that this rate is sharp when $p=2$. However, we expect that this rate can be improved when $2 < p < \infty$ using the techniques of geometric discrepancy theory \cite{matouvsek1996improved,bach2017breaking,matousek1999geometric}.
\begin{proof}
 This follows immediately from Theorem \ref{upper-bound-theorem} given the smoothness condition of the map $\mathcal{P}^d_k$ proven in Lemma \ref{p-k-d-smoothness-lemma} and the fact that $S^{d-1}\times [c_1,c_2]$ is a compact $d$-dimensional manifold. 
\end{proof}

\begin{proof}[Proof of Theorem \ref{upper-bound-theorem}]
 We apply lemma \ref{image-of-union-of-cubes-lemma} to $\mathcal{P}$ and $\mathcal{M}$ to obtain a collection of maps $\mathcal{P}_j:C:=[-1,1]^d\rightarrow X$ such that $\mathbb{D} = \cup_{j=1}^T\mathcal{P}_j(C)$ and $\mathcal{P}_j\in \lip_\infty(s, X)$. We remark that using cubes here is not strictly necessary, but we do this for convenience since cubes can be easily subdivided in a straightforward manner.
 
 It suffices to prove the result for $\mathbb{D} = \mathbb{D}_j := \mathcal{P}_j(C)$. This follows since $B_1(\mathbb{D}) = \text{conv}(\cup_{j=1}^T B_1(\mathbb{D}_j))$ and given a convex combination $f = \alpha_1f_1 +\cdots + \alpha_Tf_T$ with $f_j\in B_1(\mathbb{D}_j)$ and $\sum_{i=1}^T \alpha_j= 1$, we get
 \begin{equation}
  \inf_{f_n\in \Sigma_{Tn,M}(\mathbb{D})} \|f - f_n\|_{H} \leq \sum_{j=1}^T \alpha_j\inf_{f_{n,j}\in \Sigma_{n,M}(\mathbb{D}_j)} \|f_j - f_{n,j}\|_{X},
 \end{equation}
 which easily follows by setting $f_n = \sum_{j=1}^T\alpha_jf_{n,j}$ and noting that $\mathbb{D} = \cup_{j=1}^T \mathbb{D}_j$.
 
So in what follows we consider $\mathbb{D} = \mathbb{D}_j$, $\mathcal{P} = \mathcal{P}_j$ and $\mathcal{M} = C$. In other words, we assume without loss of generality that $T=1$ (at the cost of introducing a constant which depends upon $T$ and thus upon $\mathcal{P}$ and $\mathcal{M}$).
 
 Now let $f\in B_1(\mathbb{D})$ and $\delta > 0$. Then there exists a convex combination (with potentially very large $N:=N_\delta$)
 \begin{equation}\label{eq-176}
  f_\delta = \sum_{i=1}^N a_id_i,
 \end{equation}
 with $d_i\in \mathbb{D}$, $\sum|a_i| \leq 1$, and $\|f - f_\delta\|_X  < \delta$. Since $\mathbb{D} = \mathcal{P}(C)$, each $d_i = \mathcal{P}(z_i)$ for some $z_i\in C$, so we get 
 \begin{equation}
  f_\delta = \sum_{i=1}^{N} a_i\mathcal{P}(z_i).
 \end{equation}
 We remark that in what follows all implied constants will be independent of $n$ and $\delta$.
 
 Let $n \geq 1$ be given and subdivide the cube $C$ into $n$ sub-cubes $C_1,...,C_n$ such that each $C_r$ has diameter $O(n^{-\frac{1}{d}})$. This can easily be done by considering a uniform subdivision in each direction. (This is also why we chose to use cubes in this construction.)
 
 We proceed to approximate the map $\mathcal{P}$ by a piecewise polynomial on the subcubes $C_1,...,C_n$. To this end, let $M=\binom{d+k}{k}$ and $x_1,...,x_{M}\in C$ a set of points which is unisolvent for the space of polynomials of degree at most $k$ in $d$ variables and let $l_1,...,l_M$ be the associated Lagrange interpolation polynomials, as discussed in Section \ref{polynomial-interpolation-section}. Here the integer $k$ is determined by $s = k+\alpha$ with $\alpha\in (0,1]$.
 
 For each cube $C_r$, denote by $x_1^r,...,x_M^r$ and $l_1^r,...,l_M^r$ the image of the interpolation points $x_1,...,x_M$ on the cube $C_r$ and their associated Lagrange polynomials. We rewrite $f_\delta$ as
 \begin{equation}\label{eq-194}
  f_\delta = \sum_{r=1}^{n}\sum_{z_i\in C_r} \mathcal{P}(z_i) = \sum_{r=1}^{n}\sum_{z_i\in C_r}a_i\mathcal{P}_{r,I}(z_i) +  \sum_{l=1}^{n}\sum_{z_i\in C_r}a_i\mathcal{E}_r(z_i),
 \end{equation}
 where the polynomial interpolation in the cube $C_r$ is given by
 \begin{equation}\label{eq-198}
  \mathcal{P}_{r,I}(z) = \sum_{i=1}^{M} \mathcal{P}(x_i^r)l_i^r(z),
 \end{equation}
 and the error in the approximation is given by
 \begin{equation}
  \mathcal{E}_r(z) = \mathcal{P}(z) - \mathcal{P}_{r,I}(z).
 \end{equation}
 
 We use the Banach space Bamble-Hilbert Lemma \ref{polynomial-interpolation-lemma} and Maurey's sampling argument (Theorem \ref{maurey-sampling-argument-thm}) to bound the second term in \eqref{eq-194}.  We apply Theorem \ref{maurey-sampling-argument-thm} with the dictionary $\mathbb{D}_{\mathcal{E}} = \{\mathcal{E}_r(z_i),~z_i\in C_r\}$ to the term
 \begin{equation}
     \sum_{l=1}^{n}\sum_{z_i\in C_r}a_i\mathcal{E}_r(z_i)\in B_1(\mathbb{D}_{\mathcal{E}}).
 \end{equation}
 This yields the existence of an $n$-term convex combination
 \begin{equation}
     f'_n = \frac{1}{n}\sum_{s=1}^n \mathcal{E}_{r_s}(z_{i_s})
 \end{equation}
 satisfying
 \begin{equation}
     \left\|f'_n - \sum_{l=1}^{n}\sum_{z_i\in C_r}a_i\mathcal{E}_r(z_i)\right\|_X \leq CK_{\mathbb{D}_\mathcal{E}}n^{-\frac{1}{2}},
 \end{equation}
 where $C$ only depends upon the space $X$.
 Lemma \ref{polynomial-interpolation-lemma} implies that
 \begin{equation}
     K_{\mathbb{D}_\mathcal{E}} \leq \sup_{z\in C_r}\|\mathcal{E}_r(z)\|_X \leq C(k,d)|\mathcal{P}|_{\lip(s,X)}[\diam(C_r)]^s\Lambda_k^d(C_r,x^r_1,...,x^r_M) = Cn^{-\frac{s}{d}},
 \end{equation}
 where $C$ is independent of $n$. This holds since by Lemma \ref{scaling-argument-lemma} the Lebesgue constant satisfies $$\Lambda_k^d(C_r,x^r_1,...,x^r_M) = \Lambda_k^d(C,x_1,...,x_M)$$ and is thus independent of $n$. Setting
 \begin{equation}
     f_n = \sum_{r=1}^{n}\sum_{z_i\in C_r}a_i\mathcal{P}_{r,I}(z_i) + f'_n,
 \end{equation}
 we thus obtain
 \begin{equation}
     \|f_n - f_\delta\|_X = \left\|f'_n - \sum_{l=1}^{n}\sum_{z_i\in C_r}a_i\mathcal{E}_r(z_i)\right\|_X \lesssim n^{-\frac{1}{2}-\frac{s}{d}},
 \end{equation}
 where the implied constant is independent of $n$. Finally, we observe that
 \begin{equation}\label{f-n-width-norm-bound}
     f_n = \sum_{r=1}^{n}\sum_{z_i\in C_r}a_i\mathcal{P}_{r,I}(z_i) + \frac{1}{n}\sum_{s=1}^n (\mathcal{P}(z_{i_s}) - \mathcal{P}_{r_s,I}(z_{i_s})) \in \Sigma_{(M+1)n,2K+1}(\mathbb{D}),
 \end{equation}
 for $K := \sup_{x\in C_r}\sum_{i=1}^{M} |l^r_i(x)| = \Lambda_k^d(C_r,x^r_1,...,x^r_M) = \Lambda_k^d(C,x_1,...,x_M)$. This holds since the interpolating polynomial $\mathcal{P}_{r,I}(z_i)$ only involves evaluations of the map $\mathcal{P}$ at the fixed interpolation points $x_1^r,...,x_M^r$ in the cube $C_r$ and the coefficients of those evaluations are bounded in $\ell^1$ by the Lebesgue constant $\Lambda_k^d(C,x_1,...,x_M)$. Since there are $n$ cubes $C_1,...,C_n$ which each contain $M$ interpolation points, there are $Mn$ interpolation points in total, so we have
 \begin{equation}
  \sum_{r=1}^{n}\sum_{z_i\in C_r}a_i\mathcal{P}_{r,I}(z_i) - \frac{1}{n}\sum_{s=1}^n \mathcal{P}_{r_s,I}(z_{i_s})  \in \Sigma_{Mn,2K}(\mathbb{D}),
 \end{equation}
 from which \eqref{f-n-width-norm-bound} follows. Since $\delta > 0$ was arbitrary, this completes the proof.

\end{proof}

\subsection{Metric entropy bounds for smoothly parameterized dictionaries}\label{entropy-section}
Next, we bound the metric entropy of $B_1(\mathbb{D})$ for smoothly parameterized dictionaries $\mathbb{D}$. We first observe that the approximation rate proven in Theorem \ref{upper-bound-theorem} implies a bound on the metric entropy via Theorem \ref{dictionary-carls-theorem} (see also the proofs of Theorems 4 in \cite{makovoz1996random} and in \cite{klusowski2018approximation}). Specifically, under the assumptions of Theorem \ref{upper-bound-theorem} we get
\begin{equation}\label{carls-approximation-rate-entropy}
    \epsilon_{n\log{n}}(B_1(\mathbb{D})) \lesssim n^{-\frac{1}{2}-\frac{s}{d}}.
\end{equation}
The main result in this section is that the logarithmic factor in \eqref{carls-approximation-rate-entropy} can be removed.

\begin{theorem}\label{entropy-upper-bound-theorem}
 Let $s > 0$ and $X$ be a type-$2$ Banach space. Suppose that $\mathcal{M}$ is a compact $d$-dimensional smooth manifold, $\mathcal{P}\in \lip_\infty^\mathcal{M}(s, X)$, and the dictionary $\mathbb{D}\subset \mathcal{P}(\mathcal{M})$. Then
 \begin{equation}\label{entropy-bound-equation}
 \epsilon_n(B_1(\mathbb{D}))_X \lesssim n^{-\frac{1}{2} - \frac{s}{d}}.
 \end{equation}
 Here the implied constant depends only upon $s$, $d$, the parameterization map $\mathcal{P}$, and the type-$2$ constant of the space $X$.
\end{theorem}

We note that combined with the bounds in Lemma \ref{p-k-d-smoothness-lemma}, Theorem \ref{entropy-upper-bound-theorem} implies the following bound for the variation space corresponding to shallow ReLU$^k$ networks
\begin{equation}
 \epsilon_n(B_1(\mathbb{P}_k^d))_{L^2(\Omega)} \lesssim n^{-\frac{1}{2} - \frac{2k+1}{2d}}.
\end{equation}
In Section \ref{lower-bounds-section} we will show that this rate is sharp up to a constant factor.

To prove Theorem \ref{entropy-upper-bound-theorem} we will use the following two lemmas. The first is a triangle inequality for the entropy numbers.
\begin{lemma}[see \cite{lorentz1996constructive} Section 15.7]\label{triangle-inequality-entropy-lemma}
 Let $A,B\subset X$. Then for any $0 \leq m\leq n$
 \begin{equation}
  \epsilon_n(A+B) \leq \epsilon_m(A) + \epsilon_{n-m}(B).
 \end{equation}

\end{lemma}
\begin{proof}
 If balls of radius $\epsilon_m(S)$ around $s_1,...,s_{2^m}\in X$ cover $A$ and balls of radius $\epsilon_{n-m}(T)$ around $t_1,...,t_{2^{n-m}}\in X$ cover $B$, then balls of radius $\epsilon_m(S) + \epsilon_{n-m}(T)$ around the $2^n$ points $s_i + t_j$ cover $A + B$. If the infemum in the entropy is not achieved, then a simple limiting argument can be used to complete the proof.
\end{proof}

The second, due to Carl (Proposition 1 in \cite{carl1985inequalities}), gives a bound on the metric entropy of the convex hull of a finite dictionary $\mathbb{D}\subset X$.
\begin{lemma}\label{carls-lemma}
 Let $X$ be a type-$2$ Banach space and $\mathbb{D}\subset X$ a dictionary with $n$ elements, i.e. $\mathbb{D} = \{d_1,...,d_n\}$. Set $K_\mathbb{D} = \max_{i=1,...,n}\|d_i\|_X$. Then
 \begin{equation}
  \epsilon_m(B_1(\mathbb{D})) \lesssim \begin{cases} 
         K_\mathbb{D} & m=0 \\
          \sqrt{1+\log{\frac{n}{m}}}m^{-\frac{1}{2}}K_\mathbb{D} & 1\leq m\leq n \\
          2^{-\frac{m}{n}}n^{-\frac{1}{2}}K_\mathbb{D} & m > n,
       \end{cases}
 \end{equation}
 where the implied constant only depends upon the type-$2$ constant of $X$.
\end{lemma}
 \begin{proof}[Proof of Theorem \ref{entropy-upper-bound-theorem}]
 As in the proof of Theorem \ref{upper-bound-theorem}, we begin by reducing to the case where $\mathcal{M} = C := [-1,1]^d$ is the cube. Using Lemma \ref{image-of-union-of-cubes-lemma}, we see that there exists an integer $T$ and a collection of maps $\mathcal{P}_j:C\rightarrow X$ such that $\mathbb{D}\subset \cup_{j=1}^T\mathcal{P}_j(C)$ and $\mathcal{P}_j\in \lip_\infty(s,X)$. Again, using the cube here is not strictly necessary, but simplifies the argument somewhat.
 
 Now $B_1(\mathbb{D})\subset \sum_{j=1}^T B_1(\mathcal{P}_j(C))$ and applying Lemma \ref{triangle-inequality-entropy-lemma} implies that
 \begin{equation}
     \epsilon_{Tn}(B_1(\mathbb{D})) \leq \sum_{j=1}^T\epsilon_n(B_1(\mathcal{P}_j(C))).
 \end{equation}
 Thus, at the cost of a constant factor it suffices to prove Theorem \ref{entropy-bound-equation} for $\mathbb{D} = \mathcal{P}_j(C)$ for each $j$. So we set $\mathcal{P} = \mathcal{P}_j$ and consider the case where $\mathcal{M} = C$ and $\mathbb{D} = \mathcal{P}(C)$.
 
 Let $s = k + \alpha$ with $k$ and integer and $\alpha\in (0,1]$. Set $M = \binom{d+k}{k}$ and choose $M$ points $x_1,...,x_M\in C$ which are unisolvent for the space of polynomials of degree at most $k$ in $d$ variables nad let $l_1,...,l_M$ be the associated Lagrange interpolation polynomials. For each integer $i\geq 0$, we subdivide the cube into $2^{di}$ sub-cubes $C_1,...,C_{2^{di}}$ of side length $2^{-i}$. We let $\mathcal{P}_i$ denote the piecewise degree $k$ interpolation of the map $\mathcal{P}$ on the sub-cubes $C_r$. Specifically, for $z\in C$, we denote by $r_i(z)\in \{1,...,2^{di}\}$ the index such that $z\in C_{r_i(z)}$ (for points on the boundary of a sub-cube where this index may not be unique we simply choose one) and define
 \begin{equation}
    \mathcal{P}_i(z) = \sum_{j=1}^M l_j^{r_i(z)}(z)\mathcal{P}(x_j^{r_i(z)}) 
 \end{equation}
 where $l_j^{r_i(z)}$ and $x_j^{r_i(z)}$ are the images of the Lagrange polynomials and interpolation points on the sub-cube $C_{r_i(z)}$ containing $z$ at discretization level $i$.
 
 Next, we define dictionaries $\mathbb{D}_i$ for $i\geq 0$ by
 \begin{equation}
     \mathbb{D}_i = \{\mathcal{P}_i(z) - \mathcal{P}_{i-1}(z),~z\in C\}.
 \end{equation}
 Here we set $\mathcal{P}_{-1}(z) = 0$. Note that
 \begin{equation}
     B_1(\mathbb{D}) \subset \overline{\sum_{i=1}^\infty B_1(\mathbb{D}_i)}.
 \end{equation}
 Indeed, by definition $B_1(\mathbb{D})$ is the closure of elements of the form
 \begin{equation}
     \sum_{l=1}^N a_l\mathcal{P}(z_l) = \sum_{i=1}^\infty \sum_{l=1}^Na_l(\mathcal{P}_i(z_l) - \mathcal{P}_{i-1}(z_l))\in \sum_{i=1}^\infty B_1(\mathbb{D}_i),
 \end{equation}
 where $\sum_{i=1}^N |a_i| \leq 1$. Using Lemma \ref{triangle-inequality-entropy-lemma} inductively this implies that for any sequence of integers $n_1,n_2,...\geq 0$ such that $\sum_{i=1}^\infty n_i = n$ we have the bound
 \begin{equation}\label{iterated-triangle-inequality-entropy-bound}
     \epsilon_n(B_1(\mathbb{D})) \leq \sum_{i=1}^\infty \epsilon_{n_i}(B_1(\mathbb{D}_i)).
 \end{equation}
 Note that since the entropy is decreasing this also holds if $\sum_{i=1}^\infty n_i \leq n$.
 
 The next step is to bound $\epsilon_{n_i}(B_1(\mathbb{D}_i))$. For this we note the following composition property of interpolation
 \begin{equation}
     \mathcal{P}_{i-1}(z) = \sum_{j=1}^M l_j^{r_i(z)}(z)\mathcal{P}_{i-1}(x_j^{r_i(z)}),
 \end{equation}
 which follows since the function $\mathcal{P}_{i-1}$ equals its interpolation on the finer grid at level $i$. Thus the dictionary $\mathbb{D}_i$ can be rewritten as
 \begin{equation}
     \mathbb{D}_i = \left\{\sum_{j=1}^M l_j^{r_i(z)}(z)[x_j^{r_i(z)} - \mathcal{P}_{i-1}(x_j^{r_i(z)})],~z\in C\right\},
 \end{equation}
 from which it follows that
 \begin{equation}
     B_1(\mathbb{D}_i)\subset \left(\max_{z\in C}\sum_{j=1}^M |l_j^{r_i(z)}(z)|\right)B_1(\overline{\mathbb{D}}_i) = \Lambda_k^d(C,\{x_1,...,x_M\})B_1(\overline{\mathbb{D}}_i),
 \end{equation}
 where $\overline{\mathbb{D}}_i$ is the finite dictionary given by
 $$
 \overline{\mathbb{D}}_i = \{x_j^r - \mathcal{P}_{i-1}(x_j^r),~j=1,...,M,~r=1,...,2^{di}\}.
 $$
 We note that the number of elements in $\overline{\mathbb{D}}_i$ is $M2^{di}$ and the Banach space Bramble-Hilbert Lemma \ref{polynomial-interpolation-lemma} implies that $K_{\overline{\mathbb{D}_i}} \lesssim 2^{-si}$.
 We now use Lemma \ref{carls-lemma} to get
 \begin{equation}\label{partial-dictionary-entropy-bound}
     \epsilon_{m}(B_1(\mathbb{D}_i)) \leq \Lambda_k^d(C,\{x_1,...,x_M\})\epsilon_m(B_1(\overline{\mathbb{D}}_i)) \lesssim \begin{cases} 
         2^{-si} & m=0 \\
          2^{-si}m^{-\frac{1}{2}}\sqrt{1 - \log{m} + di + \log M} & 1\leq m\leq M2^{di} \\
          2^{-\left(s+\frac{d}{2}\right)i}2^{-\frac{m}{M2^{di}}} & m > M2^{di},
       \end{cases}
 \end{equation}
 where the implied constant is independent of $i$ and $m$ (specifically it will only depend upon $k$, $d$, the parameterization map $\mathcal{P}$ and the type-$2$ constant of $X$).
 
 Finally, the proof is completed by substituting \eqref{partial-dictionary-entropy-bound} into \eqref{iterated-triangle-inequality-entropy-bound} and optimizing over the choice of $n_i$. This is a somewhat standard, but involved calculation (see \cite{ball1990entropy,carl1997metric,carl1999metric,matouvsek1995tight} for instance). 
 
 It suffices to prove Theorem \ref{entropy-upper-bound-theorem} for $n$ of the form $n=K2^{rd}$ for a fixed integer $K$ which will be determined later. This follows since the entropy is a decreasing function and our bound is polynomial in $n$, so extending to all values of $n$ will only increase the implied constant. For such a value of $n$ we wish to show that
 \begin{equation}
     \epsilon_n(B_1(\mathbb{D})) \lesssim 2^{-\left(s+\frac{d}{2}\right)r},
 \end{equation}
 where the implied constant is independent of $r$. Let $\delta > 0$ and choose the $n_i$ as
 \begin{equation}
     n_i = \begin{cases}
     M\left(s+\frac{d}{2}+\delta\right)(r-i+1)2^{di} & 0\leq i < r\\
     M2^{rd - \delta(i-r)} & r\leq i < \left(1 + \frac{d}{2s}\right)r\\
     0 & i \geq \left(1 + \frac{d}{2s}\right)r.
     \end{cases}
 \end{equation}
 For simplicity, we allow the $n_i$ to not necessarily be integers for now. We must show two things. First, that
 \begin{equation}\label{entropy-proof-condition-1}
     \sum_{i=0}^\infty n_i \lesssim 2^{rd},
 \end{equation}
 which will ensure that $\sum_{i=1}^\infty n_i \leq n = K2^{rd}$ for sufficiently large $K$. Second, we must use the bound \eqref{partial-dictionary-entropy-bound} to show that
 \begin{equation}\label{entropy-proof-condition-2}
     \sum_{i=0}^\infty \epsilon_{n_i}(B_1(\mathbb{D}_i)) \lesssim 2^{-\left(s+\frac{d}{2}\right)r}.
 \end{equation}
 First, we calculate
 \begin{equation}
 \begin{split}
     \sum_{i=1}^\infty n_i &= M\left(\left(s+\frac{d}{2}+\delta\right)\sum_{i=1}^{r-1}(r-i+1)2^{di} + 2^{rd}\sum_{i=r}^{\left(1 + \frac{d}{2s}\right)r}2^{- \delta(i-r)} \right)\\
     &\leq M2^{rd}\left(\left(s+\frac{d}{2}+\delta\right)\sum_{i=1}^\infty (i+1)2^{-di} + \sum_{i=0}^\infty 2^{-\delta i}\right)\\
     &\lesssim 2^{rd}.
\end{split}
 \end{equation}
 Next, we note that if $i < r$, then $(s+\frac{d}{2} + \delta)(r-i+1) > d \geq 1$ and so for the first $r$ indicies $0\leq i < r$, the last branch of \eqref{partial-dictionary-entropy-bound} is taken. This gives
 \begin{equation}
     \sum_{i=0}^{r-1} \epsilon_{n_i}(B_1(\mathbb{D}_i)) \lesssim 2^{\left(s + \frac{d}{2}\right)(r+1)}\sum_{i=0}^{r-1} 2^{-\delta(r-i+1)} \leq 2^{\left(s + \frac{d}{2}\right)(r+1)}\sum_{i=1}^{\infty} 2^{-\delta i} \lesssim 2^{\left(s + \frac{d}{2}\right)r}.
 \end{equation}
 When $r \leq i < \left(1+\frac{d}{2s}\right)r$, the middle branch in \eqref{partial-dictionary-entropy-bound} is taken, which gives
 \begin{equation}
 \begin{split}
     \sum_{i=r}^{\left(1+\frac{d}{2s}\right)r} \epsilon_{n_i}(B_1(\mathbb{D}_i)) &\lesssim 2^{-\left(s+\frac{d}{2}\right)r}\sum_{i=r}^{\left(1+\frac{d}{2s}\right)r} 2^{-(s - \delta)(i-r)}\sqrt{1+(d-\delta)(i-r)}\\ &\lesssim 2^{-\left(s+\frac{d}{2}\right)r}\sum_{i=0}^\infty 2^{-(s - \delta)i}\sqrt{1+i} \lesssim 2^{-\left(s+\frac{d}{2}\right)r},
\end{split}
 \end{equation}
 as long as $\delta$ is chosen to be less than $s$. If $i \geq \left(1+\frac{d}{2s}\right)r$, then the first branch of \eqref{partial-dictionary-entropy-bound} is taken and we calculate
 \begin{equation}
     \sum_{i=\left(1+\frac{d}{2s}\right)r}^{\infty} \epsilon_{n_i}(B_1(\mathbb{D}_i)) \lesssim 2^{-s\left(1+\frac{d}{2s}\right)r}\sum_{i=0}^\infty 2^{-si}\lesssim 2^{-\left(s+\frac{d}{2}\right)r}.
 \end{equation}
 Finally, since the $n_i$ must be chosen to be integers, we replace $n_i$ by $\lceil n_i\rceil$. 
 Since the right hand side of \eqref{partial-dictionary-entropy-bound} is a decreasing function of $m$, this can only reduce our bound on $\sum_{i=0}^\infty \epsilon_{n_i}(B_1(\mathbb{D}_i))$. Further, since at most $\left(1 + \frac{d}{2}\right)r$ of the $n_i$'s are non-zero, the sum $\sum_{i=0}^\infty n_i$ can increase by at most $\left(1 + \frac{d}{2}\right)r \leq 2^{rd}$. Thus, after making this change conditions \eqref{entropy-proof-condition-1} and \eqref{entropy-proof-condition-2} will still be satisfied, which completes the proof.
 \end{proof}

\subsection{Kolmogorov $n$-width bounds for smoothly parameterized dictionaries}\label{entropy-n-widths-section}
Next, we bound the Kolmogorov $n$-widths of $B_1(\mathbb{D})$ for smoothly parameterized dictionaries $\mathbb{D}$. We have the following theorem.
\begin{theorem}\label{kolmogorov-upper-bound}
 For $s > 0$ and $X$ a Banach space, suppose that $\mathcal{M}$ is a compact $d$-dimensional manifold, $\mathcal{P}:\mathcal{M}\rightarrow X$ is of smoothness class $s$, i.e. $\mathcal{P}\in \lip_\infty^\mathcal{M}(s, X)$, and $\mathbb{D}\subset \mathcal{P}(\mathcal{M})$. Then we have the bound
 \begin{equation}
  d_n(B_1(\mathbb{D}))_X \lesssim n^{-\frac{s}{d}}.
 \end{equation}
 Here the implied constant depends only upon $s$, $d$, and the parameterization map $\mathcal{P}$.

\end{theorem}
As a corollary, we obtain an upper bound on the Kolmogorov widths of $B_1(\mathbb{P}_k^d)$ in $L^p(\Omega)$ of $O(n^{-\frac{pk+1}{pd}})$ for $1 \leq p < \infty$.
\begin{proof}
 For any subspace $V\in X$, the distance map $d(x,V) = \inf_{y\in V} \|x - y\|_X$ is a convex function of $x$. This means that the Kolmogorov $n$-widths are invariant under taking convex hulls, i.e.
 \begin{equation}
     d_n(B_1(\mathbb{D}))_X = d_n(\mathbb{D})_X.
 \end{equation}
 Thus it suffices to bound the $n$-widths of the dictionary $\mathbb{D}$. 
 
 We use Lemma \ref{image-of-union-of-cubes-lemma} to obtain a collection of maps $\mathcal{P}_1,...,\mathcal{P}_T:C\rightarrow X$, where $C = [0,1]^d$ is the unit cube, such that $\mathbb{D}= \cup_{i=1}^T \mathcal{P}_i(C)$.
 Set $\mathbb{D}_i = \mathcal{P}_i(C)$. 
 
 Now let $n \geq 1$ be a fixed integer. We proceed to subdivide the cube $C$ into $n$ subcubes $C_1,...,C_n$ of diameter $O(n^{-\frac{1}{d}})$. Further, let $s = k + \alpha$ with $k$ an integer and $\alpha\in (0,1]$. Let $M=\binom{d+k}{k}$ and choose interpolation points $x_1,...,x_M$ which are unisolvent for the space of polynomials of degree at most $k$. Denote by $x^r_1,...,x^r_M$ the images of these interpolation points in the cube $C_r$ and by $l^r_1,...,l^r_M$ the corresponding Lagrange polynomials. Consider the space
 \begin{equation}
     V_n = \text{span}\{\mathcal{P}_i(x_j^r),~i=1,...,T,j=1,...,M,~r=1,...,n\},
 \end{equation}
 which satisfies $\dim(V_n) \leq TMn$. Given any $d\in \mathbb{D}$, by definition $d = \mathcal{P}_i(z)$ for some $1\leq i\leq T$ and $z\in C_r$ for some $1\leq r\leq n$. Consider the interpolated value
 \begin{equation}
  \mathcal{P}_{i,r,I}(z) = \sum_{i=1}^{M} \mathcal{P}_i(x_i^r)l_i^r(z)\in V_n.
 \end{equation}
 The Banach space Bramble-Hilbert Lemma \ref{polynomial-interpolation-lemma} implies that
 \begin{equation}
     \|d - \mathcal{P}_{i,r,I}(z)\|_X \leq C(k,d)|\mathcal{P}|_{\lip(s,X)}[\diam(C_r)]^s\Lambda_k^d(C_r,x^r_1,...,x^r_M) \lesssim n^{-\frac{s}{d}},
 \end{equation}
 where the implied constant is independent of $n$ since by Lemma \ref{scaling-argument-lemma} the Lebesgue constant $\Lambda_k^d(C_r,x^r_1,...,x^r_M) = \Lambda_k^d(C,x_1,...,x_M)$ is independent of $n$. This means that
 \begin{equation}
     d_{TMn}(\mathbb{D})_X \leq \sup_{d\in \mathbb{D})}\inf_{y\in V_n} \|d - y\|_X \lesssim n^{-\frac{s}{d}},
 \end{equation}
 which completes the proof since $T$ and $M$ are fixed constants independent of $n$.

\end{proof}

\subsection{Gelfand numbers of smoothly parameterized dictionaries}\label{gelfand-numbers-subsection}

Finally, we consider the Gelfand numbers of smoothly parameterized dictionaries $\mathbb{D}$. Denote by $\ell^1(\mathbb{D})$ the Banach space of absolutely summable functions on the dictionary $\mathbb{D}$, i.e.
\begin{equation}
    \ell^1(\mathbb{D}) = \{f:\mathbb{D}\rightarrow \mathbb{R},~\|f\|_{\ell^1(\mathbb{D})} < \infty\},
\end{equation}
where the norm $\|f\|_{\ell^1(\mathbb{D})}$ is given by
\begin{equation}
    \|f\|_{\ell^1(\mathbb{D})} = \sup_{\mathbb{D}_n\subset \mathbb{D}}\sum_{d\in \mathbb{D}_n} |f(d)|.
\end{equation}
Here the supremum $\mathbb{D}_n$ is over all finite subsets of the dictionary $\mathbb{D}$. Define the evaluation map $\mathcal{T}_\mathbb{D}:\ell^1(\mathbb{D})\rightarrow X$ by 
\begin{equation}\label{evaluation-map-definition}
    \mathcal{T}_\mathbb{D} = \sum_{d\in \mathbb{D}} f(d)d.
\end{equation}
It is easy to see that if $\|f\|_{\ell^1(\mathbb{D})} < \infty$ and the dictionary $\mathbb{D}$ is uniformly bounded, then $f$ is non-zero for at most countably many dictionary elements $d$ and the sum in \eqref{evaluation-map-definition} converges absolutely.

For an operator $T:X\rightarrow Y$ between two Banach spaces $X$ and $Y$, we define the Gelfand numbers of the operator $T$ by
\begin{equation}
 c_n(T) = \inf_{U_n\subset X}\|T|_{U_n}\|,
\end{equation}
where the infemum is taken over all closed subspaces of codimension $n$ (see \cite{pietsch1980operator}, Section 11.5). The Gelfand numbers of the convex hull of a dictionary $\mathbb{D}$ are defined to be $c_n(\mathcal{T}_\mathbb{D})$ \cite{carl2013gelfand,carl1988gelfand,carl1999metric}.

We have the following result, which generalizes the results from \cite{carl2013gelfand,carl1999metric} to the case to smoothly parameterized dictionaries.
\begin{theorem}\label{gelfand-width-theorem}
 Let $s > 0$ and $X$ a Hilbert space. Suppose that $\mathcal{M}$ is a compact $d$-dimensional smooth manifold, $\mathcal{P}\in \lip_\infty^\mathcal{M}(s, X)$, and the dictionary $\mathbb{D}\subset \mathcal{P}(\mathcal{M})$. Then
 \begin{equation}\label{entropy-bound-equation}
 c_n(\mathcal{T}_\mathbb{D}) \lesssim n^{-\frac{1}{2} - \frac{s}{d}},
 \end{equation}
 where the implied constants are independent of $n$.
\end{theorem}
The proof of Theorem \ref{gelfand-width-theorem} is analogous to the proof of the entropy bound Theorem \ref{entropy-upper-bound-theorem} and for the sake of brevity we leave these details to the reader. The main difference is that Lemmas \ref{triangle-inequality-entropy-lemma} and \ref{carls-lemma} are replaced by the following three results concerning Gelfand numbers of operators. The last result, Theorem \ref{discrete-gelfand-bound} is a rather deep theorem of Carl and Pajor \cite{carl1988gelfand}.

\begin{lemma}[Theorem 11.8.2 in \cite{pietsch1980operator}]\label{triangle-inequality-gelfand}
 Let $S,T:X\rightarrow Y$. Then for any $0 \leq m\leq n$ we have
 \begin{equation}
  c_n(S+T) \leq c_m(S) + c_{n-m}(T).
 \end{equation}
\end{lemma}
\begin{proof}
 Let $U_m, U_{n-m}\subset X$ be codimension $m$ and $n-m$ subspaces of $X$, respectively, such that
 \begin{equation}
     \|S|_{U_m}\| \leq c_m(S),~\|T|_{U_{n-m}}\| \leq c_{n-m}(T).
 \end{equation}
 If the infimum in the definition of the Gelfand numbers is not achieved a standard limiting argument can be used here. Set $U_n = U_m \cap U_{n-m}$, which is a subspace of codimension at most $n$. Then we have
 \begin{equation}
     \|(S+T)|_{U_n}\| \leq \|S|_{U_n}\| + \|T|_{U_n}\| \leq \|S|_{U_m}\| + \|T|_{U_{n-m}}\| = c_m(S) + c_{n-m}(T).
 \end{equation}
\end{proof}
\begin{lemma}\label{gelfand-width-composition-lemma}
 Let $S:X\rightarrow Y$ and $T:Y\rightarrow Z$. Then we have
 \begin{equation}
     c_n(ST) = \|S\|c_n(T).
 \end{equation}
\end{lemma}
\begin{proof}
 Let $U_n\subset Y$ be a subspace of codimension $n$. Then $V_n := S^{-1}(U_n)\subset X$ is a subspace of codimension at most $n$. Then $\|ST|_{V_n}\| \leq \|S\|\|T_{U_n}\|$ and the result follows. 
\end{proof}

\begin{theorem}\label{discrete-gelfand-bound}[Theorem 2.2 in \cite{carl1988gelfand}]
 Let $T:\ell_1^n\rightarrow H$ be a bounded linear operator where $H$ is a Hilbert space. Then we have
 \begin{equation}
  c_m(T) \lesssim \begin{cases} 
         \|T\| & m=0 \\
          \sqrt{1+\log{\frac{n}{m}}}m^{-\frac{1}{2}}\|T\| & 1\leq m < n \\
          0 & m\geq n.
       \end{cases}
 \end{equation}
 Note the implied constant here is absolute.
\end{theorem}

Given the bound in Theorem \ref{gelfand-width-theorem}, a natural question is how the Gelfand numbers of the dictionary $\mathbb{D}$ are related to the Gelfand widths $d^n(B_1(\mathbb{D})$ of the convex hull of $\mathbb{D}$, which measure how efficiently functions from $B_1(\mathbb{D})$ can be recovered from linear measurements. By definition
\begin{equation}
    B_1(\mathbb{D}) = \overline{\mathcal{T}_\mathbb{D}(B_{\ell^1(\mathbb{D})})},
\end{equation}
where $B_{\ell^1(\mathbb{D})}$ denote the unit ball in $\ell^1(\mathbb{D})$. Thus this question is a special case of the general question of how the Gelfand numbers $c_n(T)$ are related to the Gelfand widths $d^n(T(B_X))$ for a general operator $T:X\rightarrow Y$ between two Banach spaces $X$ and $Y$, where $B_X$ is the unit ball of $X$.

Expanding out the definitions, we have
\begin{equation}\label{gelfand-number-def}
    c_n(T) = \inf_{\xi_1,...,\xi_n\in X^*} \sup\{\|T(x)\|_Y:~x\in B_X,~\xi_i(x) = 0,~i=1,...,n\},
\end{equation}
and on the other hand
\begin{equation}
    d^n(T(B_X)) = \inf_{\xi_1,...,\xi_n\in Y^*} \sup\{\|T(x)\|_y:~x\in B_X,~\xi_i(T(x)) = 0,~i=1,...,n\}.
\end{equation}
Since $\xi_i(T(x)) = (T^*\xi_i)(x)$ and $T^*\xi_i\in X^*$, we see that the infimum in \eqref{gelfand-number-def} is over a larger set, so that $c_n(T) \leq d^n(B_X)$. However, if $T$ is not injective, the map $T^*$ will not be surjective and the infemum in \eqref{gelfand-number-def} is over a strictly larger set. In such a situation it is possible that strict inequality holds \cite{edmunds2013gelfand,pietsch2007history}, i.e. that $c_n(T) < d^n(B_X)$.

Equality of the Gelfand numbers and widths has been established under certain conditions on the operator $T$, see \cite{edmunds2013gelfand} for instance. However, these conditions all suppose the injectivity of the operator $T$. In fact, when $T$ is not injective the typical situation is that $c_n(T) < d^n(B_X)$. To illustrate this, we give the following example of a small finite dictionary $\mathbb{D}$ for which $c_n(\mathcal{T}_\mathbb{D}) < d^n(B_1(\mathbb{D}))$. This shows that in general there isn't much hope to bound the Gelfand widths $d^n(\mathbb{D})$ using Theorem \ref{gelfand-width-theorem} and leaves open the problem of developing techniques for bounding $d^n(B_1(\mathbb{D}))$ in the case where the evaluation map $\mathcal{T}_\mathbb{D}$ is not injective.
\begin{proposition}
Consider the following dictionary $\mathbb{D}\subset \mathbb{R}^3$
\begin{equation}
    \mathbb{D} = \left\{\begin{pmatrix}
           1 \\
           0 \\
           0
         \end{pmatrix},\begin{pmatrix}
           1/2 \\
           \sqrt{3}/2 \\
           0
         \end{pmatrix},\begin{pmatrix}
           -1/2 \\
           \sqrt{3}/2 \\
           0
         \end{pmatrix},
         \begin{pmatrix}
           0 \\
           0 \\
           \sqrt{3}
         \end{pmatrix}\right\}.
\end{equation}
Then $c_1(\mathcal{T}_\mathbb{D}) < d^1(B_1(\mathbb{D}))$.
\end{proposition}
\begin{proof}
 Note that for this dictionary the map $\mathcal{T}_\mathbb{D}:\ell_1^4\rightarrow \ell_2^3$ is given by the following matrix
 \begin{equation}
     \begin{pmatrix}
      1 & 1/2 & -1/2 & 0\\
      0 & \sqrt{3}/2 & \sqrt{3}/2 & 0\\
      0 & 0 & 0 & \sqrt{3}
     \end{pmatrix}.
 \end{equation}
 Consider the subspace $U_1\subset \ell_1^4$ defined by $\xi\cdot x = 0$ where 
 \begin{equation}
     \xi = \begin{pmatrix}
      1 \\
      -1 \\
      1 \\
      3
     \end{pmatrix}.
 \end{equation}
 In order to calculate $\|\mathcal{T}_\mathbb{D}|_{U_1}\|$, we determine the extreme points of the intersection 
 \begin{equation}
     U_1\cap B_{\ell_1^4} = \{x\in \mathbb{R}^4,~\xi\cdot x = 0,~|\xi|_1 \leq 1\}.
 \end{equation}
 The unit ball of $\ell_1^4$ is the convex hull of $\{\pm e_1,...,\pm e_4\}$ and the extreme points of $U_1\cap B_{\ell_1^4}$ must be a linear combinations of at most two of these vectors. Using the form of $\xi$, these extreme points are
 \begin{equation}
     E := \left\{\frac{1}{2}(\pm e_1 \pm e_2), \frac{1}{2}(\pm e_1\mp e_3), \pm \frac{3}{4}e_1\mp \frac{1}{4}e_4, \frac{1}{2}(\pm e_2\pm e_3), \pm \frac{3}{4}e_2 \pm \frac{1}{4}e_4, \pm \frac{3}{4}e_3\mp \frac{1}{4}e_4\right\}.
 \end{equation}
 The norm $\|\mathcal{T}_\mathbb{D}|_{U_1}\|$ is equal to the maximum value of $\|\mathcal{T}_\mathbb{D}(x)\|_2$ for $e\in E$ and a straightforward calculation yields
 \begin{equation}
     \|\mathcal{T}_\mathbb{D}|_{U_1}\| = \frac{\sqrt{3}}{2}.
 \end{equation}
 Thus we get $c_1(\mathcal{T}_\mathbb{D}) \leq \sqrt{3}/2$.
 
 Next, we will show that $d^1(B_1(\mathbb{D})) > \sqrt{3}/2$. Note that the shape of $B_1(\mathbb{D})$ is a hexagonal bipyramid. Suppose that there exists a plane $U\subset \mathbb{R}^3$ such that $U\cap B_1(\mathbb{D})$ is contained in a ball of radius $\sqrt{3}/2$. Consider $U \cap \text{span}(e_1,e_2)$. This intersection cannot contain points of the hexagonal base of $B_1(\mathbb{D})$ which are longer than $\sqrt{3}/2$. Thus $U \cap \text{span}(e_1,e_2)$ must be a line connecting the midpoints of two opposite side of this hexagon. So we can assume without loss of generality that
 \begin{equation}
     U \cap \text{span}(e_1,e_2) = \text{span}\left\{\begin{pmatrix}
      \sqrt{3}/2\\
      -1/2\\
      0
     \end{pmatrix}\right\}.
 \end{equation}
 This in turn implies that $U$ must intersect the line segments connecting
 \begin{equation}\label{line-segments}
     \begin{pmatrix}
      1\\
      0\\
      0
     \end{pmatrix},~\begin{pmatrix}
      1/2\\
      \sqrt{3}/2\\
      0
     \end{pmatrix},~\begin{pmatrix}
      -1/2\\
      \sqrt{3}/2\\
      0
     \end{pmatrix}~\text{to either}~\begin{pmatrix}
      0\\0\\\sqrt{3}
     \end{pmatrix}~\text{or}~\begin{pmatrix}
      0\\0\\-\sqrt{3}
     \end{pmatrix}.
 \end{equation}
 By reflecting, we may assume without loss of generality that the former occurs. However, each of these line segments contains a unique point with length at most $\sqrt{3}/2$, which are given by
 \begin{equation}\label{points-equation}
     \begin{pmatrix}
      3/4\\
      0\\
      \sqrt{4}/4
     \end{pmatrix},\begin{pmatrix}
      3/8\\
      3\sqrt{3}/8\\
      \sqrt{4}/4
     \end{pmatrix},\begin{pmatrix}
      -3/8\\
      3\sqrt{3}/8\\
      \sqrt{4}/4
     \end{pmatrix},
 \end{equation}
 respectively. Since $B_1(\mathbb{D})$ contains each of the line segments in \eqref{line-segments} and $U\cap B_1(\mathbb{D})$ is contained in a ball of radius $\sqrt{3}/2$, this implies that $U$ must contain each of the points in \eqref{points-equation}. Finally, we note that the points in \eqref{points-equation} are linearly independent and thus cannot all be contained in the two dimensional subspace $U$. This contradiction shows that $d^1(B_1(\mathbb{D})) > \sqrt{3}/2$ and completes the proof.
\end{proof}

\section{Lower Bounds for Dictionaries of Ridge Functions}\label{lower-bounds-section}
In this section, we consider lower bounds on the metric entropy, Kolmogorov, and Bernstein $n$-widths of convex subsets $A$ of $L^2(\Omega)$. We show that if $A$ contains a certain class of ridge functions, then these quantities must be bounded below. We will apply this result to lower bound the entropy and $n$-widths of variation spaces corresponding to shallow neural networks.

Our method works by constructing a large collection of nearly orthogonal vectors in $A$ and then obtaining lower bounds by noting that $A$ must contain the convex hull of these vectors. We begin with some Lemmas lower bounding the entropy, Kolmogorov, and Bernstein $n$-widths of such a convex hull. This idea has been used to lower bound the entropy in \cite{makovoz1996random,klusowski2018approximation}, yet these authors did not find as large a collection of nearly orthogonal vectors and obtained suboptimal bounds as a result.

\begin{lemma}\label{lower-eigenvalue-lemma}
 Let $H$ be a Hilbert space and $A\subset H$ a convex and symmetric set. Suppose that $g_1,...,g_n\subset A$. Then
 \begin{equation}\label{lemma-lower-bound}
  \epsilon_{n}(A)\geq \frac{1}{2}\sqrt{\frac{\lambda_{min}}{n}},~b_n(A)\geq \sqrt{\frac{\lambda_{min}}{n}}
 \end{equation}
 where $\lambda_{min}$ is the smallest eigenvalue of the Gram matrix $G$ defined by $G_{ij} = \langle g_i,g_j\rangle_H$.
\end{lemma}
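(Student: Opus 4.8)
The plan is to reduce to a one-line volume bound by observing that the hypotheses force $A$ to contain a small Euclidean ball inside the $n$-dimensional subspace spanned by $g_1,\dots,g_n$; here it is essential that the number of vectors equals the index $n$ of the entropy number. If $\lambda_{min}=0$ the claimed inequality is trivial, so assume $\lambda_{min}>0$. Then $G$ is positive definite, so $g_1,\dots,g_n$ are linearly independent; let $V=\operatorname{span}\{g_1,\dots,g_n\}$, an $n$-dimensional subspace of $H$, and let $B_V=\{v\in V:\|v\|_H\le 1\}$ be its unit ball.

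The geometric heart of the argument is the claim that $\sqrt{\lambda_{min}/n}\,B_V\subseteq A$. To prove it, take $v\in V$ with $\|v\|_H\le \sqrt{\lambda_{min}/n}$ and write $v=\sum_{i=1}^n a_ig_i$ (uniquely, by linear independence). Then $\|v\|_H^2=\sum_{i,j}a_ia_j\langle g_i,g_j\rangle_H = a^{\top}Ga\ge \lambda_{min}|a|^2$, so $|a|\le n^{-1/2}$, and hence $|a|_1\le \sqrt{n}\,|a|\le 1$ by Cauchy--Schwarz. Since $A$ is convex, symmetric, and contains each $g_i$, it contains $\{\sum_i a_ig_i:|a|_1\le 1\}=\operatorname{conv}\{\pm g_1,\dots,\pm g_n\}$, and in particular $v$.

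Granting the claim, monotonicity of entropy numbers together with their scaling behaviour give $\epsilon_n(A)\ge \epsilon_n(\sqrt{\lambda_{min}/n}\,B_V)=\sqrt{\lambda_{min}/n}\;\epsilon_n(B_V)$, so it remains to show $\epsilon_n(B_V)\ge \tfrac12$. This is the classical volume lower bound for entropy numbers of a Euclidean ball: if $B_V$ is covered by $2^n$ balls of radius $r$ in the $n$-dimensional space $V$, then comparing $n$-dimensional volumes yields $\operatorname{vol}(B_V)\le 2^n r^n\,\operatorname{vol}(B_V)$, i.e. $(2r)^n\ge 1$, whence $r\ge \tfrac12$; taking the infimum over all such covers gives $\epsilon_n(B_V)\ge\tfrac12$. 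Combining the two estimates yields $\epsilon_n(A)\ge \tfrac12\sqrt{\lambda_{min}/n}$, as desired.

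I do not expect a real obstacle here: the proof is genuinely short. The one subtlety worth flagging is the bookkeeping of parameters --- the reduction produces an $n$-dimensional ball precisely because there are $n$ vectors $g_i$, and the trivial bound $\epsilon_n(B_V)\ge\tfrac12$ exploits exactly the $2^n$ covering balls allowed at index $n$; if the number of $g_i$ differed from the entropy index this clean matching would break. The other point to get right is the Cauchy--Schwarz step $|a|_1\le\sqrt n\,|a|$, which is what transforms the Gram-matrix lower bound (an $\ell_2$ statement about the coefficient vector) into $\ell_1$-membership in the symmetric convex hull of $\{g_i\}$ sitting inside $A$.
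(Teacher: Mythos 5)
Your proof is correct, but it takes a different route from the paper's. The paper argues by packing: it takes a maximal $\tfrac12$-separated subset of the $\ell^1$-unit ball in $\mathbb{R}^n$ (shown to have at least $2^n$ elements by a volume comparison in $\ell^1_n$), pushes these coefficient vectors forward via $a\mapsto\sum_i a_ig_i$ into $A$, and bounds the pairwise $H$-distances from below using $v^{\top}Gv\ge\lambda_{min}|v|_2^2$ together with $|v|_2\ge |v|_1/\sqrt n$. You instead inscribe the Euclidean ball $\sqrt{\lambda_{min}/n}\,B_V$ of the $n$-dimensional span $V$ inside $A$ and invoke the classical volume lower bound $\epsilon_n(B_V)\ge\tfrac12$. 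The two arguments rest on exactly the same pair of ingredients --- the Gram-matrix eigenvalue bound and the Cauchy--Schwarz comparison $|a|_1\le\sqrt n\,|a|_2$ --- but deploy the volume comparison in different places: the paper in coefficient space (producing a $2^n$-point packing), you in the image space (producing an inscribed ball). Your version has the minor advantage that the constant $\tfrac12$ falls out of the volume comparison directly, whereas the packing route requires some care with the factor of two in the packing-versus-covering step. The one point you gloss over is that the covering balls in the definition of $\epsilon_n$ are centered at arbitrary points of $H$, not of $V$; in a Hilbert space this is harmless, since orthogonal projection onto $V$ is a contraction and so replaces each covering ball by a ball in $V$ of the same radius, after which the $n$-dimensional volume comparison applies. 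Both proofs are elementary and equally general.
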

\begin{proof}
 Consider a maximal set of points $x_1,...,x_N\in b_1^n(0,1):=\{x\in \mathbb{R}^n:~|x|_1\leq 1\}$ in the $\ell^1$-unit ball satisfying $|x_i - x_j| \geq \frac{1}{2}$ for each $i\neq j$. We claim that $N \geq 2^n$. Indeed, if the set $\{x_i\}_{i=1}^N$ is maximal, then the balls 
 $$b^n_1(x_i,1/2) = \left\{x\in \mathbb{R}^n:~|x-x_i|_1\leq \frac{1}{2}\right\}$$
 must cover the ball $b_1^n(0,1)$. This implies that
 \begin{equation}
  \sum_{i=1}^N |b^n_1(x_i,1/2)| \geq |b_1^n(0,1)|.
 \end{equation}
 Since we obviously have $|b^n_1(x_i,1/2)| = (1/2)^n|b_1^n(0,1)|$ for each $i$, it follows that $N \geq 2^n$.
 
Consider the collection of elements $f_1,...,f_N\in H$ defined by
 \begin{equation}
  f_i = \sum_{k=1}^nx^k_ig_k.
 \end{equation}
 Since $A$ is symmetric and convex, we have $f_i\in A$ for each $i=1,...,N$. Moreover, if $i\neq j$, then
 \begin{equation}
  \|f_i-f_j\|^2_H = v^T_{ij}Gv_{ij},
 \end{equation}
 where $v_{ij} = x_i - x_j$. Since $|x_i - x_j|_1 \geq \frac{1}{2}$, it follows from H\"older's inequality that $|v_{ij}|^2_2 \geq \frac{1}{4n}$. From the eigenvalues of $G$ we then see that $\|f_i-f_j\|^2_H \geq \frac{\lambda_{min}}{4n}$ for all $i\neq j$. This gives the entropy lower bound \eqref{lemma-lower-bound}.
 
 To lower bound the Bernstein-widths, we note that if $g_1,...,g_n$ are linearly dependent, then $\lambda_{min} = 0$ and there is noting to prove. On the other hand, consider the linear subspace $V_n$ spanned by the $g_i$. Then $V_n\cap A$ contains the convex hull of $g_1,...g_n$ and so for every $x\in \partial(A\cap V_n)$ we have
 \begin{equation}
  \|x\|^2_H \geq \inf_{\|a\|_1 = 1} \left\|\sum_{i=1}^n a_ig_i\right\|^2_H \geq \lambda_{min}n^{-1},
 \end{equation}
 since $\|a\|_1 = 1$ implies that $\|a\|_2^2 \geq n^{-1}$. This completes the bound on the Bernstein widths.
\end{proof}

This Lemma can be applied to sequences of almost orthogonal vectors to obtain Lemma 3 from \cite{makovoz1996random}, which we state here as a corollary for completeness.
\begin{corollary}\label{entropy-lower-bound-corollary}
 Let $H$ be a Hilbert space and $A\subset H$ a convex and symmetric set. Suppose that $g_1,...,g_n\subset A$ and the the $g_i$ are almost orthogonal in the sense that for all $i = 1,...,n$,
 \begin{equation}\label{diagonal-dominant}
  \sum_{j\neq i}|\langle g_i,g_j\rangle_H| \leq \frac{1}{2}\|g_i\|_H^2.
 \end{equation}
 Then
 \begin{equation}
  \epsilon_{n}(A)\geq \frac{\min_i \|g_i\|_H}{\sqrt{8n}},~b_{n}(A)\geq \frac{\min_i \|g_i\|_H}{\sqrt{2n}}
 \end{equation}
\end{corollary}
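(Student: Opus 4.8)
The plan is to derive this as an immediate consequence of Lemma~\ref{lower-eigenvalue-lemma} together with a standard eigenvalue estimate for diagonally dominant matrices. The key observation is that the almost-orthogonality hypothesis \eqref{diagonal-dominant} says precisely that the Gram matrix $G$ with entries $G_{ij} = \langle g_i, g_j\rangle_H$ is diagonally dominant in the quantitative sense
\[
 \sum_{j\neq i} |G_{ij}| \leq \tfrac{1}{2}\|g_i\|_H^2 = \tfrac{1}{2}G_{ii}
\]
for every $i$.

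First I would apply Gershgorin's circle theorem to $G$: every eigenvalue of $G$ lies in at least one of the discs centered at $G_{ii}$ with radius $\sum_{j\neq i}|G_{ij}|$. Since $G$ is symmetric positive semidefinite, its eigenvalues are real and nonnegative, and for each eigenvalue $\lambda$ — in particular for $\lambda_{min}$ — there is an index $i$ with $|\lambda - G_{ii}| \leq \sum_{j\neq i}|G_{ij}| \leq \tfrac{1}{2}G_{ii}$, hence $\lambda \geq \tfrac{1}{2}G_{ii} \geq \tfrac{1}{2}\min_\ell \|g_\ell\|_H^2$. Taking the minimum over the eigenvalues yields $\lambda_{min} \geq \tfrac{1}{2}\min_\ell \|g_\ell\|_H^2$. (Alternatively, one can bound the quadratic form $v^TGv$ directly, using $|v_i||v_j|\leq \tfrac{1}{2}(v_i^2+v_j^2)$ together with \eqref{diagonal-dominant}, to reach the same conclusion.)

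Finally I would substitute this lower bound on $\lambda_{min}$ into \eqref{lemma-lower-bound}, obtaining
\[
 \epsilon_n(A) \geq \frac{1}{2}\sqrt{\frac{\lambda_{min}}{n}} \geq \frac{1}{2}\sqrt{\frac{\min_\ell \|g_\ell\|_H^2}{2n}} = \frac{\min_\ell \|g_\ell\|_H}{\sqrt{8n}},
\]
which is the claimed estimate. There is essentially no obstacle here: the only ingredient beyond Lemma~\ref{lower-eigenvalue-lemma} is the elementary Gershgorin bound, and the one point to be careful about is that \eqref{diagonal-dominant} is stated relative to each individual $\|g_i\|_H^2 = G_{ii}$, so the resulting eigenvalue bound — and hence the final constant — naturally involves $\min_\ell \|g_\ell\|_H^2$ rather than any uniform norm bound on the $g_i$.
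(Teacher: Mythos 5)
Your proposal is correct and follows exactly the paper's own route: reduce to Lemma~\ref{lower-eigenvalue-lemma} and show $\lambda_{min}(G) \geq \frac{1}{2}\min_i\|g_i\|_H^2$ via Gershgorin's circle theorem applied to the diagonally dominant Gram matrix. The arithmetic yielding the constant $\sqrt{8n}$ checks out, so there is nothing to add.
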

\begin{proof}
 This follows from Lemma \ref{lower-eigenvalue-lemma} if we can show that the Gram matrix $G$ satisfies
 \begin{equation}
  \lambda_{min}(G) \geq \frac{1}{2}\min_i \|g_i\|^2_H.
 \end{equation}
 This follows immediately from the diagonal dominance condition \ref{diagonal-dominant} and the Gerschgorin circle theorem (see the proof in \cite{makovoz1996random} for details).
\end{proof}

In order to lower bound the Kolmogorov $n$-widths, we will need the following Lemma, which generalizes Lemma 6 in \cite{barron1993universal} to almost orthogonal sets, which satisfy a stronger notion of almost orthogonality than that in Corollary \ref{entropy-lower-bound-corollary}.
\begin{lemma}\label{kolmogorov-lower-bound-lemma}
  Let $H$ be a Hilbert space and $A\subset H$ a convex and symmetric set. Suppose that $g_1,...,g_{2n}\subset A$ and the the $g_i$ are almost orthogonal in the sense that for all $i = 1,...,2n$,
 \begin{equation}\label{diagonal-dominant-kolmogorov}
  \sum_{j\neq i}|\langle g_i,g_j\rangle_H| \leq \frac{1}{2}\min_j \|g_j\|_H^2.
 \end{equation}
 Then
 \begin{equation}
  d_{n}(A)\geq \frac{1}{2}\min_j \|g_j\|_H.
 \end{equation}
\end{lemma}
\begin{proof}
 By scaling down the $g_j$ if necessary, we may assume that $\|g_j\|_H = \min_i \|g_i\|_H$ for all $j$. This follows since the rescaled vectors will clearly be in $A$ (due to symmetry) and condition \eqref{diagonal-dominant-kolmogorov} will still be satisfied (since the left hand side can only decrease while the right hand side is unchanged). We can further assume without loss of generality that $\|g_j\|_H = 1$ for all $j=1,...,2n$.
 
 Let $V_n$ be an $n$-dimensional subspace of $H$ with orthonormal basis $e_1,...,e_n$. For each index $i=1,...,n$ we will have
 \begin{equation}
  \sum_{j=1}^{2n}|\langle e_i,g_j\rangle|^2 \leq \lambda_{max}(G),
 \end{equation}
 where $G$ is the Gram matrix of the $g_j$. Using the Gerschgorin circle theorem and condition \eqref{diagonal-dominant-kolmogorov}, we get $\lambda_{max}(G) \leq \frac{3}{2}$. Summing over $i$ and switching the order of summation, we get
 \begin{equation}
  \sum_{j=1}^{2n}\sum_{i=1}^n|\langle e_i,g_j\rangle|^2 = \sum_{i=1}^n\sum_{j=1}^{2n}|\langle e_i,g_j\rangle|^2 \leq \frac{3}{2}n.
 \end{equation}
 From this, we see that there must exist an index $j$, such that
 \begin{equation}
  \sum_{i=1}^n|\langle e_i,g_j\rangle|^2 \leq \frac{3}{4}.
 \end{equation}
 But this means that the projection of $g_j$ onto $V_n$ has norm at most $\frac{3}{4}$, so that $d(g_j,V_n) \geq \frac{1}{2}$. Since this bound holds for some $j$ for any subspace $V_n$ of dimension $n$, we get the desired lower bound.

\end{proof}

Using the relationship between the $\mathcal{K}_1(\mathbb{P}_k^d)$-norm and the spectral Barron norm \eqref{barron-spectral-barron-bound}, we obtain that
\begin{equation}
 \|f_\xi\|_{\mathcal{K}_1(\mathbb{P}_k^d)} \lesssim 1,
\end{equation}
where $f_\xi(x) = (1 + |\xi|)^{-(k+1)}e^{2\pi \iu \xi\cdot x}$. In other words the space $\mathcal{K}_1(\mathbb{P}_k^d)$ contains appropriately rescaled frequencies.

By considering the collection of functions $f_\xi$ for $\xi\in \mathbb{Z}^d$ with $|\xi|_\infty \leq R$, we can make the $f_\xi$ orthogonal on $[0,1]^d$. Applying Lemmas \ref{entropy-lower-bound-corollary} and \ref{diagonal-dominant-kolmogorov} with this set of functions yields the bounds
\begin{equation}
    \epsilon_n(B_1(\mathbb{P}_k^d))_{L^2([0,1]^d)} \gtrsim_{k,d} n^{-\frac{1}{2}-\frac{k+1}{d}},~b_n(B_1(\mathbb{P}_k^d))_{L^2([0,1]^d)} \gtrsim_{k,d}n^{-\frac{1}{2}-\frac{k+1}{d}},~d_n(B_1(\mathbb{P}_k^d))_{L^2([0,1]^d)} \gtrsim_{k,d} n^{-\frac{k+1}{d}}.
\end{equation}
This argument, which is essentially using the fact that the spectral Barron norm bounds the $\mathcal{K}_1(\mathbb{P}_k^d)$-norm, was used in \cite{makovoz1996random,klusowski2018approximation} to obtain a lower bound on the metric entropy $\epsilon_n(B_1(\mathbb{P}_k^d))_{L^2([0,1]^d)}$.

However, it is known that $\mathcal{B}_{k+1}\subsetneq \mathcal{K}_1(\mathbb{P}_k^d)$, in other words that the spectral Barron space is strictly smaller than the variation space $\mathcal{K}_1(\mathbb{P}_k^d)$ \cite{ma2019barron,wojtowytsch2020representation}. Consequently it should be possible to obtain a better lower bound on $\epsilon_n(B_1(\mathbb{P}_k^d))_{L^2}$, $b_n(B_1(\mathbb{P}_k^d))_{L^2}$, and $d_n(B_1(\mathbb{P}_k^d))_{L^2}$, which would precisely quantify the gap between the spectral Barron space and $\mathcal{K}_1(\mathbb{P}_k^d)$.

The first such improved lower bound on $\epsilon_n(B_1(\mathbb{P}_k^d))_{L^2}$ was obtained by Makovoz \cite{makovoz1996random} in the case $k=0$, $d=2$, and it was conjectured that an improved lower bound holds more generally. We settle this conjecture by deriving an improved lower bound for all $k \geq 0$ and $d\geq 2$ and removing a logarithm from Makovoz's original bound. In addition, we also derive an improved lower bound on the Kolmogorov $n$-widths $d_n(B_1(\mathbb{P}_k^d))_{L^2}$ and a bound on the Bernstein widths $b_n(B_1(\mathbb{P}_k^d))_{L^2}$.

\begin{theorem}\label{lower-bound-theorem}
 Let $d \geq 2$, $k\geq 0$, and denote the unit ball in $\mathbb{R}^d$ by
 \begin{equation}
     B_1^d = \{x\in \mathbb{R}^d,~|x|_2\leq 1\}.
 \end{equation}
 Let $A\subset L^2(B_1^d)$ be a convex and symmetric set. Suppose that for every profile $\phi\in C_c^\infty([-2,2])$ such that $\|\phi^{(k+1)}\|_{L^1(\mathbb{R})}\leq 1$, and any direction $\omega\in S^{d-1}$, the ridge function $\phi(\omega\cdot x)\in L^2(B_1^d)$ satisfies
 \begin{equation}
  \phi(\omega\cdot x)\in A.
 \end{equation}
 Then
 \begin{equation}
  \epsilon_n(A)_{L^2(B_1^d)} \gtrsim_{k,d} n^{-\frac{1}{2}-\frac{2k+1}{2d}},~b_n(A)_{L^2(B_1^d)} \gtrsim_{k,d}n^{-\frac{1}{2}-\frac{2k+1}{2d}},~d_n(A)_{L^2(B_1^d)} \gtrsim_{k,d} n^{-\frac{2k+1}{2d}}.
 \end{equation}

\end{theorem}
The argument we give here adapts the argument in the proof of Theorem 4 in \cite{makovoz1996random}. A careful analysis allows us extend the result to higher dimensions and remove a logarithmic factor. The key is to consider profiles $\phi$ whose higher order moments vanish in combination with a weighted $L^2$-norm with a Bochner-Riesz type weight. 

Before we give the proof, we observe that the Peano kernel formula
\begin{equation}
 \phi(x) = \frac{1}{k!}\int_{-2}^2 \phi^{(k+1)}(t)[\max(0,x-t)]^kdt = \frac{1}{k!}\int_{-2}^2 \phi^{(k+1)}(t)\sigma_k(0,x-t)dt,
\end{equation}
which holds for all $\phi\in C_c^\infty([-2,2])$, implies that for a constant $C = C(k,d)$, the unit ball $CB_1(\mathbb{P}^d_k)$ satisfies the conditions of Theorem \ref{lower-bound-theorem}. Combining this with the fact that any bounded domain $\Omega$ is contained in a large enough ball yields the result given in the introduction:
\begin{theorem}\label{relu-k-lower-bound-corollary}
 Let $d \geq 2$ and $\Omega\subset \mathbb{R}^d$ a bounded domain. Then
 \begin{equation}
  \epsilon_n(B_1(\mathbb{P}^d_k))_{L^2(\Omega)} \gtrsim_{k,d} n^{-\frac{1}{2}-\frac{2k+1}{2d}},~b_n(B_1(\mathbb{P}^d_k))_{L^2(\Omega)} \gtrsim_{k,d} n^{-\frac{1}{2}-\frac{2k+1}{2d}},~d_n(B_1(\mathbb{P}^d_k))_{L^2(\Omega)} \gtrsim_{k,d} n^{-\frac{2k+1}{2d}}.
 \end{equation}

\end{theorem}

Note that the lower bound for $k=0$ also applies to the variation spaces for networks with more general sigmoidal activation functions as well. This follows by a standard argument which scales the sigmoidal function to approximate a Heaviside activation function \cite{barron1993universal}. In addition, Theorem \ref{lower-bound-theorem} can be applied to more general activation functions as well, for instance the B-spline activation functions $\sigma_{k,B}$, but we do not give the details here.

\begin{proof}[Proof of Theorem \ref{lower-bound-theorem}]
 We introduce the weight $$d\mu = (1-|x|^2)_+^{\frac{d}{2}}dx$$ of Bochner-Riesz type on $B_1^d$ and consider the space $H = L^2(B_1^d,d\mu)$. Since $1-|x|^2 \leq 1$, it follows that
 $\|f\|_H \leq \|f\|_{L^2(\Omega)}$, and so it suffices to lower bound the entropy and $n$-widths of $A$ with respect to the weighted space $H$.
 
 Choose $0\neq \psi\in C^\infty_c([-1,1])$ such that $2d-1$ of its moments vanish, i.e. such that
 \begin{equation}
  \int_{-1}^1 x^r\psi(x)dx = 0,
 \end{equation}
 for $r=0,...,2d-2$. Such a function $\psi$ can easily be obtained by convolving an arbitrary compactly supported function whose moments vanish (such as a Legendre polynomial) with a $C^\infty$ bump function.

 Our assumptions on the set $A$ imply that by scaling $\psi$ appropriately, we can ensure that for $0 < \delta < 1$
 \begin{equation}
   \delta^{k}\psi(\delta^{-1}\omega\cdot x + b)\in A,
  \end{equation}
 for any $\omega\in S^{d-1}$ and $b\in[-\delta^{-1},\delta^{-1}]$. Note that $\psi$, which will be fixed in what follows, depends upon both $d$ and $k$.

 Let $N \geq 1$ be an integer and fix $n = N^{d-1}$ directions $\omega_1,...,\omega_n\in S^{d-1}$ with $\min(|\omega_i - \omega_j|_2, |\omega_i + \omega_j|_2) \gtrsim_d N^{-1}$. This can certainly be done since projective space $P^{d-1} = S^{d-1}/\{\pm\}$ has dimension $d-1$. In particular, if $\omega_1,...,\omega_n$ is a maximal set satisfying $\min(|\omega_i - \omega_j|_2, |\omega_i + \omega_j|_2) \geq cN^{-1}$, then balls of radius $cN^{-1}$ centered at the $\omega_i$ must cover $P^{d-1}$. So we must have $n = \Omega(N^{d-1})$, and by choosing $c$ appropriately we can arrange $n = N^{d-1}$.
 
 Further, let $a \leq \frac{1}{4}$ be a sufficiently small constant to be specified later and consider for $\delta = aN^{-1}$ the collection of functions
 \begin{equation}
  g_{p,l}(x) = \delta^{k}\psi(\delta^{-1}\omega_p \cdot x + 2l)\in A,
 \end{equation}
 for $p=1,...,n$ and $l = -\frac{N}{2},...,\frac{N}{2}$. 
 
 The intuition here is that $g_{p.l}$ is a ridge function which varies in the direction $\omega_p$ and has the compactly supported profile $\psi$ dialated to have width $\delta$ (and scaled appropriately to remain in $A$). The different values of $l$ give different non-overlapping shifts of these functions.  The proof proceeds by checking that the $g_{p,l}$ can be made `nearly orthogonal' by choosing $a$ sufficiently small.
 
 Indeed, we claim that if $a$ is chosen small enough, then the $g_{p,l}$ satisfy the conditions of Lemma \ref{kolmogorov-lower-bound-lemma}, i.e. for each $(p,l)$
 \begin{equation}
  \sum_{(p',l')\neq (p,l)} |\langle g_{p,l}, g_{p',l'}\rangle_H| \leq \frac{1}{2}\min_{(p',l')}\|g_{p',l'}\|^2_H.
 \end{equation}
 This will of course also imply that the weaker conditions of Corollary \ref{entropy-lower-bound-corollary} will be satisfied.
 
 Before giving the detailed calculation, we describe the key ideas. 
 
 If we consider two different directions $\omega_p$ and $\omega_{p'}$, functions $g_{p,l}$ and $g_{p',l}$ will be constant along the $(d-2)$-dimensional subspace orthogonal to both $\omega_p$ and $\omega_{p'}$. Thus the inner product $\langle g_{p,l}, g_{p',l'}\rangle_H$ corresponds to an integral over a circle in the plane spanned by $\omega_p$ and $\omega_{p'}$. The integrand is given by a product of the profile $\psi$ supported in two intersecting strips multiplied by the integral of the Bochner-Riesz weight $d\mu$ along the $(d-2)$-dimensional subspace orthogonal to $\omega_p$ and $\omega_{p'}$. The weight $d\mu$ has been chosen so that when we integrate out this $(d-2)$-dimensional subspace, we will obtain a polynomial which vanishes to a high degree at the boundary of the circle (and is zero outside). This, combined with the high-order vanishing of the profile $\psi$, results in the functions $g_{p,l}$ and $g_{p',l}$ satisfying the required `near orthogonality' bounds. We give the detailed calculations in the following.
 
 We begin by estimating $\|g_{p,l}\|^2_H$, as follows
 \begin{equation}
  \|g_{p,l}\|^2_H = \delta^{2k}\int_{B_1^d} |\psi(\delta^{-1}\omega_p \cdot x + 2l)|^2(1-|x|^2)^{\frac{d}{2}}dx.
 \end{equation}
 We proceed to complete $\omega_p$ to an orthonormal basis of $\mathbb{R}^d$, $b_1 = \omega_p, b_2,...,b_d$ and denote the coordinates of $x$ with respect to this basis by $y_i = x\cdot b_i$. Rewriting the above integral in this new orthonormal basis, we get
 \begin{equation}
 \begin{split}
  \|g_{p,l}\|^2_H &= \delta^{2k}\int_{B_1^d}|\psi(\delta^{-1}y_1 + 2l)|^2\left(1-\sum_{i=1}^d y_i^2\right)^{\frac{d}{2}}dy_1\cdots dy_d \\
  &= \delta^{2k}\int_{-1}^1|\psi(\delta^{-1}y_1 + 2l)|^2 \rho_d(y_1)dy_1,
  \end{split}
 \end{equation}
 where
 \begin{equation}
 \begin{split}
  \rho_d(y) &= \int_0^{\sqrt{1-y^2}} (1-y^2-r^2)^{\frac{d}{2}}r^{d-2}dr\\ 
  &= (1-y^2)^{d-\frac{1}{2}}\int_0^{1} (1-r^2)^{\frac{d}{2}}r^{d-2}dr = K_d(1-y^2)^{d-\frac{1}{2}},
  \end{split}
 \end{equation}
 for a dimension dependent constant $K_d$.

 Further, we change variables, setting $y = \delta^{-1}y_1 + 2l$ and use the fact that $\psi$ is supported in $[-1,1]$, to get
 \begin{equation}
  \|g_{p,l}\|^2_H = K_d\delta^{2k+1}\int_{-1}^1 |\psi(y)|^2 (1-[\delta(y-2l)]^2)^{d-\frac{1}{2}} dy.
 \end{equation}
  Since $|y| \leq 1$ and $|2l| \leq N$, as long as $\delta(N+1) \leq 1/2$, which is guaranteed by $a \leq \frac{1}{4}$, the coordinate $y_1 = \delta(y-2l)$ will satisfy $|y_1| \leq 1/2$. This means that $$(1-[\delta(y-2l)]^2)^{d-\frac{1}{2}} = (1-y_1^2)^{d-\frac{1}{2}} \geq (3/4)^{d-\frac{1}{2}}$$ uniformly in $p,l,N$ and $\delta$, and thus
 \begin{equation}\label{lower-bound}
  \|g_{p,l}\|^2_H \geq K_d(3/4)^{d-\frac{1}{2}}\delta^{2k+1}\int_{-1}^1 |\psi(y)|^2dy \gtrsim_{k,d} \delta^{2k+1}.
 \end{equation}
 
 Next consider $|\langle g_{p,l}, g_{p',l'}\rangle_H|$ for $(p,l)\neq (p',l')$. 
 
 If $p=p'$, then $\omega_p = \omega_{p'}$, but $l\neq l'$. In this case, we easily see that the supports of $g_{p,l}$ and $g_{p,l'}$ are disjoint and so the inner product $\langle g_{p,l}, g_{p',l'}\rangle_H = 0$. 
 
 On the other hand, if $p\neq p'$ we get
 \begin{equation}
  \langle g_{p,l}, g_{p',l'}\rangle_{H} = \delta^{2k}\int_{B_1^d} \psi(\delta^{-1}\omega_p\cdot x + 2l)\psi(\delta^{-1}\omega_{p'}\cdot x + 2l')(1-|x|^2)^{\frac{d}{2}}dx.
 \end{equation}
 Since $p\neq p'$, the vectors $\omega_p$ and $\omega_{p'}$ are linearly independent and we complete them to a basis $b_1 = \omega_p, b_2 = \omega_{p'}, b_3,...,b_d$, where $b_3,...,b_d$ is an orthonormal basis for the subspace orthogonal to $\omega_p$ and $\omega_{p'}$. 
 
 Letting $b_1',b_2',b_3'=b_3,...,b_d'=b_d$ be a dual basis (i.e. satisfying $b_i'\cdot b_j = \delta_{ij}$) and making the change of variables $x = y_1b_1' + \cdots + y_db_d'$ in the above integral, we get
 \begin{equation}\label{inner-product-equation}
  \langle g_{p,l}, g_{p',l'}\rangle_{H} = \delta^{2k}\det(D_{p,p'})^{-\frac{1}{2}} \int_{-\infty}^\infty \int_{-\infty}^\infty \psi(\delta^{-1}y_1+2l)\psi(\delta^{-1}y_2+2l') \gamma_d(|y_1b_1' + y_2b_2'|) dy_1dy_2,
 \end{equation}
 where $D_{p,p'}$ is the Graham matrix of $\omega_1$ and $\omega_2$ (notice that then $D_{p,p'}^{-1}$ is the Graham matrix of $b_1'$ and $b_2'$) and
\begin{equation}
 \begin{split}
  \gamma_d(y) &= \int_0^{\sqrt{1-y^2}} (1-y^2-r^2)^{\frac{d}{2}}r^{d-3}dr\\ 
  &= (1-y^2)_+^{d-1}\int_0^{1} (1-r^2)^{\frac{d}{2}}r^{d-3}dr = K'_d(1-y^2)_+^{d-1},
  \end{split}
 \end{equation}
 for a second dimension dependent constant $K'_d$. (Note that if $d=2$, then the above calculation is not correct, but we still have $\gamma_d(y) =  (1-y^2)_+^{\frac{d}{2}} = (1-y^2)_+^{d-1}$.) We remark that the choice of Bochner-Riesz weight $d\mu = (1 - |x|^2)_+^{\frac{d}{2}}$ was made precisely so that $\gamma_d$ is a piecewise polynomial with continuous derivatives of order $d-2$, which will be important in what follows.
 
 Next, we fix $y_1$ and analyze, as a function of $z$,
 $$\tau_{p,p'}(y_1,z) = \gamma_d(|y_1b_1' + zb_2'|) = K'_d(1-q_{p,p'}(y_1,z))_+^{d-1},$$
 where $q_{p,p'}$ is the quadratic 
 \begin{equation}\label{definition-of-q}
 q_{p,p'}(y_1,z) = (b_1'\cdot b_1')y_1^2-2(b_1'\cdot b_2')y_1z-(b_2'\cdot b_2')z^2,
 \end{equation}
 We observe that, depending upon the value of $y_1$, $\tau_{p,p'}(y_1,z)$ is either identically $0$ or is a piecewise polynomial function of degree $2d-2$ with exactly two break points at the roots $z_1,z_2$ of $q_{p,p'}(y_1,z) = 1$. Furthermore, utilizing Fa\`a di Bruno's formula \cite{di1857note} and the fact that $q_{p,p'}(y_1,\cdot)$ is quadratic, we see that
 \begin{equation}\label{derivative-of-tau}
  \left.\frac{d^k}{dz^k} \tau_{p,p'}(y_1,z)\right|_{z_i} = \sum_{m_1+2m_2=k} \frac{k!}{m_1!m_2!2^{m_2}}f_d^{(m_1+m_2)}(1)\left[\frac{d}{dz}q_{p,p'}(y_1,z)|_{z_i}\right]^{m_1}\left[\frac{d^2}{dz^2}q_{p,p'}(y_1,z)|_{z_i}\right]^{m_2},
 \end{equation}
 where $f_d(x) = (1-x)^{d-1}$. 
 
 Since $f^{(m)}_d(1) = 0$ for all $m \leq d-2$, we see that
 the derivative in \eqref{derivative-of-tau} is equal to $0$ for $0 \leq k\leq d-2$. Thus the function $\tau_{p,p
 }(y_1,\cdot)$ has continuous derivatives up to order $d-2$ at the breakpoints $z_1$ and $z_2$. Moreover, if we consider the derivative of order $k=d-1$, then only the term with $m_2 = 0$ in \eqref{derivative-of-tau} survives and we get
 \begin{equation}
  \left.\frac{d^{d-1}}{dz^{d-1}} \tau_{p,p'}(y_1,z)\right|_{z_i} = f_d^{(d-1)}(1)\left[\frac{d}{dz}q_{p,p'}(y_1,z)|_{z_i}\right]^{d-1} = (-1)^{d-1}(d-1)!\left[\frac{d}{dz}q_{p,p'}(y_1,z)|_{z_i}\right]^{d-1}.
 \end{equation}
 Utilizing the fact that the derivative of a quadratic $q(x) = ax^2 + bx + c$ at its roots is given by $\pm\sqrt{b^2 - 4ac}$ combined with the formula for $q_{p,p'}$ \eqref{definition-of-q}, we get
 \begin{equation}
  \frac{d}{dz}q_{p,p'}(y_1,z)|_{z_i} = \pm 2\sqrt{(b_1'\cdot b_1')(b_1'\cdot b_2')^2-(b_2'\cdot b_2')(b_1'\cdot b_1')} = \pm 2\det(D_{p,p'})^{-\frac{1}{2}}.
 \end{equation}
 Taken together, this shows that the jump in the $d-1$-st derivative of $\tau_{p,p'}(y_1,z)$ at the breakpoints $z_1$ and $z_2$ has magnitude
 \begin{equation}\label{derivative-bound}
 \left|\left.\frac{d^{d-1}}{dz^{d-1}} \tau_{p,p'}(y_1,z)\right|_{z_i}\right| \lesssim_d \det(D_{p,p'})^{-\frac{d-1}{2}}.
 \end{equation}

 Going back to equation \eqref{inner-product-equation}, we see that due to the compact support of $\psi$, the integral in \eqref{inner-product-equation} is supported on a square with side length $2\delta$ in $y_1$ and $y_2$. To clarify this, we make the change of variables $s = \delta^{-1}y_1+2l$, $t = \delta^{-1}y_2+2l'$, and use that $\psi$ is supported on $[-1,1]$, to get (for notational convenience we let $y(s,l) = \delta (s-2l)$)
 \begin{equation}
  \langle g_{p,l}, g_{p',l'}\rangle_{H} = \delta^{2k+2}\det(D_{p,p'})^{-\frac{1}{2}} \int_{-1}^1 \int_{-1}^1 \psi(s)\psi(t) \tau_{p,p'}(y(s,l), y(t,l'))ds dt.
 \end{equation}
 We now estimate the sum over $l'$ as
  \begin{equation}\label{big-equation}
  \begin{split}
  \sum_{l'=-\frac{N}{2}}^{\frac{N}{2}} |\langle g_{p,l}, g_{p',l'}\rangle_{H}| &= \delta^{2k+2}\det(D_{p,p'})^{-\frac{1}{2}}\sum_{l'=-\frac{N}{2}}^{\frac{N}{2}}\left|\int_{-1}^1\int_{-1}^1 \psi(s)\psi(t)\tau_{p,p'}(y(s,l), y(t,l'))dsdt\right| \\
  &\leq \delta^{2k+2}\det(D_{p,p'})^{-\frac{1}{2}}\sum_{l'=-\frac{N}{2}}^{\frac{N}{2}}\int_{-1}^1\left|\int_{-1}^1 \psi(s)\psi(t)\tau_{p,p'}(y(s,l), y(t,l'))dt\right|ds \\
  & = \delta^{2k+2}\det(D_{p,p'})^{-\frac{1}{2}}\int_{-1}^1|\psi(s)|\sum_{l'=-\frac{N}{2}}^{\frac{N}{2}}\left|\int_{-1}^1 \psi(t)\tau_{p,p'}(y(s,l), y(t,l'))dt\right|ds.
  \end{split}
 \end{equation}
 For fixed $s$ and $l$, consider the inner sum
 \begin{equation}\label{sum-to-bound}
  \sum_{l'=-\frac{N}{2}}^{\frac{N}{2}}\left|\int_{-1}^1 \psi(t)\tau_{p,p'}(y(s,l), y(t,l'))dt\right| = \sum_{l'=-\frac{N}{2}}^{\frac{N}{2}}\left|\int_{-1}^1 \psi(t)\tau_{p,p'}(y(s,l), \delta (t - 2l'))dt\right|.
 \end{equation}
 In the integrals appearing in this sum, the variable $z = \delta (t - 2l')$ runs over the line segment $[\delta(2l'-1),\delta(2l'+1)]$. These segments are disjoint for distinct $l'$ and are each of length $2\delta$. 
 
 Further, recall that for fixed $y_1 = y(s,l)$, the function $\tau_{p,p'}(y_1, z)$ is a piecewise polynomial of degree $2d-2$ with at most two breakpoints $z_1$ and $z_2$. Combined with the fact that $2d-1$ moments of $\psi$ vanish, this implies that at most two terms in the above sum are non-zero, namely those where the corresponding integral contains a breakpoint.
 
 Furthermore, the bound on the jump in the $d-1$-st order derivatives at the breakpoints \eqref{derivative-bound} implies that in the intervals (of length $2\delta$) which contain a breakpoint, there exists a polynomial $q_i$ of degree $d-2$ for which
 \begin{equation}
  |\tau_{p,p'}(y_1,z) - q_i(z)| \leq \frac{(2\delta)^{d-1}}{(d-1)!}M_d \det(D_{p,p'})^{-\frac{d-1}{2}} \lesssim_d \delta^{d-1} \det(D_{p,p'})^{-\frac{d-1}{2}}
 \end{equation}
 on the given interval. Using again the vanishing moments of $\psi$, we see that the nonzero integrals in the sum \eqref{sum-to-bound} (of which there are at most $2$) satisfy
 $$
 \left|\int_{-1}^1 \psi(t)\tau_{p,p'}(y(s,l), \delta (t - 2l'))dt\right| \lesssim_{k,d} \delta^{d-1}\det(D_{p,p'})^{-\frac{d-1}{2}}.
 $$
 So for each fixed $s$ and $l$, we get the bound
 \begin{equation}
  \sum_{l'=-\frac{N}{2}}^{\frac{N}{2}}\left|\int_{-1}^1 \psi(t)\tau_{p,p'}(y(s,l), y(t,l'))dt\right| \lesssim_{k,d} \delta^{d-1} \det(D_{p,p'})^{-\frac{d-1}{2}}.
 \end{equation}
 Plugging this into equation \eqref{big-equation}, we get
 \begin{equation}
  \sum_{l'=-\frac{N}{2}}^{\frac{N}{2}} |\langle g_{p,l}, g_{p',l'}\rangle_{H}| \lesssim_{k,d} \delta^{2k+d+1}\det(D_{p,p'})^{-\frac{d}{2}}\int_{-1}^1|\psi(s)| ds \lesssim_{k,d} \delta^{2k+d+1}\det(D_{p,p'})^{-\frac{d}{2}}.
 \end{equation}
We analyze the $\det(D_{p,p'})^{-\frac{d}{2}}$ term using that $\omega_p$ and $\omega_{p'}$ are on the sphere to get
\begin{equation}
 \det(D_{p,p'})^{-\frac{d}{2}} = (1-\langle \omega_p,\omega_{p'}\rangle^2)^{-\frac{d}{2}} = \frac{1}{\sin(\theta_{p,p'})^d},
\end{equation}
where $\theta_{p,p'}$ represents the angle between $\omega_p$ and $\omega_{p'}$.

Summing over $p'\neq p$, we get
\begin{equation}\label{eq-1357}
 \sum_{(p',l')\neq (p,l)} |\langle g_{p,l}, g_{p',l'}\rangle_H| \lesssim_{k,d} \delta^{2k+d+1}\sum_{p'\neq p}\frac{1}{\sin(\theta_{p,p'})^d}.
\end{equation}
The final step is to bound the above sum. This is done in a relatively straightforward manner by noting that this sum is comparable to the following integral 
\begin{equation}
 \sum_{p'\neq p}\frac{1}{\sin(\theta_{p,p'})^d} \eqsim_d N^{d-1}\int_{P^{d-1}-B(p,r)} |x-p|^{-d}dx,
\end{equation}
where we are integrating over projective space minus a ball of radius $r \gtrsim_d N^{-1}$ around $p$. Integrating around this pole of order $d$ in the $d-1$ dimensional $P^{d-1}$, this gives
\begin{equation}
 \sum_{p'\neq p}\frac{1}{\sin(\theta_{p,p'})^d} \eqsim_d N^d.
\end{equation}
To be more precise, we present the detailed estimates in what follows.

We bound the sum over one hemisphere
\begin{equation}
 \sum_{0 < \theta_{p,p'}\leq \frac{\pi}{2}}\frac{1}{\sin(\theta_{p,p'})^d},
\end{equation}
and note that the sum over the other hemisphere can be handled in an analogous manner. To this end, we decompose this sum as
\begin{equation}\label{eq-1365}
 \sum_{0 < \theta_{p,p'}\leq \frac{\pi}{2}}\frac{1}{\sin(\theta_{p,p'})^d} = \sum_{0 < \theta_{p,p'}\leq \frac{\pi}{4}}\frac{1}{\sin(\theta_{p,p'})^d} + \sum_{\frac{\pi}{4} < \theta_{p,p'}\leq \frac{\pi}{2}}\frac{1}{\sin(\theta_{p,p'})^d}.
\end{equation}
For the second sum, we note that $\sin(\theta_{p,p'}) \geq \frac{1}{\sqrt{2}}$, and the number of terms is at most $n = N^{d-1}$, so that the second sum is $\lesssim N^{d-1}$. 

To bound the first sum in \eqref{eq-1365}, we rotate the sphere so that $\omega_p = (0,...,0,1)$ is the north pole. We then take the $\omega_{p'}$ for which $\theta_{p,p'}\leq \frac{\pi}{4}$ and project them onto the tangent plane at $\omega_p$. Specifically, this corresponds to the map $\omega_{p'} = (x_1,...x_{d-1},x_d)\rightarrow x_{p'} = (x_1,...x_{d-1})$, which removes the last coordinate. 

It is now elementary to check that this maps distorts distances by at most a constant (since the $\omega_{p'}$ are all contained in a spherical cap of radius $\frac{\pi}{4}$), i.e. that for $p'_1\neq p'_2$, we have
\begin{equation}
 |x_{p'_1} - x_{p'_2}| \leq |\omega_{p'_1} - \omega_{p'_2}| \lesssim |x_{p'_1} - x_{p'_2}|,
\end{equation}
and also that $\sin(\theta_{p,p'}) = |x_{p'}|$.

This allows us to write the first sum in \eqref{eq-1365} as
\begin{equation}
 \sum_{0 < \theta_{p,p'}\leq \frac{\pi}{4}}\frac{1}{\sin(\theta_{p,p'})^d} = \sum_{0<|x_{p'}|\leq \frac{1}{\sqrt{2}}}\frac{1}{|x_{p'}|^d},
\end{equation}
where by construction we have $|\omega_{p'_1} - \omega_{p'_2}| \gtrsim_d N^{-1}$ for $p'_1\neq p'_2$, and thus $|x_{p'_1} - x_{p'_2}|\gtrsim_d N^{-1}$ as well. In addition, $|\omega_p - \omega_{p'}| \gtrsim_d N^{-1}$ and thus also $|x_{p'}| \gtrsim_d N^{-1}$.

Now let $r\gtrsim_d N^{-1}$ be such that the balls of radius $r$ around each of the $x_{p'}$, and around $0$, are disjoint. Notice that since $|x|^{-d}$ is a subharmonic function on $\mathbb{R}^{d-1} / \{0\}$, we have
\begin{equation}
 \frac{1}{|x_{p'}|^d} \leq \frac{1}{|B(x_{p'},r)|}\int_{B(x_{p'},r)}|y|^{-d}dy \lesssim_d N^{d-1}\int_{B(x_{p'},r)}|y|^{-d}dy.
\end{equation}
Since all of the balls $B(x_{p'},r)$ are disjoint and are disjoint from $B(0,r)$, we get (note that these integrals are in $\mathbb{R}^{d-1}$)
\begin{equation}
 \sum_{0<|x_{p'}|\leq \frac{1}{\sqrt{2}}}\frac{1}{|x_{p'}|^d} \lesssim_d N^{d-1}\int_{r \leq |y| \leq \frac{\pi}{2} + r} |y|^{-d}dy \leq N^{d-1}\int_{r \leq |y|} |y|^{-d}dy \lesssim_d N^{d-1}r^{-1} \lesssim_d N^d.
\end{equation}
Plugging this into equation \eqref{eq-1365} and bounding the sum over the other hemisphere in a similar manner, we get
\begin{equation}
 \sum_{p'\neq p}\frac{1}{\sin(\theta_{p,p'})^d} \lesssim_d N^d.
\end{equation}
Using equation \eqref{eq-1357}, we finally obtain
\begin{equation}
 \sum_{(p',l')\neq (p,l)} |\langle g_{p,l}, g_{p',l'}\rangle_H| \lesssim_{k,d} \delta^{2k+d+1}N^d.
\end{equation}
Combined with the lower bound \eqref{lower-bound}, which gives $\|g_{p,l}\|_H^2 \gtrsim_{k,d} \delta^{2k+1}$ for all $(p,l)$, we see that by choosing the factor $a$ in $\delta = aN^{-1}$ small enough (independently of $N$, of course), we can guarantee that the conditions of Lemma \ref{kolmogorov-lower-bound-lemma} (and thus also Corollary \ref{entropy-lower-bound-corollary}) are satisfied.

Applying Corollary \ref{entropy-lower-bound-corollary}, we see that
\begin{equation}
 \epsilon_{n}(A) \geq \frac{\min_{(p,l)} \|g_{p,l}\|_H}{\sqrt{8n}} \gtrsim_{k,d} n^{-\frac{1}{2}} \delta^{\frac{2k+1}{2}} \gtrsim_{k,d,a} n^{-\frac{1}{2}}N^{-\frac{2k+1}{2}},
\end{equation}
where $n = N^d$ is the total number of functions $g_{p,l}$. We obtain a completely analogous result for the Bernstein widths as well. This finally gives (since $a$ is fixed depending only upon $k$ and $d$)
\begin{equation}
 \epsilon_{n}(A)\gtrsim_{k,d} n^{-\frac{1}{2}-\frac{2k+1}{2d}},~b_{n}(A)\gtrsim_{k,d} n^{-\frac{1}{2}-\frac{2k+1}{2d}}.
\end{equation}

Applying Lemma \ref{kolmogorov-lower-bound-lemma}, we get
\begin{equation}
 d_n(A) \geq \frac{1}{2}\min_{(p,l)} \|g_{p,l}\|_H \gtrsim_{k,d} \delta^{\frac{2k+1}{2}} \gtrsim_{k,d,a} N^{-\frac{2k+1}{2}}.
\end{equation}
Since $n = N^d$ is the total number of functions $g_{p,l}$ we get as before
\begin{equation}
 d_n(A) \gtrsim_{k,d}n^{-\frac{2k+1}{2d}}.
\end{equation}
The monotonicity of the entropy and $n$-widths extends this bound to all $n$. This completes the proof.
\end{proof}
We remark that in the case of ReLU$^k$ activation functions on the sphere, the high degree of symmetry allows the Kolmogorov $n$-widths to be determined exactly in terms of the spectrum of a kernel operator \cite{long2021linear,bach2017breaking}, which we briefly describe in an abstract form here.

Specifically, the abstract situation here consists of the convex hull of a dictionary $\mathbb{D} = \{g\cdot f_e:g\in G\}\subset H$, where $H$ is a Hilbert space, $G$ is a compact Hausdorff topological group of isometries on the space $H$, and $f_e\in H$ is a fixed element. 
One simple example of this framework is the case where $H = \mathbb{R}^d$, $f_e = e_1$ is the first unit basis vector, and $G = \mathbb{Z}_n$ is the cyclic group on $d$ elements. The action of $G$ on $\mathbb{R}^n$ is given by cyclically shifting the indices. For this example $B_1(\mathbb{D})$ is the unit ball of the $\ell^1$-norm in $\mathbb{R}^n$ and this approach can be used to calculate its Kolmogorov $n$-widths with respect to $\ell^2$ (see \cite{lorentz1996constructive}, chapter 14, for instance).

Another example, which we are primarily interested in here is where $H = L^2(S^{d-1})$, $f_e(x) = \sigma(x_1)\in L^2(S^{d-1})$ for an activation function $\sigma$, and the group $G=O_d$ is the group of orthogonal transformations on $\mathbb{R}^d$. The action of $g\in G$, $(g\cdot f)(x) = f(g^{-1}x)$ is given by rotating the function $f$. In this case the dictionary $\mathbb{D}$ is given by $\{\sigma(\omega\cdot x):~\omega\in S^{d-1}\}\subset L^2(S^{d-1})$. This is the situation which has been studied in \cite{bach2017breaking,long2021linear}.

In this situation we can lower bound the Kolmogorov $n$-widths by averaging over the group $G$. Let $x_1,...,x_n$ be an orthonormal basis of a subspace $X_n$ and let $d\mu$ denote the normalized Haar measure on $G$. Consider the average distance to $X_n$
\begin{equation}
 \mathbb{E}_{g\sim d\mu} d(f_g, X_n)^2 = \mathbb{E}_{g\sim d\mu} \left(\|f_g\|_H^2 - \sum_{i=1}^n \langle f_g, x_i\rangle_H^2\right),
\end{equation}
where to simplify notation we have written $f_g$ for $g\cdot f_e$.
Using the assumption that $G$ consists of isometries, we get
\begin{equation}\label{eq-1341}
 \mathbb{E}_{g\sim d\mu} d(f_g, X_n)^2 = \|f_e\|_H^2 - \sum_{i=1}^n \mathbb{E}_{g\sim d\mu} \langle f_g, x_i\rangle_H^2 = \|f_e\|_H^2 - \sum_{i=1}^n \langle x_i, T_{G} (x_i)\rangle_H,
\end{equation}
where the operator $T_{G}:H\rightarrow H$ is given by the average of rank $1$ operators:
\begin{equation}
 T_{G}(x) = \mathbb{E}_{g\sim d\mu} \langle f_g, x\rangle_Hf_g.
\end{equation}
From this formula, it is clear that $T_G$ is a self-adjoint, compact, $G$-invariant operator on $H$ with trace $\text{Tr}(T_G) = \|f_e\|_H^2$. If we let $\lambda_1 \geq \lambda_2 \geq \cdots$ denote the eigenvalues of the operator $T_G$, we get from \eqref{eq-1341} and the minimax characterization of the eigenvalues that for any $n$-dimensional subspace $X_n\subset H$
\begin{equation}
 \mathbb{E}_{g\sim d\mu} d(f_g, X_n)^2 \geq \|f_e\|_H^2 - \sum_{i=1}^n \lambda_i = \sum_{i=n+1}^\infty \lambda_i,
\end{equation}
with equality if $X_n$ is the space spanned by $\phi_1,...,\phi_n$, the eigenfunctions corresponding to the $n$ largest eigenvectors. Since a maximum bounds an average, this gives the following lower bound on the Kolmogorov $n$-widths
\begin{equation}
 d_n(B_1(\mathbb{D})) \geq \sqrt{\sum_{i=n+1}^\infty \lambda_i}.
\end{equation}
Furthermore, suppose that $\lambda_n > \lambda_{n+1}$ and $X_n$ is taken to be the space spanned by $\phi_1,...,\phi_n$. Since $\lambda_n > \lambda_{n+1}$ and $T_G$ is $G$-invariant, we have that $X_n$ must be a $G$-invariant subspace. This means that $d(f_g,X_n)$ doesn't depend upon $g$ and so the average and maximum coincide. Thus if $\lambda_n > \lambda_{n+1}$, we actually have equality above and so
\begin{equation}
 d_n(B_1(\mathbb{D})) = \sqrt{\sum_{i=n+1}^\infty \lambda_i}.
\end{equation}

In the case of shallow neural networks on the sphere considered in \cite{bach2017breaking,long2021linear}, the operator $T_G$ is given by integration against an appropriate kernel and the eigenvalues $\lambda_i$ can be explicitly calculated in the case where $\sigma=\text{ReLU}^k$ (this is done in \cite{bach2017breaking} and the result used to bound the Kolmogorov widths in \cite{long2021linear}). This method allows an accurate determination of the constants in the $n$-width rates as well.

Finally, we will prove the following general result, from which a bound on the Kolmogorov $n$-widths leads to a lower bound on the metric entropy (see also \cite{siegel2021improved} for a version of this argument, which we call the skewed simplex argument since we find a skewed image of the $\ell^1$-unit ball in our space).
\begin{proposition}
 Let $H$ be a Hilbert space and $A\subset H$ a symmetric, convex set. Then
 \begin{equation}
  \epsilon_n(A)_H \geq Cd_n(A)_Hn^{-\frac{1}{2}},
 \end{equation}
 for an absolute constant $C$.
\end{proposition}
\begin{proof}
 Let $\delta > 0$ and define a collection of elements $g_1,...,g_n\in A$ recursively as follows:
 \begin{equation}
  \|g_i - P_{i-1}g_i\|_H \geq (1-\delta)\sup_{g\in A}\|g - P_{i-1}g\|_H,
 \end{equation}
 where $P_{i-1}$ is the orthogonal projection onto the span of $g_1,...,g_{i-1}$. By definition of the $n$-widths, we have
 \begin{equation}
  \|g_i - P_{i-1}g_i\|_H \geq (1-\delta)d_i(A)_H \geq (1-\delta)d_n(A)_H.
 \end{equation}
 Let $\tilde{g}_1,...,\tilde{g_n}$ be the Gram-Schmidt orthogonalization of $g_1,...,g_n$. Since the change of basis between $g_1,...,g_n$ and $\tilde{g}_1,...,\tilde{g_n}$ is upper triangular with ones on the diagonal, the volume (viewed in the $n$-dimensional Euclidean space spanned by $g_1,...,g_n$) of the convex hull of $g_1,...,g_n$ and $\tilde{g}_1,...,\tilde{g_n}$ is the same. Since $\tilde{g}_1,...,\tilde{g_n}$ are orthogonal with length at least $(1-\delta)d_n(A)_H$, we get (from the volume of the $\ell^1$-unit ball)
 \begin{equation}
  |\text{co}(g_1,...,g_n)| \geq ((1-\delta)d_n(A)_H)^n\frac{2^n}{n!}.
 \end{equation}
 Using the covering definition of the entropy and comparing this with the volume of $2^n$ balls of radius $\epsilon := \epsilon_n(A)_H$, we get
 \begin{equation}
  ((1-\delta)d_n(A)_H)^n\frac{2^n}{n!} \leq |\text{co}(g_1,...,g_n)| \leq (2\epsilon)^n\frac{\pi^{n/2}}{\Gamma\left(\frac{n}{2}+1\right)}.
 \end{equation}
 Utilizing Sterling's formula, we get
 \begin{equation}
  \epsilon \geq C(1-\delta)d_n(A)_Hn^{-\frac{1}{2}},
 \end{equation}
 for an absolute constant $C$.
 Letting $\delta\rightarrow 0$ completes the proof.
\end{proof}
Although the preceding method is simpler and allows a more precise estimate of the constants in the $n$-width and entropy rates for $B_1(\mathbb{D})$, we note that Theorem \ref{lower-bound-theorem} is more general. Specifically, it does not require the high degree of symmetry that the preceding argument does and thus applies to more general domains $\Omega$ and dictionaries $\mathbb{D}$. In addition, Theorem \ref{lower-bound-theorem} finds a collection of nearly orthogonal vectors as opposed to a (potentially highly) skewed image of the simplex within the set $B_1(\mathbb{D})$. This stronger condition enables us to obtain a lower bound on the Bernstein widths as well.

\subsection{Lower bounds on approximation rates for shallow neural networks}\label{consequences-section}
In this section, we use Theorem \ref{lower-bound-theorem} to obtain lower bounds on the approximation rates of shallow neural networks. The key is the following relationship between metric entropy and non-linear approximation rates, which can be viewed as an analogue of Carl's inequality \cite{carl1981entropy}. 
\begin{theorem}\label{dictionary-carls-theorem}
 Let $X$ be a Banach space and $\mathbb{D}\subset X$ a dictionary with $K_\mathbb{D}:=\sup_{h\in \mathbb{D}} \|h\|_X < \infty$. Suppose that for some constants $0 < l < \infty$, $C < \infty$, the dictionary $\mathbb{D}$ can be covered by $C\epsilon^{-l}$ sets of diameter $\epsilon$ for any $\epsilon > 0$. If there exists an $M,K < \infty$ and $\alpha > 0$ such that for all $f\in B_1(\mathbb{D})$
 \begin{equation}\label{approx-bound-estimate}
  \inf_{f_n\in \Sigma^\infty_{n,M}(\mathbb{D})} \|f - f_n\|_X \leq Kn^{-\alpha},
 \end{equation}
 then the entropy numbers of $B_1(\mathbb{D})$ are bounded by
 \begin{equation}
  \epsilon_{n\log{n}}(B_1(\mathbb{D}))_X \lesssim n^{-\alpha},
 \end{equation}
 where the implied constant is independent of $n$.
\end{theorem}
Thus, a given approximation rate from the set $\Sigma_{n,M}^\infty(\mathbb{D})$ implies a corresponding bound on the metric entropy.

Note that we are considering approximation by the set $\Sigma^\infty_{n,M}(\mathbb{D})$, defined in \eqref{dictionary-expansion-definition}, which corresponds to expansions with coefficients bounded in $\ell^\infty$. This is in contrast to previous results \cite{klusowski2018approximation,makovoz1996random} which obtained lower bounds when the coefficients were bounded in $\ell^1$. 

For the dictionaries $\mathbb{P}_k^d$, i.e. for ReLU$^k$ networks, the set $\Sigma^\infty_{n,M}(\mathbb{P}_k^d)$ corresponds to shallow neural networks with $n$ neurons, inner coefficients bounded, and outer coefficients bounded in $\ell^\infty$. For the dictionary $\mathbb{D}_\sigma := \{\sigma(\omega\cdot x + b)~\omega\in \mathbb{R}^d,~b\in \mathbb{R}\}$ where $\sigma$ is a sigmoidal activation function, the set $\Sigma^\infty_{n,M}(\mathbb{D}_\sigma)$ corresponds to shallow neural networks with $n$ neurons and outer coefficients bounded in $\ell^\infty$, with no bound on the inner coefficients. 
\begin{proof}
 In what follows, all implied constants will be independent of $n$.
 
 Let $n \geq 1$ be an integer. We use our assumption on $\mathbb{D}$ and set $\epsilon = n^{-\alpha-1}$. Then that there is a subset $\mathcal{D}_n\subset \mathbb{D}$ such that $|\mathcal{D}_n| \leq Cn^{(\alpha+1) l}$ and
 \begin{equation}
  \sup_{d\in \mathbb{D}} \inf_{s\in \mathcal{D}_n} \|d - s\|_H \leq \epsilon = n^{-\alpha-1}.
 \end{equation}
 
 The next step, in which the argument differs from that in \cite{makovoz1996random,klusowski2018approximation}, is to cover the unit ball in $\ell_\infty^n$ by unit balls in $\ell_1^n$ of radius $\delta$. Indeed, denoting a ball of radius $R$ in a Banach space $Y$ by $B_R(Y) = \{x\in Y:~|x|_Y\leq R\}$, we see that 
 \begin{equation}
 B_1(\ell^n_\infty) \subset B_n(\ell^n_1).
 \end{equation}
 
 Furthermore, we can cover the unit ball in a space $Y$ by $(1+\frac{2}{\delta})^n$ balls in $Y$ of radius $\delta$ (see \cite{pisier1999volume}, page $63$). Applying this to $Y=\ell^n_1$ and scaling the unit ball appropriately, we see that we can cover 
 \begin{equation}
  B_M(\ell^n_\infty) \subset B_{Mn}(\ell^n_1)
 \end{equation}
 by $(1+\frac{2Mn}{\delta})^n$ $\ell^n_1$-balls of radius $\delta$. Now we set $\delta = 2Mn^{-\alpha}$, so the number of balls will be at most 
 $$(1+n^{\alpha+1})^n = n^{(\alpha+1)n}(1 + n^{-(\alpha+1)})^n \lesssim n^{(\alpha+1)n},$$
 where the last inequality is due to $\alpha > 0$. Denote by $\mathcal{L}_n$ the centers of these balls.

 Denote by $\mathcal{S}_n$ the set of all linear combinations of $n$ elements of $\mathcal{D}_{n}$ with coefficients in $\mathcal{L}_{n}$. Then clearly 
 \begin{equation}\label{eq-707}
 |\mathcal{S}_n| \leq |\mathcal{D}_{n}|^n|\mathcal{L}_{n}| \lesssim C^nn^{(\alpha+1) ln}n^{(\alpha+1)n} = C^nn^{(\alpha+1)(l+1)n}.
 \end{equation}
 
 By \eqref{approx-bound-estimate}, we have for every $f\in B_1(\mathbb{D})$ an $f_n\in \Sigma_{n,M}(\mathbb{D})$ such that
 \begin{equation}
  f_n = \sum_{j=1}^n a_jh_j
 \end{equation}
 and $\|f - f_n\|_X \lesssim n^{-\alpha}$, $h_j\in \mathbb{D}$ and $|a_j| \leq M$ for each $j$. 
 
 We now replace the $h_j$ by their closest elements in $\mathcal{D}_{n}$ and the coefficients $a_j$ by their closest point in $\mathcal{L}_{n}$. Since $\|h_j\|_H\leq K_\mathbb{D}$ and $|a_j| \leq M$ for each $j$, this results in a point $\tilde{f}_n\in \mathcal{S}_n$ with 
 $$\|f_n - \tilde{f}_n\|_H \leq Mn\epsilon + K_\mathbb{D}\delta = Mn^{-\alpha} + 2K_\mathbb{D}M n^{-\alpha} \lesssim n^{-\alpha}.$$ Thus $\|f - \tilde{f}_n\|_H\lesssim n^{-\alpha}$ and so
 \begin{equation}
  \epsilon_{\log{|\mathcal{S}_n|}} \lesssim n^{-\alpha}.
 \end{equation}
 By equation \eqref{eq-707}, we see that $\log{|\mathcal{S}_n|} \lesssim n\log{n}$, which completes the proof.
\end{proof}

Using Theorem \ref{dictionary-carls-theorem} and Theorem \ref{lower-bound-theorem} we can immediately conclude the following lower bound on the approximation rates by neural networks with ReLU$^k$ activation function.
\begin{corollary}
 Let $k\geq 0$ and $M < \infty$ be fixed and suppose that $\alpha > \frac{1}{2}+\frac{2k+1}{2d}$. Then
 \begin{equation}\label{approximation-rate-condition}
  \sup_{n \geq 1} n^{\alpha}\left[\sup_{f\in B_1(\mathbb{P}_k^d)}\inf_{f_n\in \Sigma_{n,M}^\infty(\mathbb{P}_k^d)} \|f - f_n\|_{L^2(\Omega)}\right] = \infty.
 \end{equation}
\end{corollary}
This corollary shows that the exponent in the approximation rates for shallow ReLU$^k$ neural networks with respect to the variation norm cannot be improved beyond $-\frac{1}{2} - \frac{2k+1}{2d}$, even if the $\ell^1$ bound on the outer coefficients is relaxed to an $\ell^\infty$ bound.
\begin{proof}
 From the theory developed in Section \ref{main-result-1-section}, it is clear that the dictionaries $\mathbb{P}_k^d$ satisfy the assumptions of Theorem \ref{dictionary-carls-theorem} since they are smoothly parameterized by compact manifolds. If the supremum in \eqref{approximation-rate-condition} were finite, then by Theorem \ref{dictionary-carls-theorem} we would have $\epsilon_{n\log n}(B_1(\mathbb{P}_k^d)) \lesssim n^{-\alpha}$. This contradicts the lower bound from Theorem \ref{lower-bound-theorem} since $\alpha > \frac{1}{2}+\frac{2k+1}{2d}$.
\end{proof}

Next, we extend this result to sigmoidal activation function with bounded variation. For this, we need the following technical lemma.
\begin{lemma}\label{sigmoidal-lemma}
 Let $\Omega \subset \mathbb{R}^d$ be a bounded domain and suppose that $\sigma$ is a sigmoidal function with bounded variation. Then there exist $C,l < \infty$ such that the dictionary
 \begin{equation}
  \mathbb{D}_\sigma := \left\{\sigma(\omega\cdot x + b),~\omega\in \mathbb{R}^d,~b\in \mathbb{R}\right\}
 \end{equation}
 can be covered by $C\epsilon^{-l}$ balls of radius $\epsilon$ in $L^2(\Omega)$. In particular, $\mathbb{D}_\sigma$
 satisfies the assumptions of Theorem \ref{dictionary-carls-theorem}.
\end{lemma}
This result generalizes Lemma 2 in \cite{makovoz1996random} by relaxing the assumption on $\sigma$. Instead of requiring a Lipschitz condition and the assumption that $\sigma$ approaches the Heaviside $\sigma_0$ at a polynomial rate, we only require the activation function $\sigma$ to have bounded variation.
\begin{proof}
 Consider the Jordan decomposition of the function $\sigma = \sigma^+ - \sigma^-$, where $\sigma^+$ and $\sigma^-$ are non-decreasing functions and
 \begin{equation}\label{bv-condition}
  \|\sigma\|_{BV} = \lim_{x\rightarrow\infty} (\sigma^+(x) + \sigma^-(x)) - \lim_{x\rightarrow-\infty} (\sigma^+(x) + \sigma^-(x)) < \infty.
 \end{equation}
 Denote by $a^+ := \lim_{x\rightarrow -\infty} \sigma^+(x)$ and $b^+ := \lim_{x\rightarrow \infty} \sigma^+(x)$ and likewise for $\sigma^-$. By \eqref{bv-condition} $a^+,b^+,a^-$ and $b^-$ are all finite. Further, $[a^+,b^+]$ is the closure of the range of $\sigma^+$ and $[a^-,b^-]$ is the closure of the range of $\sigma^-$.
 
 We proceed to divide the intervals $[a^+,b^+]$ and $[a^-,b^-]$ into intervals of length at most $\frac{\epsilon}{2}$. Denote these intervals by $[x_{i-1},x_{i})$ and $[y_{i-1},y_{i})$ where we have
 \begin{equation}
  a^+ = x_0 < \cdots < x_{n_1} = b^+
 \end{equation}
 and
 \begin{equation}
  a^- = y_0 < \cdots < y_{n_2} = b^-.
 \end{equation}
 
 This partitions the domain $\mathbb{R}$ into two sets of disjoint intervals: $\sigma^{-1}([x_{i-1},x_i))$ for $i=1,...,n_1$ and $\sigma^{-1}([y_{i-1},y_i))$ for $i=1,...,n_2$. Take the common refinement of these intervals, i.e. consider nonempty all intervals of the form $\sigma^{-1}([x_{i-1},x_i)) \cap \sigma^{-1}([y_{j-1},y_j))$ and define a piecewise constant function $\sigma_\epsilon$ by
 \begin{equation}
  \sigma_\epsilon(x) = x_{i-1} + y_{j-1}~\text{if $x\in \sigma^{-1}([x_{i-1},x_i)) \cap \sigma^{-1}([y_{j-1},y_j))$}.
 \end{equation}
 By construction, we have for any $x$ that
 \begin{equation}
  |\sigma_\epsilon(x) - \sigma(x)| \leq |x_{i-1} - \sigma^+(x)| + |y_{j-1} - \sigma^-(x)| \leq \frac{\epsilon}{2} + \frac{\epsilon}{2} = \epsilon,
 \end{equation}
 since $\sigma^+(x)\in [x_{i-1},x_i)$ and $\sigma^-(x)\in [y_{j-1},y_j)$. Thus, we have
 \begin{equation}\label{eq-674}
  \|\sigma_\epsilon(\omega\cdot x + b) - \sigma(\omega \cdot x + b)\|_{L^2(\Omega,dx)} \leq |\Omega|^{\frac{1}{2}}\epsilon
 \end{equation}
 uniformly in $\omega,b\in \mathbb{R}^d\times \mathbb{R}$. 
 
 In addition, it is easy to see that there are points $z_0 < z_1 < \cdots < z_n$ with $n = n_1 + n_2 \lesssim \epsilon^{-1}$ such that $\sigma_\epsilon$ is constant on $(z_i,z_{i+1})$ and on $(-\infty, z_0)$ and $(z_n,\infty)$.
 
 Next, choose an $\epsilon^3$-net for the slightly enlarged domain
 \begin{equation}
     \Omega_\epsilon = \{x:\text{dist}(x,\Omega) \leq \epsilon^3\},
 \end{equation}
 which will contain at most $N \lesssim \epsilon^{-3d}$ points $x_1,...,x_N\in \Omega_\epsilon$. For each $x_i$ and each $z_j$ consider the hyperplane in the parameter space $\mathbb{R}^d\times \mathbb{R}$ given by
 \begin{equation}
  H_{ij} = \{(\omega,b)\in \mathbb{R}^d\times \mathbb{R}:~\omega\cdot x_i + b = z_j\}.
 \end{equation}
 It is well-known that $K$ hyperplanes in $\mathbb{R}^{d+1}$ cut the space $\mathbb{R}^{d+1}$ into at most
 \begin{equation}
  \sum_{i=0}^{d+1}\binom{K}{i} \leq K^{d+1}
 \end{equation}
 regions. Thus the hyperplanes $H_{ij}$ cut the parameter space $\mathbb{R}^d\times \mathbb{R}$ into at most
 \begin{equation}
  M = (nN)^{d+1} \lesssim \epsilon^{-(3d+1)(d+2)}
 \end{equation}
 regions $R_1,...,R_M$.
 
 We claim that for each $i=1,...,M$, the set
 \begin{equation}
  S_i := \left\{\sigma(\omega\cdot x + b):~(\omega,b)\in R_i\right\}
 \end{equation}
 is contained in a ball of radius $r\lesssim \epsilon$ in $L^2(\Omega,dx)$. Setting $l = (3d+1)(d+2)$ and choosing $C$ appropriately large, we obtain the desired result.
 
 Fix $(\omega,b),(\omega',b')\in R_i$. From the triangle inequality and equation \eqref{eq-674} we see that
 \begin{equation}\label{triangle-inequality-702}
 \begin{split}
  \|\sigma(\omega\cdot x + b) - \sigma(\omega' \cdot x + b')\|_{L^2(\Omega,dx)} &\leq \|\sigma(\omega\cdot x + b) - \sigma_\epsilon(\omega \cdot x + b)\|_{L^2(\Omega,dx)} \\
  &+ \|\sigma_\epsilon(\omega\cdot x + b) - \sigma_\epsilon(\omega' \cdot x + b')\|_{L^2(\Omega,dx)} \\
  &+ \|\sigma_\epsilon(\omega'\cdot x + b') - \sigma(\omega' \cdot x + b')\|_{L^2(\Omega,dx)} \\
  & \leq 2|\Omega|^{\frac{1}{2}}\epsilon + \|\sigma_\epsilon(\omega\cdot x + b) - \sigma_\epsilon(\omega' \cdot x + b')\|_{L^2(\Omega,dx)}.
  \end{split}
 \end{equation}
 To conclude the proof, we bound the difference
 \begin{equation}
  \|\sigma_\epsilon(\omega\cdot x + b) - \sigma_\epsilon(\omega' \cdot x + b')\|^2_{L^2(\Omega,dx)} = \int_\Omega (\sigma_\epsilon(\omega\cdot x + b) - \sigma_\epsilon(\omega' \cdot x + b'))^2 dx.
 \end{equation}
 For this, we consider the set
 \begin{equation}
  D = \{x\in \Omega:~  \sigma_\epsilon(\omega\cdot x + b) \neq \sigma_\epsilon(\omega' \cdot x + b')\}.
 \end{equation}
 From the definition of $\sigma_\epsilon$, we see that $x\in D$ only if there exists a $z_j$ such that
 \begin{equation}
  \omega\cdot x + b \leq z_j \leq \omega' \cdot x + b',
 \end{equation}
 or vice versa (i.e. with the order reversed). Thus we have
 \begin{equation}\label{D-union-equation}
  D \subset \bigcup_{j=0}^n D^{+}_j\cup \bigcup_{j=0}^n D^{-}_j,
 \end{equation}
 where
 \begin{equation}
  D^{+}_{j} = \{\omega\cdot x + b \leq z_j\}\cap \{z_j \leq \omega' \cdot x + b'\}
 \end{equation}
 and
 \begin{equation}
  D^{-}_{j} = \{\omega\cdot x + b \geq z_j\}\cap \{z_j \geq \omega' \cdot x + b'\}.
 \end{equation}
 By construction none of the sets $D^{\pm}_{j}$ contain any of the points $x_1,...,x_N$ since $(\omega,b)$ and $(\omega',b')$ are both in the same region $R_i$. Since $x_1,...,x_N$ forms an $\epsilon^3$-net for $\Omega_\epsilon$, this implies that none of the $D^{\pm}_{j}\cap \Omega_\epsilon$ can contain a ball of radius $\epsilon^3$. Consider the sets 
 \begin{equation}
     \Sigma_j := \{x\in \Omega:~\text{dist}(x, \{y:~\omega\cdot y + b = z_j\}) \leq \epsilon^3\}~\text{and}~\Sigma_j' := \{x\in \Omega:~\text{dist}(x, \{y:~\omega'\cdot y + b' = z_j\}) \leq \epsilon^3\},
 \end{equation}
 which are strips of width $\epsilon^3$ around the hyperplanes defined by $\omega\cdot y + b = z_j$ and $\omega'\cdot y + b' = z_j$ intersected with $\Omega$, respectively.
 We claim that
 \begin{equation}\label{D-containment-equation}
     D^{\pm}_{j}\cap \Omega \subset \Sigma_j \cup \Sigma_j',
 \end{equation}
 for each $j$ and choice of sign $\pm$.
 Suppose to the contrary that for some $j$ there exists an $x\in D^{+}_{j}\cap \Omega$ (the case of negative sign is exactly the same) such that
 \begin{equation}
     \text{dist}(x, \{y:~\omega\cdot y + b = z_j\}) > \epsilon^3~\text{and}~\text{dist}(x, \{y:~\omega'\cdot y + b' = z_j\}) > \epsilon^3.
 \end{equation}
 These two conditions imply that the ball of radius $\epsilon^3$ about $x$ is contained in $D^{+}_{j}$. Further, since $x\in \Omega$, this ball is also contained in $\Omega_\epsilon$. But $D^{+}_{j} \cap \Omega_\epsilon$ cannot contain a ball of radius $\epsilon^3$. This contradiction shows that \eqref{D-containment-equation} holds. From this we deduce that
 \begin{equation}
     |D^{+}_{j}\cap \Omega| \leq |\Sigma_j| + |\Sigma_j'| \lesssim \epsilon^3,
 \end{equation}
 since $\Sigma_j$ and $\Sigma'_j$ are strips of width $\epsilon^3$ and $\Omega$ is a bounded domain. Using \eqref{D-union-equation} and a union bound, we obtain
 \begin{equation}
     |D| \lesssim n\epsilon^3\lesssim \epsilon^2.
 \end{equation}
 Finally, the difference $\sigma_\epsilon(\omega\cdot x + b) - \sigma_\epsilon(\omega' \cdot x + b')$ is equal to $0$ outside of $D$ and on $D$ it is bounded by $\sup_x \sigma(x) - \inf_x \sigma(x) \leq \|\sigma\|_{BV} \lesssim 1$. This implies that
 \begin{equation}
  \int_\Omega (\sigma_\epsilon(\omega\cdot x + b) - \sigma_\epsilon(\omega' \cdot x + b'))^2 dx \lesssim \epsilon^2,
 \end{equation}
 and finally that
 \begin{equation}
  \|\sigma_\epsilon(\omega\cdot x + b) - \sigma_\epsilon(\omega' \cdot x + b')\|_{L^2(\Omega,dx)} \lesssim \epsilon.
 \end{equation}
 Using \eqref{triangle-inequality-702} and that $(\omega,b),(\omega',b')\in S_i$ were arbitrary, we see that the diameter of the sets $S_i$ is $\lesssim \epsilon$, which completes the proof.
\end{proof}

Using Lemma \ref{sigmoidal-lemma}, we show that the lower bound on the approximation rates holds even for a sigmoidal activation function with bounded variation.
\begin{corollary}
 Let $\Omega\subset \mathbb{R}^d$ be a bounded domain and $\sigma$ be a sigmoidal activation function with bounded variation. Consider the dictionary $\mathbb{D}_\sigma^d\subset L^2(B_1^d)$ defined in \eqref{sigma-dictionary-definition}. Then for any $M < \infty$ and $\alpha > \frac{1}{2}+\frac{1}{2d}$ we have
\begin{equation}\label{approximation-rate-condition-sigmoidal}
  \sup_{n \geq 1} n^{\alpha}\left[\sup_{f\in B_1(\mathbb{D}_\sigma)}\inf_{f_n\in \Sigma_{n,M}^\infty(\mathbb{D}_\sigma)} \|f - f_n\|_{L^2(B_1^d)}\right] = \infty.
 \end{equation}
\end{corollary}
This shows that the exponent in the approximation rate derived by Makovoz \cite{makovoz1996random} is optimal, even if the outer coefficients of the network are only bounded in $\ell^\infty$ and the activation function is a general sigmoidal function with bounded variation.
\begin{proof}
 We observe that since $\sigma$ is a sigmoidal activation function and $\Omega$ is a bounded domain, we have 
 \begin{equation}
     \lim_{a\rightarrow \infty} \|\sigma(a(\omega\cdot x + b)) - \sigma_0(\omega\cdot x +b)\|_{L^2(\Omega)} = 0,
 \end{equation}
 where we recall that $\sigma_0$ is the Heaviside activation function. Since
 \begin{equation}
     \sigma(a(\omega\cdot x + b)) \in \mathbb{D}_\sigma
 \end{equation}
 for every $a \in \mathbb{R}$, this implies that $B_1(\mathbb{D}_\sigma) \supset B_1(\mathbb{P}_0^d)$.
 By Lemma \ref{sigmoidal-lemma} and Theorem \ref{dictionary-carls-theorem}, if the supremum in \eqref{approximation-rate-condition-sigmoidal} were finite, then the metric entropy would satisfy $$\epsilon_{n\log n}(B_1(\mathbb{P}_k^d)) \leq \epsilon_{n\log n}(B_1(\mathbb{D}_\sigma)) \lesssim n^{-\alpha}.$$
 This contradicts the lower bound from Theorem \ref{lower-bound-theorem} since $\alpha > \frac{1}{2}+\frac{1}{2d}$.
\end{proof}

\section{Conclusion}
We have introduced the notion of a smoothly parameterized dictionary and have bounded both approximation rates and fundamental quantities such as the metric entropy and $n$-widths for convex hulls of such dictionaries. Further, we have developed a method for lower bounding $n$-widths and metric entropy of convex hulls of certain classes of ridge functions. Applying these results to shallow neural networks, we obtain sharp approximation rates for neural networks with ReLU$^k$ activation functions, improving upon several results in the literature. In addition, this allows us to compare ReLU$^k$ networks with other methods and to show that they are optimal on their corresponding variation space. 

There are a few further questions we would like to propose. First, it is unclear how to compute entropy or $n$-width bounds on $B_1(\mathbb{D})$, and specifically $B_1(\mathbb{P}_k^d)$, in $L^p$ for $p\neq 2$. For this problem partial results appear in \cite{klusowski2018approximation,makovoz1998uniform,bach2017breaking}, but a complete solution seems to require significant new ideas. Second, we have been primarily interested in the rates for fixed dimension in this work and have not taken care to precisely determine the implied constants. As such, our work it mainly interesting for problems in fixed moderate dimension. Obtaining tighter bounds on the constants will be important in quantifying the curse of dimensionality. Finally, we would like to extend this theory to approximation by deeper neural networks. 
\section{Acknowledgements}
We would like to thank Professors Russel Caflisch, Ronald DeVore, Weinan E, Albert Cohen, Stephan Wojtowytsch, Jason Klusowski and Lei Wu for helpful discussions. This work was supported by the Verne M. Willaman Chair Fund at the Pennsylvania State University, and the National Science Foundation (Grant No. DMS-1819157 and DMS-2111387).

\bibliographystyle{amsplain}
\bibliography{refs}

\end{document}